\def\maxwidth{ %
  \ifdim\Gin@nat@width>\linewidth
    \linewidth
  \else
    \Gin@nat@width
  \fi
}
\theoremstyle{definition}
\newtheorem{theorem}{Theorem}
\newtheorem{lemma}{Lemma}
\newtheorem{corollary}{Corollary}
\newtheorem{proposition}{Proposition}
\newcommand\indep{\protect\mathpalette{\protect\independenT}{\perp}}
\def\independenT#1#2{\mathrel{\rlap{$#1#2$}\mkern2mu{#1#2}}}
\newcommand{\R}{\ensuremath{\mathbb{R}}}
\newcommand{\B}{\ensuremath{\mathbb{B}}}
\newcommand{\Z}{\ensuremath{\mathbb{Z}}}
\newcommand{\bbone}{\ensuremath{\mathbbm{1}}}
\newcommand{\E}{\ensuremath{\mathbb{E}}}
\newcommand{\logit}{\text{logit}}
\newcommand{\argmax}{\text{argmax}}
\newcommand{\bX}{\bm X}
\newcommand{\bx}{\bm x}
\def\super{\textsuperscript}
\newcommand\mcal[1]{\ensuremath{\mathcal{#1}}}
\def\b1{\boldsymbol{1}}
\def\spacingset#1{\renewcommand{\baselinestretch}%
{#1}\small\normalsize} \spacingset{1}
\definecolor{RED}{RGB}{255,0,0}
\begin{document}

\newcommand{\blind}{0}

\newcommand{\tit}{\LARGE \bf Safe Policy Learning through Extrapolation: Application to
Pre-trial Risk Assessment}

\if0\blind

\title{\tit\thanks{We acknowledge the partial support from Cisco
    Systems, Inc. (CG\# 2370386), National Science Foundation
    (SES--2051196), Sloan Foundation (Economics Program; 2020--13946),
    National Natural Science Foundation of China (Grant No. 12371285, 12292984),
    and Arnold Ventures.  We thank Benedikt Koch and anonymous
    reviewers of the IQSS's Alexander and Diviya Magaro Peer
    Pre-Review Program for useful feedback.}}

\date{\today}

\author{Eli Ben-Michael\thanks{Assistant Professor, Department of
    Statistics \& Data Science and Heinz College of Information
    Systems \& Public Policy, Carnegie Mellon University. 4800 Forbes
    Avenue, Hamburg Hall, Pittsburgh PA 15213.  Email:
    \href{mailto:ebenmichael@cmu.edu}{ebenmichael@cmu.edu} URL:
    \href{https://ebenmichael.github.io}{ebenmichael.github.io}} \and
  \and D. James Greiner\thanks{Honorable S.  William Green Professor
    of Public Law, Harvard Law School, 1525 Massachusetts Avenue,
    Griswold 504, Cambridge, MA 02138.}  \and Kosuke
  Imai\thanks{Professor, Department of Government and Department of
    Statistics, Harvard University.  1737 Cambridge Street, Institute
    for Quantitative Social Science, Cambridge MA 02138.  Email:
    \href{mailto:imai@harvard.edu}{imai@harvard.edu} URL:
    \href{https://imai.fas.harvard.edu}{https://imai.fas.harvard.edu}}
  \and Zhichao Jiang\thanks{Professor, School of Mathematics, Sun
    Yat-sen University, Guangzhou Guangdong 510275, China. Email:
    \href{mailto:jiangzhch7@mail.sysu.edu.cn}{jiangzhch7@mail.sysu.edu.cn}
  }}
\maketitle
  
\fi

\if1\blind
\title{\bf \tit}
\maketitle
\fi

\renewcommand\thmcontinues[1]{Continued}

\thispagestyle{empty}
\pagenumbering{gobble}

\begin{abstract}

  Algorithmic recommendations and decisions have become ubiquitous in
  today's society.  Many of these data-driven policies, especially in
  the realm of public policy, are based on known, deterministic rules
  to ensure their transparency and interpretability.  We examine a
  particular case of algorithmic pre-trial risk assessments in the US
  criminal justice system, which provide 
  deterministic classification scores and recommendations to help
  judges make release decisions.  Our goal is to analyze data from a
  unique field experiment on an algorithmic pre-trial risk assessment
  to investigate whether the scores and recommendations can be improved.
  Unfortunately, prior methods for policy
  learning are not applicable because they require existing policies
  to be stochastic.  We develop a maximin robust
  optimization approach that partially identifies the expected utility
  of a policy, and then finds a policy that maximizes the worst-case
  expected utility.  The resulting policy has a
  statistical safety property, limiting the probability of producing
  a worse policy than the existing one, under structural assumptions
  about the outcomes.
  Our analysis of data from the field experiment shows that we can
  safely improve certain components of the risk
  assessment instrument by classifying arrestees as lower risk under a
  wide range of utility specifications, though the analysis is not 
  informative about several components of the instrument.

\end{abstract}

\pagenumbering{arabic}
\spacingset{1.75} 
\section{Introduction}

Algorithmic recommendations and decisions are ubiquitous in our daily
lives.  Many algorithmic policies are used for consequential
decisions in high stakes settings such as criminal justice, social
policy, and medical care.  One common feature of such policies is that
they are based on known, deterministic rules.  This is often because
transparency and interpretability are required to ensure
accountability especially when used for public policy-making.

In this paper, we focus on a particular case: pre-trial risk
assessment instruments (PRAI) in the American criminal justice system.
The goal of a PRAI is to aid judges in deciding which arrestees should
be released pending the disposition of any criminal charges.  We
consider a particular PRAI used in Dane County, Wisconsin, which includes the state capital, Madison
(Section~\ref{sec:PSA}).  This PRAI assigns scores to arrestees
according to the risk that they are predicted to engage in undesirable
behavior.  It then aggregates these scores using a deterministic
function and provides an overall release recommendation to the judge.

We analyze data from a unique field experiment on the PRAI
\citep{greiner2020_rct,Imai2020}.  Our goal is to learn new
algorithmic scoring and recommendation rules that can lead to better
overall outcomes while retaining the transparency of the existing
instrument.  Importantly, we focus on changing the algorithmic
policies, which we can intervene on, rather than judge's decisions,
which we cannot.

The large amounts of data collected after implementing {\it
  deterministic} policies such as PRAIs provide an opportunity to
learn new policies that improve on the status quo.  Unfortunately,
prior approaches to policy learning are not applicable because they
require existing policies to be {\it stochastic}, typically relying on
inverse probability weighting
(Section~\ref{sec:methods}).

To address this challenge (Section~\ref{sec:safe_policy}), we
partially identify the expected utility of a policy by calculating all
potential values consistent with the observed data.  This makes
choosing an ``optimal'' policy ambiguous: a policy can perform well
under some outcome models that are consistent with the data and poorly
in others.  We use the maximin criterion that finds a policy that
maximizes the worst-case performance relative to the status quo.  The
resulting policy has a statistical {\it safety} property that limits
the probability of yielding a worse outcome than the status quo
policy, under the structural assumptions made about the outcomes.
However, this safety property comes at the cost of
potentially choosing a sub-optimal policy, though it is no worse than
the status quo.  We formally characterize the gap between this safe
policy and the infeasible oracle policy.

We use this approach to explore whether the data from our field
experiment support alterations to the existing PRAI
(Section~\ref{sec:application}).  We explore the three risk
measures based on the predicted likelihood that an arrestee, upon
release, will (i) fail to appear in court (FTA), (ii) engage in new
criminal activity (NCA), or (iii) engage in new violent criminal
activity (NVCA).  We also inspect the algorithm that recommends to the
judge the level of cash bail and pre-trial supervision and monitoring
conditions to impose.

We find that under several specifications of the utility function, it
can be possible to improve safely upon the existing NVCA scoring rule by
classifying arrestees as lower risk; if the policy maker is
primarily focused on avoiding NVCAs, the resulting safe policy falls
back on the existing scoring rule. 
However, our approach has limitations. Conducting our analysis
requires several non-trivial choices that may be challenging in
practice.  In addition, our analysis does not provide meaningful
insights about components of the instrument other than the NVCA
scoring rule. This arises from identifiability issues caused by
the structure of the underlying rules, as well as a high degree of
statistical uncertainty due to small sample sizes for rare
combinations of risk factors.  We discuss these and other
limitations in Section~\ref{sec:discussion}.

\section{Pre-trial Risk Assessment}
\label{sec:PSA}

We now briefly describe the particular PRAI, called the Public Safety
Assessment (PSA), used in Dane County, Wisconsin.  The PSA is an
algorithmic recommendation designed to help judges make their
pre-trial release decisions.  We will also describe an original
randomized experiment we conducted to evaluate the impact of the PSA
on judges' decisions.  In Section~\ref{sec:application}, we analyze
this experimental dataset and consider how to improve outcomes by
modifying certain aspects of the PSA system.  Interested readers
should consult \citet{greiner2020_rct} and \citet{Imai2020} for
further details of the PSA and experiment; the study dataset has been
made publicly available.

\subsection{The PSA-DMF system}
\label{sec:psa_explain}

The goal of the PSA is to help judges decide, at first appearance
hearings, whether to allow an arrestee's release without bail or
release them only if the arrestee posts bail/bond (or meets other
conditions).  Because arrestees are presumed to be innocent, judges
must avoid unnecessary incarceration.  The PSA has several outputs.
First, it returns three classification scores based on the predicted
risk that each arrestee will engage in an FTA, NCA, or NVCA.  Law
requires judges to balance between these risks and the cost of
incarceration.  These three PSA scores are then combined via the
so-called ``Decision Making Framework'' (DMF) into two overall
recommendations: (i) whether to require a signature bond
(i.e., release on their own recognizance) or some level of cash bail
for release, and (ii) what, if any, monitoring conditions to place on
release.
Given the complexity of the system, 
  our empirical analysis will focus on the question of how to improve each component
separately (see Section~\ref{sec:application}).

\paragraph{FTA, NCA, and NVCA risk scores.}
These scores are deterministic functions of eight risk factors.  The
only demographic factor is the arrestee's age, and neither gender nor
race is used.  The other risk factors include the current offense and
pending charges as well as measures of criminal history based on prior
convictions and prior FTAs.  These scores are constructed by assigning
an integer-valued weight to each present risk factor, adding them
together, and thresholding this value into a number of bins.  For the
sake of transparency, the foundation that funded the PSA’s creation
made these weights and thresholds publicly (see
\href{https://advancingpretrial.org/psa/factors/}{https://advancingpretrial.org/psa/factors};
Appendix Table~\ref{tab:nvca_weights}
 summarizes the weights.

The FTA score has six levels and is based on four
risk factors.  The values range from 0 to 7, and the final score is
thresholded into values between 1 (lowest risk) and 6 (highest risk)
by assigning
$\{0 \to 1, 1 \to 2, 2 \to 3, (3,4) \to 4, (5,6) \to 5, 7 \to
6\}$. The NCA score also has six levels, but is based on six risk
factors and has a maximum value of 13 before being collapsed into six
levels by assigning
$\{0 \to 1, (1,2) \to 2, (3,4) \to 3, (5,6) \to 4, (7,8) \to 5),
(9,10,11,12,13) \to 6\}$.  Finally, the NVCA score is a binary flag
based on five risk factors: if the sum of the weights is greater than
or equal to 4, the PSA flags the arrestee as being at elevated risk of
an NVCA.  Otherwise, the NVCA score is 0, and the arrestee is not
flagged as being at elevated risk.

\begin{figure}[t!]
  \vspace{-.25in}
  \centering \includegraphics[width=0.7\maxwidth]{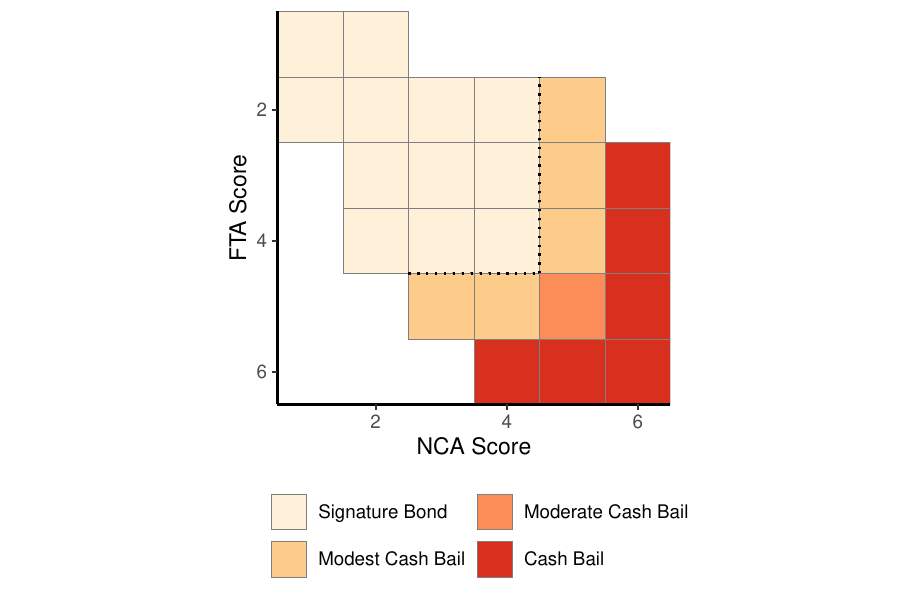}
\caption{Decision Making Framework (DMF) matrix for cases where the current charge is not a serious violent offense, the NVCA flag is not triggered, and the defendant was not extradited. If the FTA score and the NCA score are both less than 5, then the recommendation is to only require a signature bond. Otherwise the recommendation is to require some amount of cash bail. The dashed line indicates this boundary.
Unshaded areas indicate impossible combinations of FTA and NCA scores.}
\label{fig:dmf_matrix}
\end{figure}

\paragraph{Recommendations via the DMF.}
Next, the DMF transforms these three PSA risk scores into a recommendation
regarding cash bail and one regarding additional monitoring conditions.
For cases where the current charge is one of several serious violent
offenses, the defendant was extradited, or the NVCA score is 1, the
DMF automatically recommends cash bail with maximum supervision and
monitoring conditions.  For the remaining cases, the FTA and NCA risk
scores are combined into one of 7 overall risk levels.  If the FTA and
NCA scores are both less than 5, and so the risk level is 3 or lower,
then the recommendation is to only require a signature bond.
Otherwise the recommendation is to require cash bail (limited to
``modest'' at levels 4--5 and ``moderate'' at level 6).
Figure~\ref{fig:dmf_matrix} visualizes the cash bail portion of the
DMF.  The risk levels similarly encode a recommendation for an
increasing amount of pre-trial supervision and monitoring conditions,
ranging from none (level 1) to maximum supervision with biweekly phone
and face-to-face contacts (level 7).  Appendix
Figure~\ref{fig:dmf_matrix_full} shows these conditions along with
the cash bail recommendations.

\subsection{The experimental data}

We analyze the data from a randomized controlled trial conducted in
Dane County, Wisconsin.  In this experiment, the PSA was computed for
each first appearance hearing that a single judge oversaw during the
study period.  Across cases, we randomized whether the PSA was made
available in its entirety to the judge.  If a case is assigned to the
treatment group, the judge received the three PSA scores, the DMF
recommendations, and all of the risk factors that were used to
construct them on a single sheet of paper.  For the control
group, the judge did not receive the PSA scores and DMF
recommendations. Since the risk factors that go into the PSA were
  made available in other case files, the judge could, in principle,
  reconstruct the PSA output with enough time.

For each case, we observe the three scores (FTA, NCA, and NVCA) and
the DMF recommendation, the underlying risk factors used to construct
the scores, the binary decision by the judge (signature bond or cash
bail), and three binary outcomes (FTA, NCA, and NVCA).  We focus on
first arrest cases in order to avoid spillover effects between
cases. All told, there are 1,891 cases, 948 of which the judge was
given access to the PSA.

Our goal is to improve the PSA recommendation system while
taking into account the judicial decisions that partly result from
the algorithmic recommendations; see
Appendix~\ref{sec:human_decisions} for further discussion on
incorporating judicial decisions into the analysis.  Crucially, each
component of the PSA is \emph{deterministic} and no aspect of it was
randomized as part of the study.  Therefore, there is a lack of
\emph{overlap}: the probability that any case would have had a
different algorithmic recommendation than it actually received is
exactly zero.  This makes existing approaches to policy learning
inapplicable because they rely on the inverse of this probability.
Instead, learning a new recommendation policy in the absence of
overlap requires \emph{extrapolation}.  Below, we will develop a
methodological framework that provides a statistical property that
the new, learned rules perform at least as well as the original
recommendation.

\section{Policy Learning with Observational Data}
\label{sec:methods}

\subsection{Notation and setup}

Suppose that we have a representative sample of $n$ units
independently drawn from a population $\mathcal{P}$.  For each unit
$i=1,\ldots,n$, we observe a set of covariates
$\bX_i \in \mcal{X} \subseteq \R^p$ (e.g., the risk factors from
Appendix Table~\ref{tab:nvca_weights}) and a binary outcome $Y_i \in \{0,1\}$.
In our analysis presented in Section~\ref{sec:application}, we
alternately consider the outcome $Y_i = 1$ as the \emph{absence} of an
FTA, NCA, or NVCA.
We consider a set of $K$ possible actions, denoted by
$\mathcal{A} = \{0,1,2,\ldots,K-1\}$ that can be taken for each unit.

The actions correspond to the PSA recommendation: there are
$K=6$ possible actions when we consider the FTA and NCA risk scores,
$K = 2$ for the NVCA flag, $K = 7$ for the overall DMF bail and
monitoring recommendation, and $K=2$ for the signature bond versus
cash bail recommendation.  In our experimental evaluation, we have
access to the algorithm that generated the observed actions.
Formally, we encode this as a known baseline deterministic policy
$\tilde{\pi} : \mcal{X} \to \mathcal{A}$ that generates the observed
actions $A_i = \tilde{\pi}(\bX_i)$.  Throughout this paper, we will
also refer to the baseline policy as
$\tilde{\pi}(\bx, a) \equiv \bbone\{\tilde{\pi}(\bx) = a\}$, the
indicator of whether the baseline policy yields action $a$ given the
covariates $\bx$.

We consider the effects of the algorithmic recommendation on the
outcome, and assume that the algorithmic action $A_i$ may affect its
own unit's outcome $Y_i$ but has no impact on the outcomes of other units
(no interference between units; \cite{rubin1980}).  Then, we can
write the potential outcome under each action $A_i = a$ as $Y_i(a)$
where $a \in \mathcal{A}$ and the observed outcome as
$Y_i = Y_i(A_i) = Y_i(\tilde{\pi}(\bX_i))$ \citep{neyman1923}.  This
setup focuses on the impacts of the algorithmic recommendation whose
provision was randomized in our experimental evaluation.  We
marginalize over the potential human judicial decisions that may be
influenced by the algorithmic recommendation (see
Appendix~\ref{sec:human_decisions} for further formalization).
Finally, our setting implies that
$(\{Y_i(a)\}_{a \in \mathcal{A}}, \bX_i)$ are independent and
identically distributed, so we sometimes drop the $i$ subscript.

\subsection{Optimal policy learning}
\label{sec:opt_pol_learn}

Our primary goal is to find a new deterministic policy
$\pi:\mcal{X} \to \mathcal{A}$, that has a high expected utility.  We
will again use the notation $\pi(\bx, a) \equiv \bbone\{\pi(\bx) = a\}$
for the policy being equal to action $a$ given the covariates $\bx$.
Let $u(y, a)$ denote the utility for outcome $y$ under action $a$.
Because the outcomes are binary, we can write this utility function
as:\footnote{While we focus here on binary potential outcomes, this
  form of the utility function shows that we can extend our results to
  the case with continuous outcomes with utility functions that are
  linear in the (possibly transformed) outcomes. }
\begin{equation*}
  Y(a)u(1,a) + \{1 - Y(a)\}u(0,a) \ = \ \{u(1,a) - u(0,a)\}Y(a) + u(0,a).
\end{equation*}

The two key components of this utility function are (i) the utility
change between the two outcomes for action $a$,
$u(a) \equiv u(1,a) - u(0,a)$, which we assume is non-negative without
loss of generality, and (ii) the utility for an outcome of zero with
an action $a$, $c(a) \equiv u(0,a)$. We will refer to the latter term
as the ``cost'' because it denotes the utility under action $a$ when
the outcome event does not happen; $c(a)=0$ corresponds to the action
having no cost.  We define the utility using both the outcome
$y$ and the action $a$ to capture the fact that some actions are
costly. For example, in Section~\ref{sec:application}, we will place a
cost on triggering the NVCA flag, recommending cash bail, or assigning
a high NCA, FTA, or overall risk score.
We note, however, that our approach is agnostic to the particular choice
of the utility function.

While this utility only takes into account the policy action and the
outcome, policy makers may also be concerned about the costs of
subsequent human decisions that are possibly affected by algorithmic
recommendations or actions. In Appendix~\ref{sec:human_decisions}, we
show how to incorporate such factors into the utility function.

The value of policy $\pi$ is the expected utility under policy $\pi$
across the population,
\begin{equation}
  \label{eq:policy_model}
  V(\pi, m^\ast) \ = \ \E\left[\sum_{a \in \mathcal{A}}\pi(\bX, a)
      \{u(a) Y(a) + c(a)\}\right] \ = \ \E\left[\sum_{a \in
      \mathcal{A}} \pi(\bX, a) \{u(a) m^\ast(a, \bX) + c(a)\}\right], 
\end{equation}
where we have used the law of iterated expectations, with the
  first expection over $\bX$ and $Y(a)$, and the second expectation
  over $\bX$, to show the dependence on the conditional expected
potential outcome function
$m^\ast(a,\bx) \equiv \E[Y(a)\mid \bX = \bx] $. We explicitly denote
the value under different potential models for our development below;
in cases where it is not ambiguous, we omit the $m^\ast$ argument to
indicate the value under the true conditional expected potential
outcome function.

Ideally, we would like to find a policy $\pi$ that has the highest
value within a policy class $\Pi$. We can write a population optimal
policy as one that maximizes the value, i.e.,
$\pi^\ast \in \argmax_{\pi \in \Pi} V(\pi)$.
The policy class $\Pi$ is an important object both in the theoretical
analysis and in applications.  In Section~\ref{sec:application}, we
discuss the substantive choice of policy class when applied to a PRAI.

To find an optimal policy, we need to point-identify the value
$V(\pi, m^\ast)$ for all candidate policies $\pi \in \Pi$.  Existing
methods rely on an overlap assumption for identification. In our
context, this would require that each case has a non-zero
probability of being assigned algorithmic action $A=a$, i.e. that
$P(A = a \mid \bX) > 0$ for all $a \in \mathcal{A}$. If the baseline
policy were stochastic, satisfying the overlap assumption, we could
directly use inverse probability weighting, model-based weighting,
or a doubly robust approach to learn an optimal policy from data
\citep[e.g.][]{Qian2011,Zhao2012,Kitagawa2018,Dudik2011, Athey2021}.
In our application and many
other settings, however, the baseline policy $\tilde{\pi}$ is a
deterministic function of covariates, implying a lack of overlap.
Thus, we cannot point-identify the value $V(\pi, m^\ast)$ for all
policies $\pi \in \Pi$ and hence cannot using existing approaches.
In Appendix~\ref{sec:impute}, we provide further discussion about
this identification issue.  

\section{Safe Policy Learning through Extrapolation}
\label{sec:safe_policy}

To deal with the lack of overlap brought on by the deterministic
policy, we propose to first partially identify the conditional
expectation, and then use robust optimization to find the best policy
under the worst-case model.  We will develop our optimal safe policy
approach in two parts.  First, we show how to construct a safe policy
if we had access to an infinite number of samples, i.e., in the
population.  We then discuss how to construct policies empirically
from data, and establish finite-sample statistical properties of the policies.
Finally, we show how to incorporate the experimental
control units to weaken the assumptions of our general approach and
discuss the practical implementation of the procedure for our
analysis.

\subsection{Partially identifying the value of a policy}
\label{sec:partial}

To understand how the lack of overlap affects our ability to find a
new policy, we will separate the value of a policy into identifiable
and unidentifiable components.  We will then consider scenarios where
it is possible to at least \emph{partially identify} the latter term.
To do so, we can write the value $V(\pi,m^\ast)$ in terms of the observed outcome
$Y$ when our policy $\pi$ agrees with the baseline policy
$\tilde{\pi}$, and the unidentifiable full model $m^\ast(a,\bx)$ when $\pi$
disagrees with $\tilde{\pi}$:
\begin{equation}
  \label{eq:value_model}
    V(\pi, m^\ast) \ = \ \E\left[\sum_{a \in \mathcal{A}} \pi(\bX, a) \left\{
        u(a) \left[\tilde{\pi}(\bX, a) Y + \left\{1 - \tilde{\pi}(\bX,a)\right\}
         m^\ast(a, \bX)\right]+c(a)\right\} \right].
\end{equation}
  
Without further assumptions, we cannot point-identify the value of the
conditional expectation when $a$ is different from the baseline policy
and so we cannot identify $V(\pi,m^\ast)$ for an arbitrary policy $\pi$. 
If we place restrictions on $m^\ast(a,\bx)$, however, we can partially
identify a range of potential values for a given policy $\pi$
\citep{Manski2005_partial}.  Specifically, we encode the conditional
expectation as a function $m:\mathcal{A} \times \mcal{X} \to [0,1]$,
and restrict it to be in a particular model class $\mcal{F}$.  We then
combine this with the fact that we have identified some function
values, i.e., the conditional expectation of the observed outcome
under the baseline policy
$\tilde{m}(\bx) \equiv m^\ast(\tilde{\pi}(\bx),\bx) = \E[Y \mid \bX = \bx] $, to
form a restricted model class:
\begin{equation}
  \label{eq:restricted_class}
  \mcal{M} \ = \ \{f \in \mcal{F} \; \mid f(a, \bx) = \tilde{m}(\bx) \; \forall \bx \in \mathcal{X}, a = \tilde{\pi}(\bx)\}.
\end{equation}

This restricted model class combines the structural information
from the underlying class $\mcal{F}$ (i.e., $f \in \mathcal{F}$)
with the observable implications from the data (i.e.,
$f(\tilde{\pi}(\bx), \bx) = \tilde{m}(\bx)$).
With this setup, a policy $\pi$ can be associated with a range of
possible values $\{V(\pi, m) \mid m \in \mathcal{M}\}$, one for each
observationally indistinguishable model.  We discuss particular
choices of the model class $\mcal{F}$ in our study (see
Section~\ref{sec:application}), deferring computations to construct
the associated restricted model class $\mcal{M}$ to
Appendix~\ref{sec:representative_cases}.

\subsection{Criteria for decision-making under ambiguity}
\label{sec:opt_pop}

The lack of identifiability leads to an ambiguity in choosing
an ``optimal'' policy: a policy could have a high value under one
model and a low value under another, and no amount of data can help to
adjudicate between the two scenarios.  However, the value of the
baseline policy $\tilde{\pi}$ is point-identified using the observed
policy values and outcomes:
\begin{equation*}
  V(\tilde{\pi}) \ = \ \E\left[\sum_{a \in \mathcal{A}} \tilde{\pi}(\bX, a)\{u(a) Y + c(a)\}\right].
\end{equation*}
The baseline policy $\tilde{\pi}$ is also already
implemented, so a natural requirement of a new policy is that it
performs \emph{at least as well} as the baseline.

To construct such a policy, we take a maximin approach by finding a
policy that maximizes the improvement over the baseline in the worst
case:
\begin{equation}
  \label{eq:maximin}
   \pi^{\inf} \in \underset{\pi \in \Pi}{\argmax}\min_{m
      \in \mcal{M}}  \left\{  V(\pi, m) - V(\tilde{\pi})\right\}.
\end{equation}
Because the value of the baseline is point-identified, this is
equivalent to finding a policy that maximizes the worst-case value
across the set of potential models $\mcal{M}$, i.e.,
$\underset{\pi \in \Pi}{\argmax} \underset{m \in \mcal{M}}{\min}
V(\pi, m)$.

Such maximin criteria have been widely used for policy learning in
various contexts with partial identification
\citep[e.g.,][]{Kallus2021,Pu2021}.  Other applications include
decision problems with ambiguity more broadly, such as robust
statistical learning and robust optimization
\citep[e.g.,][]{Duchi2021_dro,Bertsimas2011}.  In
addition, \citet{Gilboa1989} show that the maximin expected utility
criterion is equivalent to having a preference relation among policies
that satisfies a notion of \emph{uncertainty aversion} (in addition to
other more standard properties).

A benefit of choosing the maximin criterion is that so long as the
policy class $\Pi$ includes the baseline policy $\tilde{\pi}$, and the
underlying model lies in the restricted model class $\mcal{M}$, the
maximin optimal policy $\pi^{\inf}$ will be at least as good as the
baseline.  We formalize this as the following proposition.
\begin{proposition}[Population safety]
  \label{prop:pop_safety} \singlespacing Let $\pi^{\inf}$ be a
  solution to Eqn~\eqref{eq:maximin}. If $m^\ast \in \mcal{M}$,
  and $\tilde{\pi} \in \Pi$, then
  $V(\tilde{\pi}, m^\ast)\leq V(\pi^{\inf}, m^\ast) $.
\end{proposition}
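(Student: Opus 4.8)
The plan is to reduce the claim to a short chain of inequalities built entirely from the maximin structure in Equation~\eqref{eq:maximin} together with one structural fact about the baseline policy. Unwinding the definition of regret, the assertion $R(\pi^{\inf}, \tilde{\pi}, m) \leq 0$ is equivalent to $V(\pi^{\inf}, m) \geq V(\tilde{\pi}, m)$ for the true model $m$. So I want to sandwich these two quantities through the worst-case value $V^{\inf}(\cdot) = \min_{m' \in \mcal{M}} V(\cdot, m')$, exploiting that $\pi^{\inf}$ is by construction a maximizer of $V^{\inf}$ over $\Pi$.

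First I would establish the key lemma that the worst-case value of the baseline policy equals its true value, $V^{\inf}(\tilde{\pi}) = V(\tilde{\pi}, m)$ for every $m \in \mcal{M}$. Starting from the representation in Equation~\eqref{eq:value_model}, setting $\pi = \tilde{\pi}$ forces $\pi(a \mid X) = \tilde{\pi}(a \mid X)$, so whenever the indicator $\tilde{\pi}(a \mid X)$ is nonzero the factor $\{1 - \tilde{\pi}(a \mid X)\}$ vanishes and the unidentified term $m(a, X)$ drops out entirely. Hence $V(\tilde{\pi}, m)$ collapses to $\E[\sum_{a} \tilde{\pi}(a \mid X)\{u(a) Y + c(a)\}] = V(\tilde{\pi})$, which does not depend on $m$; taking the minimum over $m' \in \mcal{M}$ therefore leaves it unchanged.

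With this lemma in hand, the conclusion follows from three elementary steps chained together. Since $m \in \mcal{M}$ by hypothesis, the minimizing definition of the worst-case value gives $V(\pi^{\inf}, m) \geq V^{\inf}(\pi^{\inf})$. Since $\pi^{\inf}$ maximizes $V^{\inf}$ over $\Pi$ and $\tilde{\pi} \in \Pi$ by hypothesis, optimality gives $V^{\inf}(\pi^{\inf}) \geq V^{\inf}(\tilde{\pi})$. Finally the lemma gives $V^{\inf}(\tilde{\pi}) = V(\tilde{\pi}, m)$. Concatenating these yields $V(\pi^{\inf}, m) \geq V(\tilde{\pi}, m)$, which is exactly $R(\pi^{\inf}, \tilde{\pi}, m) \leq 0$.

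The argument is purely order-theoretic, so I do not anticipate a serious obstacle: it relies only on the fact that a maximin solution over a class containing the baseline cannot be beaten by the baseline in the worst case, combined with the benign observation that the baseline's value is model-independent. The one point that genuinely requires care is the lemma, and specifically its sole use of the structure of $\mcal{M}$: every $m \in \mcal{M}$ agrees on the identified slice $m(\tilde{\pi}(x), x) = \tilde{m}(x)$ by the defining constraint in Equation~\eqref{eq:restricted_class}, which is precisely what pins down $V(\tilde{\pi}, m)$ exactly and makes the minimization over $\mcal{M}$ vacuous for the baseline. I would state that dependence explicitly, since it is the only place where the restricted model class enters the proof.
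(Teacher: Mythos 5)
Your proposal is correct and follows exactly the same argument as the paper's proof, which is the one-line chain $V(\tilde{\pi}) = V^{\inf}(\tilde{\pi}) \leq V^{\inf}(\pi^{\inf}) \leq V(\pi^{\inf})$. You simply make explicit the justifications the paper leaves implicit, namely that the unidentified model values drop out of $V(\tilde{\pi}, m)$ so the worst-case and true values of the baseline coincide, and that the two inequalities follow from $m \in \mcal{M}$ and the maximin optimality of $\pi^{\inf}$ with $\tilde{\pi} \in \Pi$.
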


We call this a ``safety'' property because the baseline policy acts
as a fallback option. If deviating from the baseline policy can lead
to a worse expected utility, a maximin policy will stick to the
baseline. In this way, the new policy will change the baseline only
when there is sufficient evidence for improvement.  We stress
that this safety property only holds if the structural assumptions
about the true model $m^\ast$ are correct, i.e., $m^\ast \in \mathcal{M}$.
Furthermore, this notion of safety is from the point of view of the
policy maker that sets the utility function: it says nothing
about the expected utility for other stakeholders with
different utility functions.

Furthermore, this safety property comes at a cost: maximin policies can be
conservative and sub-optimal relative to the (infeasible) oracle
policy, $\pi^\ast \in \argmax_{\pi \in \Pi} \; V(\pi, m^\ast)$
\cite[e.g.,][]{Manski2005_partial,Cui2021_partial}.
Because the maximin criterion limits the downside risks of deviating
from the baseline policy, it can miss situations where doing so could
lead to large utility gains.  We bound this sub-optimality at the
population level in Appendix Theorem~\ref{thm:pop_optimal_gap} and for
policies learned empirically from finite samples in
Theorem~\ref{thm:emp_optimal_gap}.
An alternative criterion that addresses
this is the \emph{minimax regret} criterion that measures the maximum
value difference between the (infeasible) oracle and the chosen policy
\citep[e.g.,][]{Manski2007,Stoye2012,Song2014_point_decision}.
In addition, maximin policies can be sensitive to the existence of
edge cases. Searching for the worst case across \emph{all} possible
models ignores the fact that we may find some models unlikely, even if
they are possible.  A Bayesian criterion that explicitly places prior
over models and computes the posterior expected utility given the
observed data would counteract this \citep{jia2023bayesian}.

\subsection{The empirical safe policy}
\label{sec:opt_data}

Next, we show
how to learn a policy from the observed data
$\{\bX_i, \tilde{\pi}(\bX_i), Y_i(\tilde{\pi}(\bX_i))\}_{i=1}^n$.  We begin
with a sample analog to the value function in
Eqn~\eqref{eq:value_model}:
\begin{equation}
  \label{eq:value_model_empirical}
  \hat{V}(\pi, m) \ = \ \frac{1}{n}\sum_{i=1}^n\sum_{a \in \mathcal{A}} \pi(\bX_i, a) \left\{
    u(a) \left[\tilde{\pi}(\bX_i, a) Y_i + \left\{1 - \tilde{\pi}(X_i, a)\right\}m(a, \bX_i) \right]+c(a)\right\}.
\end{equation}
With this, we could find the worst-case sample value across all models
in the restricted model class $\mcal{M}$ from
Eqn~\eqref{eq:restricted_class}.  Unfortunately, since we do not have
the \emph{true} conditional expectation
$\tilde{m}(\bx) = \E[Y(\tilde{\pi}(\bx))]$, we cannot compute the true
restricted model class.  One potential approach is to obtain an
estimator of the conditional expectation function,
$\hat{\tilde{m}}(\bx)$, and use the estimate in place of the true
values. However, this fails to take into account the estimation
uncertainty, and could lead to a policy that improperly deviates from
the baseline due to noise, especially when the convergence rate of the
estimated model $\hat{\tilde{m}}(\bx)$ is slow.

Instead, we construct a \emph{larger}, empirical model class
$\widehat{\mcal{M}}_n(\alpha)$, based on the observed data, that
contains the true restricted model class with a probability at least
$1 - \alpha$, i.e.,
$P\left(\mcal{M} \subseteq \widehat{\mcal{M}}_n(\alpha)\right) \geq 1
- \alpha.$
Then, we construct our empirical policies by first finding the
worst-case in-sample value improvement, then maximizing this objective
across policies $\pi$:
\begin{equation}
  \label{eq:maximin_emp}
  \hat{\pi}\in \underset{\pi \in \Pi}{\argmax} \min_{m \in \widehat{\mcal{M}}_n(\alpha)} \left\{\hat{V}(\pi, m) - \hat{V}(\tilde{\pi})\right\}.
\end{equation}
We refer to $\hat{\pi}$ as the empirical safe policy, as it is the empirical
analog to the $\pi^{\inf}$.
Note that since the empirical restricted
model class is larger than the true restricted model class, a policy
derived from it is more likely to fall back to the status quo rule.

To construct the empirical model class $\widehat{\mcal{M}}_n(\alpha)$,
we use a uniform $1-\alpha$ confidence band for the conditional
expectation function $\tilde{m}(\bx)$, with lower and upper bounds
$\widehat{C}_\alpha(\bx) = [\widehat{C}_{\alpha \ell} (\bx),
\widehat{C}_{\alpha u}(\bx)]$ such that
$  
P\left(\tilde{m}(\bx) \in \widehat{C}_\alpha(\bx) \;\; \forall \; \bx\right) \geq 1 - \alpha.
$
With such a confidence band, we construct the empirical restricted model class as
\[
  \widehat{\mcal{M}}_n(\alpha) \ = \ \{f \in \mcal{F} \; \mid f(\tilde{\pi}(\bx), \bx)  \in \widehat{C}_\alpha(\bx) \; \forall \bx \in \mathcal{X}\}.
\]
Throughout, we construct our confidence bands so that the 0\% confidence band corresponds to the point estimate: $\widehat{C}_{\alpha \ell}(\bx) = \widehat{C}_{\alpha u}(\bx) = \hat{\tilde{m}}(\bx)$, and therefore setting $\alpha = 1$ creates the restricted model class directly from the point estimates as described above.
In our analysis in Section~\ref{sec:application}, the covariates are
all discrete.  Thus, we first construct a point-wise confidence
interval for each unique data point, and then create a uniform
confidence band by using a Bonferonni correction for the number of
unique data points.
We discuss how to construct the empirical model class and solve this
optimization problem in Section~\ref{sec:application}.

\subsection{Finite sample statistical properties}
\label{sec:stats_theory}
Compared to the population maximin problem, the empirical problem has
an additional layer of uncertainty due to sampling error that arises in finite samples.
First, we establish a statistical safety property: if the
structural assumptions about the true model $m^\ast$ are correct,
the learned policy will perform at least as well as the baseline policy
with probability approximately $1 - \alpha$.
We then characterize how conservative the solution is via the {\it
  optimality gap}, $V(\pi^\ast) - V(\hat{\pi})$: the policy value
difference between the infeasible oracle that knows the true model,
and our data-driven maximin policy that uses the worst-case model.

The results below use the \emph{population Rademacher complexity} of a
function class $\mathcal{G}$:
$
\mcal{R}_n(\mathcal{G})  \equiv \E\left[\sup_{g \in \mathcal{G}}\left|\frac{1}{n}\sum_{i=1}^n \varepsilon_i g(\bX_i)\right|\right]
$
where $\varepsilon_i$ is an i.i.d. Rademacher random variable, i.e.,
$\Pr(\varepsilon_i=1)=\Pr(\varepsilon_i=-1)=1/2$, and the expectation
is taken over both $\varepsilon_i$ and $\bX_i$ \citep[ \S
4]{wainwright_2019}.  We consider the maximum Rademacher complexity
across the sub-policy classes for actions $a \in \mathcal{A}$:
$\Pi_a \equiv \{\pi(\cdot, a ) \mid \pi \in \Pi\}$.
This measures the ability of the policy class to overfit.
Using this measure, we establish a statistical safety property.
\begin{theorem}[Statistical safety]
  \label{thm:emp_safety} \singlespacing
  If the baseline policy
  $\tilde{\pi} \in \Pi$ and the true conditional expectation
  $m^\ast(a,\bx) \in \mcal{M}$, for any $0 < \delta \leq e^{-1}$, the value of
  $\hat{\pi}$ relative to the baseline $\tilde{\pi}$ is,
  \[
    V(\tilde{\pi})\ - V(\hat{\pi})\leq   \ 6 C (K-1) \left[\max_{a} \mathcal{R}_n(\Pi_a) + 2\sqrt{\frac{1}{n}\log \frac{K-1}{\delta}} \right],
  \]
  with probability at least $1 - \alpha  - \delta$,
  where $C = \max_{y \in \{0,1\}, a \in \{0, 1\}} |u(y, a)|$.
\end{theorem}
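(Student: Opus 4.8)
The plan is to combine an oracle-inequality-style decomposition of the regret with a single uniform-concentration argument. Throughout I would condition on the event $\mcal{E} = \{\mcal{M} \subseteq \widehat{\mcal{M}}_n(\alpha)\}$, which by~\eqref{eq:large_model_class} has probability at least $1-\alpha$; on $\mcal{E}$ the true model satisfies $m \in \widehat{\mcal{M}}_n(\alpha)$. Two structural facts drive everything. First, because the unidentified model term in~\eqref{eq:value_model_empirical} is multiplied by $1 - \tilde{\pi}(a \mid X_i)$, it vanishes when $\pi = \tilde{\pi}$; hence $\hat{V}(\tilde{\pi}, m')$ does not depend on $m'$, so the baseline is empirically point identified and $\hat{V}^{\inf}(\tilde{\pi}) = \hat{V}(\tilde{\pi}, m)$. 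Second, since $m \in \widehat{\mcal{M}}_n(\alpha)$ on $\mcal{E}$, taking a minimum over a set containing $m$ gives $\hat{V}^{\inf}(\pi) = \min_{m'} \hat{V}(\pi, m') \leq \hat{V}(\pi, m)$ for every $\pi$.

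Writing $\Phi \equiv \sup_{\pi \in \Pi}\{\hat{V}(\pi, m) - V(\pi, m)\}$, I would chain these facts with the optimality of $\hat{\pi}$ from~\eqref{eq:maximin_emp}. On $\mcal{E}$,
\[
  V(\hat{\pi}, m) \ \geq \ \hat{V}(\hat{\pi}, m) - \Phi \ \geq \ \hat{V}^{\inf}(\hat{\pi}) - \Phi \ \geq \ \hat{V}^{\inf}(\tilde{\pi}) - \Phi \ = \ \hat{V}(\tilde{\pi}, m) - \Phi,
\]
where the steps use, in order, the definition of $\Phi$, the second fact, the optimality $\hat{V}^{\inf}(\hat{\pi}) \geq \hat{V}^{\inf}(\tilde{\pi})$, and the first fact. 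Adding and subtracting $V(\tilde{\pi}, m)$ and rearranging yields the key bound
\[
  R(\hat{\pi}, \tilde{\pi}, m) \ = \ V(\tilde{\pi}, m) - V(\hat{\pi}, m) \ \leq \ \Phi + \{V(\tilde{\pi}, m) - \hat{V}(\tilde{\pi}, m)\}.
\]
Here the first term must be controlled uniformly over $\Pi$ because $\hat{\pi}$ is data dependent, while the second is the deviation of a sample average for the single, fixed policy $\tilde{\pi}$.

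I would then bound the combined quantity $\Psi \equiv \Phi + \{V(\tilde{\pi}, m) - \hat{V}(\tilde{\pi}, m)\}$ in one pass. Each summand $g_m(\pi, X_i, Y_i)$ in~\eqref{eq:value_model_empirical} is a convex combination over actions of terms that are themselves convex combinations of $u(0,a)$ and $u(1,a)$, hence lie in $[-C,C]$, so $|g_m| \leq C$; likewise the baseline integrand $u(\tilde{\pi}(X_i))Y_i + c(\tilde{\pi}(X_i))$ lies in $[-C,C]$. Consequently changing one observation perturbs $\Psi$ by at most $4C/n$, and since $\E\Psi = \E\Phi$ (the fixed-policy deviation has mean zero), McDiarmid gives $\Psi \leq \E\Phi + 2C\sqrt{(2/n)\log(1/\delta)}$ with probability $1-\delta$. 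To bound $\E\Phi$ I would apply symmetrization, $\E\Phi \leq 2\,\E\sup_{\pi}|\tfrac1n\sum_i \varepsilon_i g_m(\pi, X_i, Y_i)|$, and then, specializing to $\mcal{A} = \{0,1\}$, decompose $g_m(\pi, X_i, Y_i) = \pi(X_i)\phi_i + \psi_i$, where $\phi_i$ is the action-$1$ minus action-$0$ utility with $|\phi_i| \leq 2C$ and $\psi_i$ is the policy-free action-$0$ utility with $|\psi_i| \leq C$. The residual $\tfrac1n\sum_i\varepsilon_i\psi_i$ contributes only $O(C/\sqrt{n})$, while the Ledoux--Talagrand contraction inequality applied to the bounded multiplier $\phi_i/2C \in [-1,1]$ reduces the policy-dependent part to $4C\,\mcal{R}_n(\Pi)$, so $\E\Phi \leq 8C\,\mcal{R}_n(\Pi) + O(C/\sqrt{n})$. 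A union bound of this $\delta$-event with $\mcal{E}$ gives probability at least $1-\alpha-\delta$, and using $\delta \leq e^{-1}$, so that $\log(1/\delta) \geq 1$, lets me absorb the $O(C/\sqrt{n})$ terms into the $\sqrt{(1/n)\log(1/\delta)}$ term, producing the (non-optimized) coefficients $8C$ and $14C$.

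The main obstacle is the reduction of $\E\Phi$ to $\mcal{R}_n(\Pi)$: the integrand $g_m$ entangles the observed outcome $Y_i$ on the region where $\pi$ agrees with $\tilde{\pi}$, the extrapolated model value $m(a, X_i)$ on the region where they disagree, the utilities, and the policy itself. The care lies in isolating the policy-dependent factor $\pi(X_i)\phi_i$, bounding its multiplier uniformly by $2C$, and invoking the contraction principle with the correct constant while dispatching the policy-free residual $\psi_i$ as a lower-order term — this is what allows the complexity of the composite objective to collapse onto the \emph{policy-class} Rademacher complexity alone.
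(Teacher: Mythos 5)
Your proposal is correct and follows essentially the same route as the paper's proof: condition on the event $\mcal{M} \subseteq \widehat{\mcal{M}}_n(\alpha)$, use the chain $\hat{V}^{\inf}(\hat{\pi}) \geq \hat{V}^{\inf}(\tilde{\pi}) = \hat{V}(\tilde{\pi},m)$ together with $\hat{V}^{\inf}(\hat{\pi}) \leq \hat{V}(\hat{\pi},m)$ to eliminate the empirical optimality gap, and control the remaining sampling error via bounded-differences concentration plus symmetrization, collapsing the composite objective class onto the policy-class Rademacher complexity $\mcal{R}_n(\Pi)$ --- exactly the content of the paper's Lemma~\ref{lem:rademacher_complexity} and its use in the paper's proof. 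Your only deviations are technical refinements within that same skeleton: you use a one-sided supremum plus a separate mean-zero baseline deviation where the paper bounds both terms by $2\sup_{\pi \in \Pi}|\hat{V}(\pi)-V(\pi)|$, and you invoke the Ledoux--Talagrand contraction explicitly to handle the data-dependent multipliers, a step the paper carries out informally; these choices, if anything, yield the stated constants with room to spare.
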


Like Proposition~\ref{prop:pop_safety}, Theorem~\ref{thm:emp_safety} is only meaningful
if the assumptions about the true model $m^\ast$ are correct.
If they are, Theorem~\ref{thm:emp_safety} shows that the empirical safe policy
will not have a lower policy value than the baseline, up to standard
empirical process terms:
the Rademacher complexity of the policy class $\Pi$, and an error term
due to sampling variability that decreases at a rate of $n^{-1/2}$.
The complexity of the policy class $\Pi_a$ controls the chance that the learned
policy is worse than the baseline due to overfitting.

For many standard
policy classes, we expect the Rademacher complexity to decrease to
zero as the sample size increases, with the complexity determining the
rate of convergence.  For simple policy classes, the bound will
quickly go towards zero for any level $\alpha$; complex policy classes
will require larger samples to ensure that the safety property is
meaningful, regardless of the level $\alpha$.
By using the larger model class
$\widehat{\mcal{M}}_n(\alpha)$, the estimation error for the
conditional expectation $\hat{\tilde{m}}(\bx) - \tilde{m}(\bx)$ does not
directly enter into the bound.\footnote{In Appendix~\ref{sec:alpha_1}, we extend these results to consider the case where $1 - \alpha = 0$ and we use point estimates rather than confidence bounds. We show that the bounds have additional terms  due to estimation error of the model.} However, if we cannot estimate
$\tilde{m}(\bx)$ well, the empirical restricted model class
$\widehat{\mcal{M}}_n(\alpha)$ will be large, and so the empirical
safe policy may collapse to the baseline policy.

To quantify the optimality gap, we denote
$\widehat{\mcal{W}}_{\widehat{\mcal{M}}_n(\alpha)}(\pi^\ast(1-\tilde{\pi}))$
as the width of the empirical
model class $\widehat{\mcal{M}}_n(\alpha)$ in the direction that
$\pi^\ast$ and $\tilde{\pi}$ disagree, where
\[
  \widehat{\mcal{W}}_\mcal{F}(g) = \sup_{f \in \mcal{F}} \frac{1}{n}\sum_{i=1}^n \sum_{a \in \mathcal{A}} f(a, \bX_i) g(a, \bX_i) - \inf_{f \in \mcal{F}} \frac{1}{n}\sum_{i=1}^n\sum_{a \in \mathcal{A}} f(a, \bX_i) g(a, \bX_i)
\]
is the usual notion of the width of a set, for the set defined by all
possible values of a function $f \in \mathcal{F}$ at the
data points $\bX_1,\dots,\bX_n$ for actions $a \in \mathcal{A}$, in the direction
defined by the vector of all values of another function $g(a, \bX_i)$.

\begin{theorem}[Optimality gap]
  \label{thm:emp_optimal_gap} \singlespacing
  Let $u(a) = u > 0$ for all actions.
  If the true conditional expectation $m^\ast \in \mcal{M}$, then for any $0 < \delta \leq e^{-1}$ the optimality gap is
  \[
    \begin{aligned}
      V(\pi^\ast) - V(\hat{\pi}) & 
      & \ \leq \ 2C \widehat{\mcal{W}}_{\widehat{\mcal{M}}_n(\alpha)}\left(\pi^\ast (1 - \tilde{\pi})\right)  +6 C (K-1) \left[\max_{a} \mathcal{R}_n(\Pi_a) + 2\sqrt{\frac{1}{n}\log \frac{K-1}{\delta}} \right],
    \end{aligned}
    \]
    with probability at least $1 - \alpha  - \delta$, where $C = \max_{y \in \{0,1\}, a \in \{0, 1\}} |u(y, a)|$.
\end{theorem}
\noindent To simplify the statement, we have assumed that the utility gain
across different actions is constant and, without loss of generality,
positive.

The bound on the empirical
optimality gap contains the width term
$\widehat{\mcal{W}}_{\widehat{\mcal{M}}_n(\alpha)}\left(\pi^\ast (1 - \tilde{\pi})\right)$,
in addition to the standard empirical process terms found in Theorem~\ref{thm:emp_safety}.
If the baseline policy is the oracle policy, then
this width is zero, the bounds in Theorems~\ref{thm:emp_safety} and
\ref{thm:emp_optimal_gap} coincide,
and the regret of $\hat{\pi}$ relative to the oracle $\pi^\ast$ will converge
to zero so long as the complexity of the policy class goes to zero.
Otherwise, the width term does not necessarily converge to zero: if
the baseline and oracle policies disagree for many cases, the empirical safe
policy could perform substantially worse than the oracle.

This leads to a tradeoff between statistical safety
(Theorem~\ref{thm:emp_safety}) and optimality
(Theorem~\ref{thm:emp_optimal_gap}).  Increasing the confidence
level will yield a greater probability that the learned
policy is safe relative to the baseline, but it will also widen the
potential optimality gap when the baseline and oracle policies
disagree.  This is similar to the tradeoff between a low type I
error rate ($\alpha$ low) and high power
($\widehat{\mcal{W}}_{\widehat{\mcal{M}}_n(\alpha)}\left(\pi^\ast (1
  - \tilde{\pi})\right)$ low) in hypothesis testing.  The trade-off
extends to the choice of model class as well: statistical safety
requires that the model class contains the true conditional
expectation function, i.e., $m^\ast \in \mathcal{M}$.  This is
palatable if we choose a complex model class, but complex model
classes may lead to a greater amount of uncertainty due to severe
lack of identification and/or greater estimation error.

This tradeoff does not exist if the baseline policy is stochastic and
there is overlap between actions. In this case, the conditional
expectation function is non-parametrically identifiable.  While we can
still account for statistical uncertainty by constructing the
empirical model class $\widehat{\mathcal{M}}_n(\alpha)$, we stress
that our approach is not appropriate when the baseline policy is
stochastic. It only uses a model for the outcomes and so will rely on
stronger assumptions on the outcome model and be inefficient relative
to a doubly robust approach that incorporates the action probabilities
as proposed by \citet{Athey2021}.

In practice, we do not know the oracle policy. To operationalize
the bound in Theorem~\ref{thm:emp_optimal_gap}, we can further upper
bound the optimality gap by finding the policy that leads to the
\emph{worst-case} width, were it the oracle policy:
$\widehat{\mathcal{S}}(\mathcal{F}, \Pi; \tilde{\pi}) \equiv
\sup_{\pi \in \Pi}
\widehat{\mcal{W}}_{\widehat{\mcal{M}}_n(\alpha)}\left(\pi (1 -
  \tilde{\pi})\right)$.  We refer to this quantity as the ``size''
of the empirical restricted model class because it measures the
degree of uncertainty about the true model $m^\ast$ in regions of
the covariate space where a policy $\pi \in \Pi$ could deviate from
the baseline.

We use this as a diagnostic measure in
Section~\ref{sec:application}.  Note that the policy class $\Pi$
affects the size.  Restricting to policies that can only disagree
with the baseline in only a few cases will lead to a small
size. Conversely, if we attempt to optimize over an expansive policy
class, the size diagnostic can be large.  However, the size term is
a loose upper bound: even if the size is large, the optimality gap
may still be small if it happens that the oracle policy $\pi^\ast$
is similar to the baseline $\tilde{\pi}$. Therefore, a large size
term is a warning that there may be insufficient information to
learn an improved policy, but it does not rule it out entirely.

\subsection{Learning from experiments evaluating a deterministic policy}
\label{sec:ab_test}

In our empirical study, the existing PSA-DMF system was compared
to not providing algorithmic recommendations.  While a primary goal
of this randomized controlled trial was to evaluate whether one
should adopt the algorithmic policy, we can leverage the control
group data to weaken the restrictions of the underlying model class
$\mathcal{M}$ by placing assumptions on \emph{treatment effects}
rather than the expected potential outcomes.

We consider an expanded set of actions that includes all actions
in $\mathcal{A}$ and a ``null'' action (i.e., do not provide an
algorithmic recommendation).  We denote the null action as $a = -1$,
with potential outcome $Y(-1)$.
Let $Z_i \in \{0,1\}$ be a treatment assignment indicator where
$Z_i = 0$ if no policy is enacted (i.e., the null policy), and
$Z_i = 1$ if the policy follows the baseline policy $\tilde{\pi}$.
Let $e(x) = P(Z = 1 \mid \bX = \bx)$ be the probability of assigning
the treatment condition for an individual with covariates $\bx$.  This
is the propensity score \emph{for the treatment assignment} and since
this is an experiment, it is known and strictly between 0 and 1.
While we consider general propensity scores when describing the
method, in our experiment $e(x) = 0.5$ for all cases.  This allows
us to identify the conditional expectation function,
$m^\ast(-1, \bx) = \E[Y \mid \bX = \bx, Z = 0]$.  Defining the true
conditional average treatment effect (CATE) of the action $a$ relative
to the null action $-1$ as
$\tau^\ast(a, \bx) \equiv m^\ast(a,\bx) - m^\ast(-1,\bx)$, we can also
identify the CATE under the baseline policy $\tilde{\pi}(\bx)$,
$\tilde{\tau}(\bx) = \tau^\ast(\tilde{\pi}(\bx),\bx)$.

We now write the value function in terms of the (partially-identified)
CATE and the (point-identified) expected outcome under the null
action. With a constant utility gain $u(a) = u$, we can write it
as:\footnote{Proposition~\ref{prop:ab_test_utility} in the Appendix
  shows this result for the general utility case.}
\[
  V(\pi) = \E\left[\sum_{a \in
    \mathcal{A}} \pi(\bX, a) \{u \cdot \tau^\ast(a, \bX) + c(a)\}\right] + u \cdot \E[m^\ast(-1,\bX)].
\]
Because the baseline term $\E[m^\ast(-1,\bX)]$ does not depend on $\pi$
and is point-identified, we can re-parameterize the model class to
impose restrictions on the \emph{treatment effects}
$\mcal{T} = \{f(a,\bx) \equiv m^\ast(-1, \bx) + h(a, \bx) \mid h \in
\mathcal{F}, h(\tilde{\pi}(\bx), \bx) = \tau(\tilde{\pi}(\bx), \bx)\}$.
The CATE function is sometimes assumed to be simpler (e.g. smoother, sparser,
fewer interaction terms) than the conditional expected potential outcome
\citep[see, e.g.,][who argue that the CATE should be easier to estimate]{Kunzel2019}.
Therefore, we may consider a smaller model
class for the treatment effects than for the baseline outcomes,
leading to a smaller optimality gap in
Theorem~\ref{thm:emp_optimal_gap}.  We can also construct the
empirical analog by creating a larger empirical model class
$\widehat{\mcal{T}}_n(\alpha)$ as in Section~\ref{sec:opt_data}.

Finally, following \citet{Kitagawa2018}, to account for potential
unequal assignment into treatment, we can solve the population and
empirical robust optimization problems using the inverse probability
weighted outcome
$\Gamma(Z, \bX, Y) \equiv Y\{Z(1-2e(\bX)) + e(\bX)\}/\{e(\bX)(1 -
e(\bX))\}$, which equals the conditional expected potential outcome in
expectation, i.e.,
$\E[\Gamma(Z, \bX, Y) \mid Z = z, \bX = \bx] = z \cdot
m^\ast(\tilde{\pi}(\bx), \bx) + (1-z) \cdot m^\ast(-1, \bx)$.

\section{Empirical Analysis of the Pre-trial Risk Assessment}
\label{sec:application}

\subsection{Implementation details}
\label{sec:implement}

For our empirical analysis, we will represent the empirical restricted model
classes as the set of functions that are upper and lower bounded
point-wise by two bounding functions,
 $ \widehat{\mcal{T}}_n(\alpha)  \ = \  \{f: \mathcal{A} \times \mcal{\bX} \to \R \; \mid \; \widehat{B}_{\alpha\ell}(a,\bx) \leq f(a, \bx) \leq \widehat{B}_{\alpha u}(a, \bx)\},$
where the upper and lower bounds are chosen to satisfy
$P\left(\mcal{T} \subseteq \widehat{\mcal{T}}_n(\alpha)\right) \geq 1
- \alpha$.  In Appendix~\ref{sec:representative_cases}, we show how to
compute these bounds using simultaneous confidence intervals when the
underlying model class is the set of Lipschitz functions or linear models.

The point-wise bound allows us to solve for the worst-case empirical
value $\hat{V}^{\inf}(\pi)$ by finding the minimal value for each
action-covariate pair \citep[see][]{Pu2021}.  Finding the empirical
safe policy by solving Eqn~\eqref{eq:maximin_emp} is equivalent to
solving an empirical welfare maximization problem using a
quasi-outcome that imputes the counterfactual outcome with the
lower bound when the action disagrees with the baseline policy:
{\small\begin{equation}
  \label{eq:maximin_emp_bnd}
  \begin{aligned}
    \max_{\pi \in \Pi} & \;\; \frac{1}{n}\sum_{i=1}^n
    \sum_{a \in \mathcal{A}} \pi(\bX_i, a)\left\{u(a) \left[
    \tilde{\pi}(\bX_i, a) \{\Gamma(1,\bX_i,Y_i) - \Gamma(0, \bX_i, Y_i)\}+ \{1 - \tilde{\pi}(\bX_i, a)\} \widehat{B}_{\alpha \ell}(a, \bX_i)\right] + c(a)\right\},
  \end{aligned}
\end{equation}}where we have omitted terms that do not depend on $\pi$.  A similar
implementation strategy is applicable to cases where we model
potential outcomes rather than treatment effects.

\subsection{Learning a new NVCA flag threshold}
\label{sec:nvca_threshold}

We begin our analysis by considering a small change to the existing
system: learning a new threshold for the NVCA flag.  Our goal here is
to find the optimal NVCA threshold in the worst case, where our
preferred outcome is no NVCA.

\paragraph{Choosing the policy class.}
We first formalize our choice of policy class.  Let
$x_\text{nvca} \in \{0,\ldots,6\}$ be the total number of NVCA points for an
arrestee, computed using the point system in
Appendix Table~\ref{tab:nvca_weights}.  Recall that the baseline NVCA algorithm
is to trigger the flag if the number of points is greater than or
equal to 4, i.e.,
$\tilde{\pi}(x_\text{nvca}) = \bbone\{x_\text{nvca} \geq 4\}$.  Our
policy learning problem is to choose a policy among the class of
threshold policies,
$ 
  \Pi_\text{thresh} = \left\{ \pi(x) = \bbone\{x_\text{nvca} \geq \eta \} \; \vert \; \eta \in \{0,\ldots,7\}\right\}.
$
We will keep the baseline weighting on arrestee risk factors and
\emph{only} change the threshold $\eta$.
Since this policy class only has eight elements, we can compute the
empirical maximin policy $\hat{\pi}$ by solving
Eqn~\eqref{eq:maximin_emp_bnd} via an exhaustive search.

\paragraph{Choosing the model class.}
We next choose a model class for the CATE on no NVCA occurring,
$\tau^\ast(a, x_\text{nvca})$.  There are many potential ways to
characterize the complexity of functions of one variable such as
$\tau^\ast(a, \cdot)$. Here, we characterize it via a Lipschitz
constraint that
$\left|\tau(a, x_\text{nvca}) - \tau(a, x_\text{nvca}')\right| \leq
\lambda_a |x_\text{nvca} - x_\text{nvca}'|$ for any pair of NVCA
points $x_\text{nvca}, x_\text{nvca}'$.

To construct the empirical restricted model class, we set the level to
$1 - \alpha = 0.8$, allowing some tolerance for statistical
uncertainty and construct a simultaneous 80\% confidence interval for
the CATE via a Bonferroni correction for the 7 unique values (see
Appendix~\ref{sec:lip} for details on computing the bounds).  We also
restrict the treatment effects to be bounded between $-1$ and 1, since
the outcome is binary.\footnote{This is not the tightest possible
  bound, since the restriction is that
  $0 \leq m(-1,x) + \tau(a,x) \leq 1$. To incorporate the uncertainty
  in estimating $m(-1,x)$ in finite samples we could use analogous
  techniques to those in Section~\ref{sec:opt_data}; we leave this to
  future work.}

For this model class, we need to specify the Lipschitz constants for
the CATE when the flag is and is not triggered ($\lambda_1$ for
$\tau(1,x_\text{nvca}))$ and $\lambda_0$ for $\tau(0, x_\text{nvca})$,
respectively).  We adapt a suggestion from \citet{Imbens2019_optrdd}
for model classes with a bounded second derivative to the Lipschitz
case.  We estimate the CATE function by taking the difference in NVCA
rates with and without provision of the PSA at each level of
$x_\text{nvca}$. Then, we measure the largest consecutive difference
between CATE estimates (0.016 and 0.433 for $a = 0, 1$, respectively).
Finally, we set the Lipschitz constants to be a constant multiple $C$
of this difference yielding $\lambda_0 = C \times 0.016$ and
$\lambda_1 = C \times 0.433$.
Setting $C = 1$ gives the smallest Lipschitz constants supported by the data;
increasing $C$ will be more conservative.

\paragraph{Choosing the utility function.}
Recall that in our parameterization we must define the difference in
utilities when there is and is not an NVCA, $u(a) = u(1,a) - u(0,a)$,
for both actions $a \in \{0,1\}$.
This captures the benefits of avoiding an NVCA.
Countering this benefit is the baseline cost of action $a$, $c(a) = u(0,a)$.
The marginal monetary cost of  triggering the NVCA flag is zero
given the initial fixed cost of collecting the data for the PSA.
However, to the extent
that triggering the NVCA flag increases the likelihood of pre-trial
detention, it will lead to an increase in fiscal costs --- e.g.,
housing, security, and transportation --- for the
jurisdiction. Furthermore, there are potential socioeconomic costs to
the defendant and their community that balance against potential benefits from
avoiding more criminal activity.

To represent these costs, we will place zero cost on not triggering
the NVCA flag, $c(0) = 0$, and a cost of 1 on triggering the flag,
$c(1) = -1$. We then assign an equal utility gain from avoiding an
NVCA, $u(1) = u(0) = u$ (equivalently, the cost of an NVCA is $-u$).
This yields a utility function of the form $u(y,a) = u \times y - a$,
where $u$ is the ratio of the cost of an NVCA to the cost of
triggering the flag.  Choosing a particular value of $u$ is outside
the scope of this paper and indeed would be inappropriate for us to
do: the choice depends on societal preferences and should be arrived
at in a collaborative process between policy-makers in the criminal
justice system and the communities impacted by it.  Instead, we
examine how adjusting the ratio $u$ affects the policies we learn.\footnote{
Note that mathematically one could use a negative cost
of triggering the flag, but this would encourage triggering the flag even if
it would not avoid an NVCA.}

\begin{figure}[t!]
  \centering \vspace{-.25in}
   \vspace{-.2in}
  \begin{subfigure}[t]{0.45\textwidth}  
    \caption{} 
    {\centering \includegraphics[width=\textwidth]{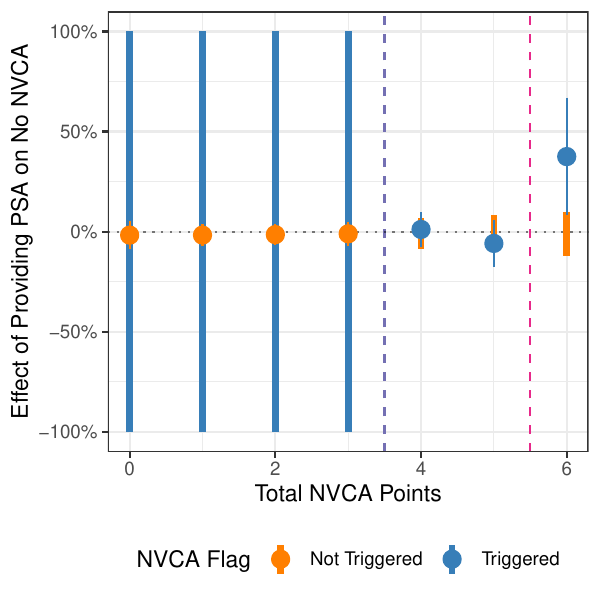}
    }
    \label{fig:nvca_threshold}
     \end{subfigure}
     \begin{subfigure}[t]{0.45\textwidth}  
      \caption{}
      {\centering \includegraphics[width=\textwidth]{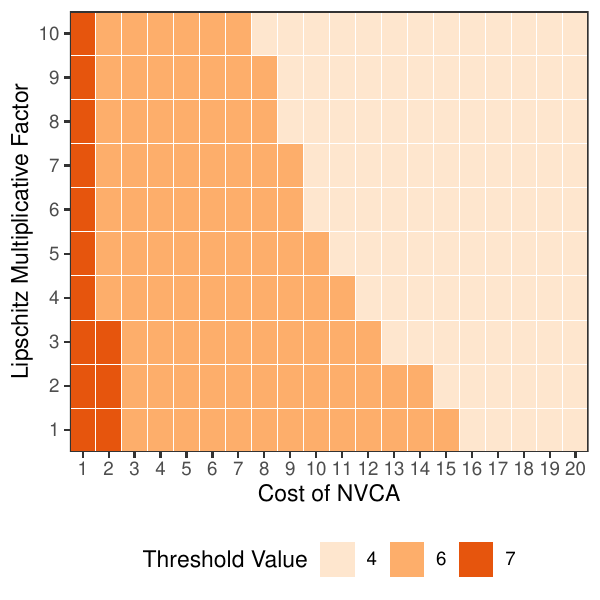} 
      }
      
        \label{fig:threshold_lip}

       \end{subfigure}\quad
   \vspace{-.2in}
   \caption{Learning a new NVCA flag threshold. 
    (a) Empirical restricted model class and maximin
    threshold with a Lipschitz multiplicative factor of $C = 3$.
    The points and thin lines around them are point estimates and a
    simultaneous 80\% confidence interval for the partial CATE
    function $\tau(\tilde{\pi}(x_\text{nvca}), x_\text{nvca})$ when
    the NVCA flag is not triggered ($\tilde{\pi}(x_\text{nvca}) = 0$,
    in orange) and is triggered ($\tilde{\pi}(x_\text{nvca}) = 1$, in
    blue).  The thick solid lines represent the partial identification
    set for the unobservable components of the CATE,
    $\tau(1, x_\text{nvca})$ for $x_\text{nvca} < 4$ and
    $\tau(0, x_\text{nvca})$ for $x_\text{nvca} \geq 4$. The purple
    dashed line represents the baseline policy of triggering the flag
    when $x_\text{nvca} \geq 4$, and the pink dashed line is the
    empirical safe policy that only triggers the flag when
    $x_\text{nvca} \geq 6$. 
    (b) Maximin threshold values solving Eqn~\eqref{eq:maximin_emp_bnd} for the
     NVCA flag threshold rule with a level of $1 - \alpha = 80\%$
     as the cost of an NVCA increases from
     1 to 20 times of the cost of triggering the NVCA flag, and the
     multiplicative factor on the estimated Lipschitz constant varies
     from 1 to 10.  }
\end{figure}

\paragraph{Learning a maximin NVCA threshold.}

Figure~\ref{fig:nvca_threshold} presents the empirical restricted
model class with a particular multiplicative constant of $C=3$ by
showing point estimates and simultaneous 80\% confidence intervals for
the observable component of the CATE function
$\tau^\ast(\tilde{\pi}(x_\text{nvca}), x_\text{nvca})$ and the partial
identification set for the unobservable component. There is
substantially more information when extrapolating the CATE for the
case that the NVCA flag is triggered. This is because the point
estimates do not vary much with the NVCA points, leading to a small
Lipschitz constant.  On the other hand, when extrapolating in the
other direction, there is a large jump in the point estimates between
$x_\text{nvca} = 5$ and $x_\text{nvca} = 6$, leading to a large
Lipschitz constant.  This means that the empirical restricted model
class puts essentially no restrictions on
$\tau^\ast(1, x_\text{nvca})$ for $x_\text{nvca} < 4$.

Estimating this policy requires choosing the Lipschitz
multiplicative factor $C \geq
1$.
We fit the empirical safe policy across a range of values to see if
the results are stable.  Figure~\ref{fig:threshold_lip} shows the
learned thresholds as we vary both the relative cost $u$ of an NVCA
in the utility function and the
multiplicative factor $C$.  When the cost of an NVCA is $u \leq 7$, the data support increasing the threshold to at least 6 even
in the worst case and even with $C = 10$, only triggering the flag for
arrestees with the observed maximum of 6 total NVCA points. The
results for larger costs are more sensitive to the choice of $C$, and
the learned threshold collapses back to the baseline of 4 for
intermediate choices of $C$.

Raising the threshold to $\eta = 6$ is a much more lenient policy than the
status quo,
reducing the number of arrestees flagged as at risk of an
NVCA by 95\%.
We find evidence for such a large change because there is no meaningful effect of
providing the PSA on the absence of an NVCA, except for those arrestees who have
the maximum of 6 points (Figure~\ref{fig:nvca_threshold}).
One possible reason for these small effects is that the judge's behavior is not affected.
This appears to be the case when $x_\text{nvca} \leq 4$: there is little effect
on the judge's bail decision in these cases.
However, for $x_\text{nvca} > 4$, providing the PSA increases cash bail decisions
by over 30 pp (see Appendix~\ref{sec:addl_nvca} for further discussion).
This suggests that PSA provision is leading to additional bail decisions without
a requisite decrease in NVCAs for $x_\text{nvca} = 5$.

Thus, even in the worst case, the threshold could be raised to
$\eta = 6$ without an increase in the NVCA rate that outweighs costs
from triggering the flag.  As we increase the cost of an NVCA,
however, at some point (e.g., $u \geq 13$ for $C = 3$), the cost
becomes large enough, making the empirical safe policy revert to the
status quo with the threshold at $\eta = 4$.

\subsection{Learning new FTA, NCA, and NVCA risk scoring rules}
\label{sec:risk_scores}

We next turn to constructing new, maximin optimal FTA, NCA, and NVCA
risk scores. For each risk score, we focus on the \emph{absence} of
the corresponding negative outcome.

\paragraph{Choosing the policy classes.}
A key consideration is the form of the policy classes used for each
risk score.  One possibility is to allow the policies to be flexible
functions of all the information available in the system.
Although the oracle policy may have a high expected utility in
this case, in finite samples a complex policy can over-fit and
reduce the quality of the safety property in
Theorem~\ref{thm:emp_safety}.  In addition, the oracle policy may be
substantially different from the baseline policy, leading to a large
optimality gap in Theorem~\ref{thm:emp_optimal_gap}.  Lastly, in
real-world applications, policy makers might be reluctant to adapt
the existing system to an entirely new policy. For these reasons,
we use the same set of risk factors and focus on changing the weight
applied to each risk factor (see Appendix Table~\ref{tab:nvca_weights}).

For each risk score, we formally describe the status quo rule as
consisting of a vector of integer-valued weights $\tilde{\boldsymbol\theta}$ on
the risk factors $\bx$, and a mapping from the linear combination of the
risk factors $\tilde{\boldsymbol\theta}^\top \bx$ to the $K$ risk levels via
thresholds.
We consider the policy class that consists of all
possible vectors of integer-valued weights and all possible
thresholds, restricting to inter-valued weights in order to mimic the structure
of the existing risk scores.
For example, recall that the NVCA flag has $K=2$ risk
levels (a flag for elevated NVCA risk), 7 binary risk factors, and the
baseline policy is
$\tilde{\pi}(\bx) = \bbone\left\{\sum_{j=1}^7 \tilde{\theta}_j x_j \geq
  4\right\}$.
We then write the corresponding NVCA flag policy class
as:
\begin{equation}
  \label{eq:integer_weight_policy}
  \Pi_{\text{nvca}} = \left\{\pi(\bx) = \bbone\left\{\sum_{j=1}^7 \theta_j x_j \geq \eta \right\} \; \left \vert  \vphantom{\sum_{j=1}^7 }\right. \; \theta_j \in \Z, \eta \geq 0 \right\}.
\end{equation}

This policy class includes the original NVCA flag rule as a special
case.  We can construct the policy classes for the FTA and NCA rules
similarly by including multiple thresholds (see
Appendix~\ref{sec:addl_fta_nca} for a formal definition).  To simplify
comparisons to the status quo and avoid identifiability issues, we
will primarily constrain the thresholds $\eta$ to be equal to the
status quo values.  This allows us to understand any differences from
the status quo rule by comparing the learned weight vector to the
baseline weight vector $\tilde{\boldsymbol\theta}$.  With this policy class, the
optimization problem is a mixed integer program; we solve this with
the Gurobi solver.

\paragraph{Choosing the model class.}
In contrast to changing only the NVCA threshold above, here the CATE is a
function of multiple binary variables. A natural way to characterize
the complexity of such models is by the number and strength of
interaction terms between the variables.  We focus on the two simplest
models: an additive effect model
$\mcal{T}_{\text{add}} \equiv \left\{\tau(a, \bx) =
  \sum_{j}\tau_{aj}x_j\right\}$ and a second order effect model
$\mcal{T}_{\text{two}} \equiv \left\{\tau(a, \bx) = \sum_{j}\sum_{k <
    j} \tau_{ajk}x_jx_k \right\}$. Because the covariates are
discrete, we can write these using linear models.  We again restrict
the treatment effects to be bounded between $-1$ and 1.  These two
model classes lead to different restrictions and ultimately affect
what policies we learn from the experiment (see Appendix~\ref{sec:glm}
for details). This is partly because even with infinite data the
models may not be identifiable.  But it is also because with finite
data there is a different amount of uncertainty in each model class.

\begin{figure}[t!]
  \centering \vspace{-.25in}
    \includegraphics[width = 0.8\maxwidth]{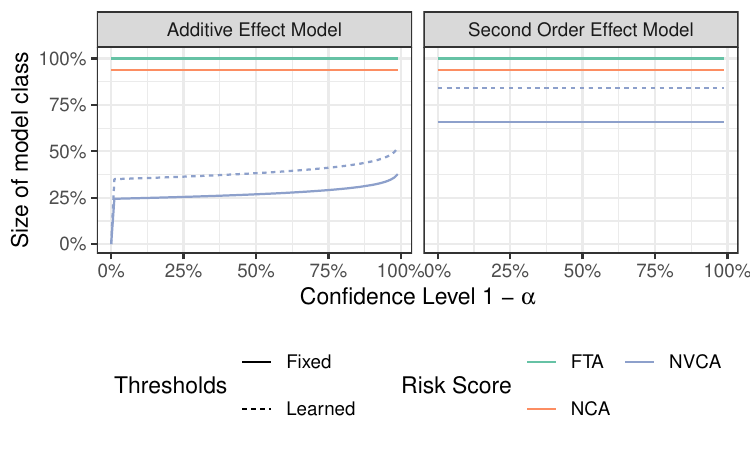}
    \vspace{-.25in}
    \caption{The size (as a percentage of its maximum
      value) of two different model classes with respect to the
      linear threshold policy class versus the confidence
      level $1-\alpha$ for the FTA (green), NCA (orange), and NVCA (purple)
      scoring rules. The dashed purple line shows the size for the NVCA model
      class when the threshold is included as a decision variable and learned
      in addition to the weights.}
  \label{fig:nvca_widths}
\end{figure}

To diagnose the amount of information available in each model class,
we use the size measure $\widehat{\mathcal{S}}(\widehat{\mathcal{T}}_n(\alpha), \Pi; \tilde{\pi})$.
Figure~\ref{fig:nvca_widths} depicts this information by showing how
the size of the model class (vertical axis), changes with the desired
confidence level $1 - \alpha$ (horizontal axis) for each risk score
and model class. We
also show the difference in the size for the NVCA rule when fixing the
threshold to the existing value versus including it as a decision variable.

There is a stark difference in the amount of information between the
different risk scores within the same model class.  Under the additive
model for the NVCA rule, the size is zero when the confidence level is
zero, implying that this model is identifiable. This is due to the
structure of the NVCA flag rule: for any given value of a covariate,
it is possible to observe cases with the flag set to zero or one. When
accounting for the statistical uncertainty, the size increases, but it
is substantially smaller than the size for the FTA and NCA rules, both
of which are near or at the maximum value of 2.  Because these risk
scores have 6 levels, we would need to observe cases with all 6 levels
for any given value of a covariate in order to identify the additive
model.  Overall, there is little information available to support changing
the FTA and NCA scoring rules.

Turning to the second order treatment effect model for the NVCA score,
which makes weaker assumptions, we find that it is likely too large a
class for us to learn a new NVCA rule, with roughly twice the size as
for the additive effect model.  This is because there are several
pairs of variables that always trigger the NVCA flag (e.g., if
both the current offense is violent and the arrestee has 3 or more
prior violent convictions).
Finally, we observe that increasing the
flexibility of the policy class by including the threshold as a
decision variable rather than keeping it fixed increases the size
because it is a function of \emph{both} the model class and the policy
class.

These diagnostics point to focusing on the NVCA score with an additive
effect model. There is likely not enough information to make any changes to
the NCA and FTA scores under either model, and the 2\super{nd} order effect
model for the NVCA flag is not well enough identified. 
However, in Appendix~\ref{sec:addl_nvca}, we find some
evidence for the existence of interactions for the NVCA score via classical
model testing procedures.
Therefore, we caution
over-interpreting our results.  For completeness, we show these results
in Appendix~\ref{sec:addl_results} and indeed find that the optimal
solution for the worst case is to not deviate from the status quo
rules.

\paragraph{Choosing the utility function.}
We use the same utility parameterization as
in Section~\ref{sec:nvca_threshold}.  For this value function,
the marginal decrease in the utility from triggering the flag
is constant regardless of the proportion of arrestees that are
classified as an NVCA risk. However, higher
levels of pre-trial incarceration can have additional negative impacts
on the community above and beyond the cost to the individual. In
Appendix~\ref{sec:addl_nvca}, we include an
additional penalty to triggering the NVCA flag that scales with the
proportion of arrestees classified as being at risk.

\paragraph{Learning a maximin NVCA flag.}

\begin{figure}[t!]
  \centering\vspace{-.5in}
        \begin{subfigure}[t]{0.45\textwidth}  
    \caption{Varying the confidence level}
    {\centering \includegraphics[width=\maxwidth]{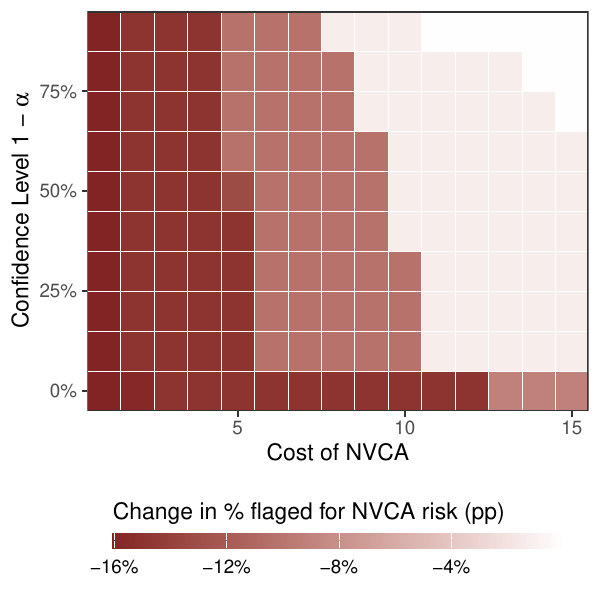} 
    }
    \label{fig:pct_diff_lvl}
  \end{subfigure}
    \begin{subfigure}[t]{0.5\textwidth}  
   \caption{Maximin NVCA flag weights} 
  {\centering \includegraphics[width=\textwidth]{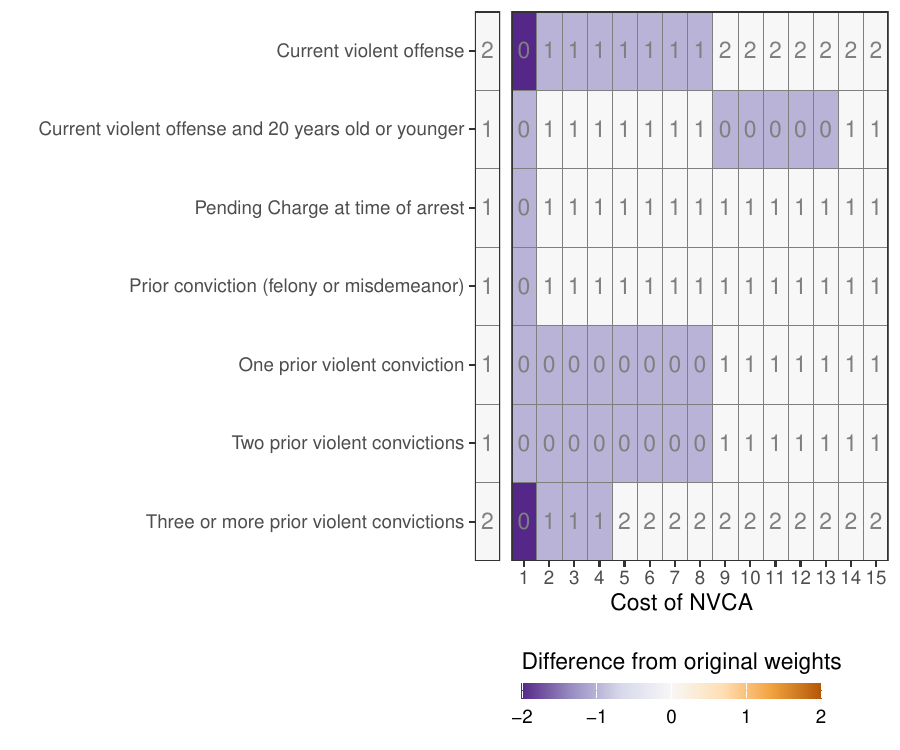} 
  }
    \label{fig:coef_by_cost}
    \end{subfigure}\quad
  \vspace{-.2in}
  \caption{(a) The percentage point difference in the proportion of
    arrestees flagged for NVCA risk between the maximin policy and the
    original NVCA score as the cost of an NVCA increases from 1 to 15
    times of the cost of triggering the NVCA flag and the confidence
    level varies between 0\% and 100\%.  (b) Change in Maximin NVCA
    flag weights $\theta$ (in Eqn~\eqref{eq:integer_weight_policy}) as
    the cost of an NVCA increases from 1 to 15 times the cost of
    triggering the NVCA flag, at a confidence level of
    $1 - \alpha = 80\%$.}
  \label{fig:nvca_results_lvl}
\end{figure}

Figure~\ref{fig:pct_diff_lvl} presents the changes to the original
rule made by the maximin policy that solves the optimization problem
given in Eqn~\eqref{eq:maximin_emp_bnd} under the additive treatment
effect class $\mcal{T}_{\text{add}}$.  The changes are shown in terms
of the proportion of arrestees flagged for an NVCA risk as we vary the
cost of an NVCA $-u$ and the confidence level $1 - \alpha$.  Across
every confidence level, the maximin policy differs less and less from
the original rule as the cost of an NVCA increases, moving from never
triggering the flag at a 1--1 cost to eventually collapsing back to the
status quo if the cost crosses an $\alpha$ dependent threshold.  For a
given cost of an NVCA, policies at lower confidence levels are more
aggressive in deviating from the original rule, prioritizing a
potentially lower regret relative to the (infeasible) optimal policy
at the expense of a higher chance that the new policy is worse than
the original rule.\footnote{Except for when the cost of an NVCA is
  greater than 12 and the confidence level is 0\%, the maximin
  policies do not trigger the flag when the original rule does not.}

Figure~\ref{fig:coef_by_cost} shows the integer weights on the risk
factors for the maximin policy at the $1 - \alpha = 80\%$ level as the
cost of an NVCA increases. In the limiting setting where an NVCA is
given the same cost as triggering the NVCA flag, the maximin policy
never triggers the flag because it cannot be worth the cost.  Once the
cost is at least 14 times the cost of triggering the flag, the learned
policy reduces to the original rule.  In light of the sizes shown in
Figure~\ref{fig:nvca_widths}, this behavior is primarily due to
increased uncertainty in the effect of triggering the NVCA flag.  If
the policy maker were to set the cost of an NVCA above a certain
point, any change in the policy would be too risky to act upon.  For
intermediate values, the learned policy places less weight on the
number of prior violent convictions and whether the current offense is
violent than the original rule.
In Appendix~\ref{sec:addl_nvca}, we consider a more flexible policy
class that includes additional risk factors.

\subsection{Learning a new DMF matrix for bail recommendation}
\label{sec:dmf_matrix}

Finally, we analyze the overall recommendation given by the DMF matrix
(see Figure~\ref{fig:dmf_matrix}). This aggregates the FTA and NCA
scores into an overall recommendation on the level of cash bail and
pre-trial supervision and monitoring conditions.  Below, we focus on
using the absence of an NVCA as the primary outcome.

\paragraph{Choosing the policy class.}
We consider constructing a new DMF matrix based on the FTA and NCA
scores, which we combine into a vector
$(x_{\text{fta}}, x_{\text{nca}}) \in \{1,\ldots,6\}^2$, restricting
our analysis to the 1,544 cases that used the DMF matrix rather than
those for whom cash bail was automatically recommended. We will focus on a
policy class that keeps the structure of the existing rule encoded by
the DMF matrix. An important aspect of the rule is that it is
\emph{monotonic}; the risk level cannot decrease if either the FTA or
NCA score increases.  Formally, we can encode the monotonic policy
class as,
$ \Pi_{\text{mono}} \equiv \left\{\pi((x_{\text{fta}},
  x_{\text{nca}})) \leq \pi((x_{\text{fta}} + 1, x_{\text{nca}})) \;
  \text{ and } \; \pi((x_{\text{fta}}, x_{\text{nca}})) \leq
  \pi((x_{\text{fta}}, x_{\text{nca}} + 1))\right\}$.  Again, this
leads to an integer program, which we solve via the Gurobi solver. We
will consider four variations of the policy: (i) the overall
risk level from 1 to 7; (ii) the quaternary recommendation of a signature bond,
modest cash bail, moderate cash bail, or (full) cash bail; (iii) the ternary
recommendation that combines modest and moderate cash bail; and (iv) the binary
recommendation that
collapses together all cash bail recommendations.

\paragraph{Choosing the model class.}
We again focus on the class of additive treatment effect models
$\tau_{\text{add}}(a,\bx) \ = \ \tau_{\text{fta}}(a, x_{\text{fta}}) +
\tau_{\text{nca}}(a, x_{\text{nca}})$.  We only condition on the FTA
and NCA scores since they are the two components of the DMF decision
matrix.  Because $x_{\text{fta}}$ and $x_{\text{nca}}$ are discrete
with six values, we can further parameterize the additive terms as
six-dimensional vectors. Importantly, this rules out interactions
between the FTA and NCA scores in the effect. In Appendix~\ref{sec:addl_dmf}
we test for the presence of interactions and do not find evidence against
the null of an additive model.\footnote{Note that we
  could also use a Lipschitz restriction as in
  Section~\ref{sec:nvca_threshold}.  This alternative assumption may
  be significantly weaker, though it would require choosing the
  Lipschitz constant.}

\begin{figure}[t!]
  \centering\vspace{-.5in}
        \begin{subfigure}[t]{0.4\textwidth}  
    \caption{Empirical size}
    {\centering \includegraphics[width=\maxwidth]{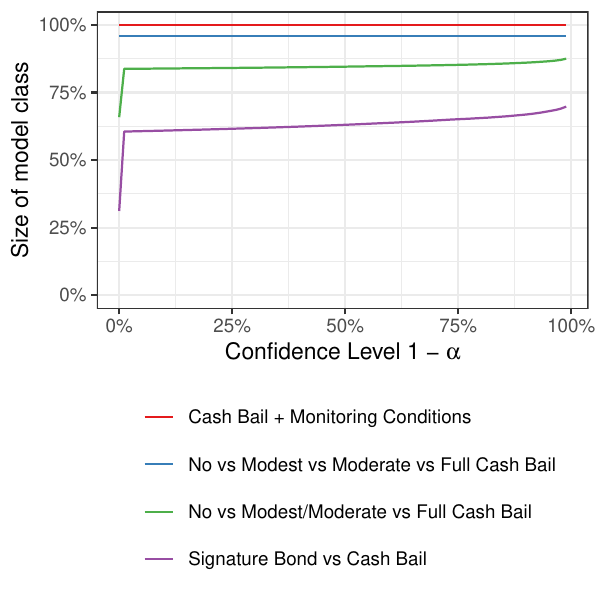} 
    }
    \label{fig:dmf_size}
  \end{subfigure}
  \begin{subfigure}[t]{0.45\textwidth}
    \caption{Maximin DMF Matrices}
    \vspace{-.45in}
  {\centering \includegraphics[width=1.2\textwidth]{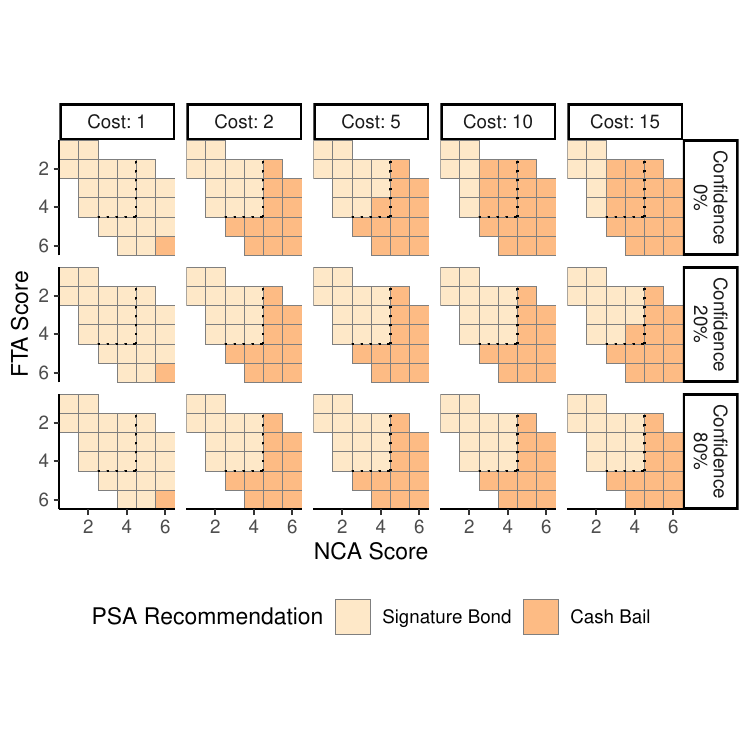} 
  }
    \label{fig:diff_recs_release}
    \end{subfigure}\quad
  \vspace{-.75in}
  \caption{(a) The size (as a percentage of the maximum value) of
    the additive model class with respect to the monotone policy class
    as the confidence level varies for cash bail recommendation policies,
    collapsing together successively more gradations on bail. The coarsest
    policy---Signature Bond vs any Cash Bail---has the most information available.
    (b) Maximin monotone cash bail recommendations under an additive
    model for the treatment effects, as the cost of an NVCA and the
    confidence level vary. The dashed black line indicates the
    original decision boundary between a signature bond (above and to
    the left) and cash bail (below and to the right).
    The original decision boundary is modified only when the
    cost and confidence are low.}
  \label{fig:dmf_results}
\end{figure}

Figure~\ref{fig:dmf_size} presents the size of this
model class relative to the monotone policy class for the three types of
recommendations as we vary the confidence level for the three types of
DMF recommendations. There is no information to
learn reliably a new fine-grained overall risk score or quaternary bail recommendation.
This is due to the structure of the DMF matrix: some risk
levels are only possible for a single NCA score, and others
(such as the moderate cash bail condition) only for a
single combination of FTA and NCA scores.

Therefore, we focus here on the binary cash bail recommendation, where the size 
of the model class is large, but smaller than for the ternary bail recommendation.
This is because we can
never observe a case where the DMF recommends a signature bond with
\emph{either} an FTA score or NCA score above 4, nor can we observe a
case where the DMF recommends cash bail with either an FTA score below
2 or an NCA score below 3.  In the middle is an intermediate area with
FTA scores between 2 and 4 and NCA scores between 3 and 4 where we can
fully identify the effect of assigning cash bail under the additive
model.  For this intermediate area, there is a significant amount of
uncertainty due to small sample sizes: there are only 3 cases where
cash bail is recommended that have an NCA score of 3.  Appendix
Figure~\ref{fig:dmf_bounds} visualizes this uncertainty.

\paragraph{Choosing the utility function.}
We follow
Sections~\ref{sec:nvca_threshold} and \ref{sec:risk_scores}, and
parameterize the utility as a fixed cost of 1 for
recommending cash bail  and varying the cost of an NVCA.

\paragraph{Learning a maximin DMF Matrix.}
Figure~\ref{fig:diff_recs_release} shows the learned policies when
varying the cost of an NVCA and different confidence levels. In the
limiting case where the cost of an NVCA is equal to recommending cash
bail, the safe policy recommends cash bail only for the most
extreme cases.
In the other limiting case, where we set the confidence level to 0 and
rely on the the point estimates directly rather than accounting for
the statistical uncertainty,
increasing the cost of an NVCA
leads to more of the
intermediate area with FTA scores between 2 and 4 and NCA scores
between 3 and 4 being assigned cash bail until the cost is high enough
that the entire identified area is assigned cash bail. However, this
does not hold up to even the slightest degree of statistical uncertainty due
to the uncertainty in the treatment effects. Because the effects of
assigning cash bail are both small and uncertain, the learned policy
reduces to the existing DMF matrix.

\section{Discussion}
\label{sec:discussion}

Data-driven algorithmic policies and recommendations
have become an integral part of our society.  An important challenge
when learning a new policy is to
ensure that it does not perform worse than the existing one.  In
settings like ours where decisions are highly consequential,
policy makers should be able to limit the probability
that a new algorithmic recommendation system achieves a worse outcome than
the existing system.
This is particularly essential when it is impossible to randomize the
algorithm output for ethical and logistical reasons. The lack of
identification necessitates extrapolation, making it
impossible to learn a new policy using standard statistical methods.

We address these challenges by partially identifying
the value of potential policies.  Since this leads to a
decision-making problem
under ambiguity, we use the maximin criterion that selects the best
policy in the worst case. Our methodology has a statistical safety
property: if we make correct structural assumptions about the true model,
the resulting policy will not be worse than the
status quo policy with some probability, up to sampling uncertainty.

Our goal is to understand what changes to the PSA-DMF
recommendation system should be made, if any. 
We do not find strong
support to change the existing FTA and NCA scores, nor the
overall risk score and bail recommendation.  This is due to a
confluence of factors.  Foremost is the conservative nature of
the maximin criterion that yields a strong bias towards the
status quo. However, we emphasize that
failing to find strong evidence to change the status quo
policy does not necessarily imply that the status quo is
desirable.

With the conservative criterion, our analysis is not informative about
the FTA and NCA scores and the overall risk level due to the design of
these algorithms. They have many fine gradations and in some cases
only a single unique combination of inputs can lead to a particular
output. This means that there is little to extrapolate from and
the bounds are uninformative, even
with strong structural assumptions.
In contrast, our analysis is not informative about the binary bail
recommendation due to a combination of identifiability issues and
limited sample sizes.  With an additive model, we can only identify
impacts for cases with intermediate FTA and NCA scores, but the sample
sizes in this intermediate area are too small to make strong
conclusions.

However, the data do support altering the NVCA flag,
even with this conservative criterion,
either by raising the threshold or by putting
less weight on violent convictions and offenses.
Both of these would lead to a more lenient rule that flags
fewer arrestees, and the data support these changes even when the cost of
an NVCA is 8--13 times the cost of triggering the flag.
\citet{Stevenson2022} present survey evidence showing that 50\% of individuals
rate being a victim of an assault as bad as between 5 days and 6 months of
detention; implying a cost ratio for one month of detention between
$\frac{1}{6}$ and 6.
Choosing any ratio within this range would lead to a change to the system,
though since our focus is on triggering the flag, rather than detention,
a larger benchmark may be more appropriate. 

Our analysis serves as an initial proof of concept, probing
various elements of the existing risk assessment system.
As such, there are several limitations and various ways that the analysis has
been simplified. In particular, we place costs on the algorithmic outputs
(the PSA recommendations) rather than on the resulting human decisions (the
judge's bail decision). In Appendix \ref{sec:human_decisions}, we
directly incorporate the costs of the judges'
decisions, but find that this adds too much statistical
uncertainty to improve reliably upon the existing rule.

Another limitation of our analysis in
Section~\ref{sec:risk_scores} is that we separately consider
each outcome and its risk score. Since each risk score can effect each outcome,
 all pairs of risk scores and outcomes could be
considered.  This issue is also present in our analysis of the
DMF matrix (Section~\ref{sec:dmf_matrix}), where we focus
on NVCAs but the bail recommendations can impact all three
outcomes.  A fuller analysis may consider all three risk
scores and the bail recommendation simultaneously for all
three outcomes, using a utility function that incorporates all
of the outcomes and includes measures of costs such
as economic and social outcomes.  However, such an analysis
may not be informative, given the limitations
 in the design
and the data 
discussed above.

An important limitation of our methodology is that
learning algorithmic policies requires making many non-trivial
choices. For example, focusing on the simple setting of
changing the NVCA flag threshold in
Section~\ref{sec:nvca_threshold} requires (a) specifying a
model class; (b) specifying a significance level; and (c)
choosing a utility function, among other things.
The analyses in
Sections~\ref{sec:risk_scores}~and~\ref{sec:dmf_matrix}
include even more involved analytical and implementation
decisions.  Therefore, it is important to examine the
sensitivity of empirical results to these choices.

Choosing the model class can be difficult. With randomized evaluations 
of the status quo policy, simple treatment effect structures may be plausible
because treatment effects are often far less heterogeneous than
baseline outcomes.
We inspected the sensitivity and stability of the maximin policy
to modeling choices and hyper-parameters, such as the choice of
Lipschitz constant.
Formalizing these heuristics is an important direction for future work.

Another key choice is the policy class to optimize over.
We recommend choosing a policy class that can lead to
limited adjustments to the baseline policy rather than wholesale
changes.  While more flexible policy classes could yield better
results, we are unlikely to achieve them, and large changes to
existing systems may not be practically feasible.

Finally, our methodological approach has a wide range of
potential applications.  For transparency and
interpretability, many data-driven algorithms in public policy and
medicine are based on known, deterministic rules rather than
randomized rules.  Examples include the SNAP eligibility rule,
the MELD score for liver transplantation, and other risk assessment instruments
used across public policy contexts
\citep[see][and references therein]{Coston2020}. These instances will all have identifiability issues
due to lack of overlap, and our methodology addresses this challenge
by learning a new, safe policy that improves upon the status quo.

If the algorithm is designed in such a way that there is little to
extrapolate from---as was the case for the FTA and NCA scores---our approach is unlikely to be informative.
Our methodology may be more effective when the baseline policy
includes multiple inputs, each with a large region where multiple actions are
possible.
This can be true when there are group-specific thresholds
for a common risk score or decision variable, for example as with
school enrollment and loan access, and income limits for social
programs \citep{Zhang2022_safe}.
However, different studies may require other 
implementation details.
For instance, our study only includes discrete covariates;
incorporating continuous covariates will require additional
implementation work. In
addition,
analyzing continuous outcomes with non-linear utility functions,
incorporating other criteria such as
fairness measures,
or changing the optimality criterion to minimax regret,
would require additional implementation and analysis.

\appendix

\onehalfspacing

\renewcommand\thefigure{\thesection.\arabic{figure}}    
\setcounter{figure}{0}  
\numberwithin{figure}{section}
\renewcommand\theassumption{\thesection.\arabic{assumption}}    
\setcounter{assumption}{0}  
\renewcommand\theexample{\thesection.\arabic{example}}    
\setcounter{example}{0}
\numberwithin{equation}{section}
\setcounter{theorem}{0}
\numberwithin{theorem}{section}
\setcounter{proposition}{0}
\numberwithin{proposition}{section}
\setcounter{lemma}{0}
\numberwithin{lemma}{section}
\setcounter{corollary}{0}
\numberwithin{corollary}{section}
\setcounter{table}{0}
\numberwithin{table}{section}

\clearpage

\section{Additional theoretical results}

\subsection{Population optimality gap}
\label{sec:pop_opt_gap}
We define the population width of function class $\mathcal{F}$ as
\[
  \mcal{W}_{\mcal{F}}(g) \  = \ \sup_{f \in \mcal{F}} \E\left[\sum_{a \in \mathcal{A}} f(a, X) g(a, X)\right] - \inf_{f \in \mcal{F}} \E\left[\sum_{a \in \mathcal{A}} f(a, X) g(a, X)\right].
\]
Given this definition, the following theorem shows that the population
optimality gap is bounded by the width of the function class.
\begin{theorem}[Population optimality gap]
  \label{thm:pop_optimal_gap} \singlespacing Let $u(a) = u > 0$ for
  all actions $a \in \mcal{A}$, and $\pi^{\inf}$ be a solution to
  Eqn~\eqref{eq:maximin}. If $m^\ast \in \mcal{M}$, the regret of
  $\pi^{\inf}$ relative to the optimal policy
  $\pi^\ast \in \argmax_{\pi \in \Pi} \; V(\pi)$ is
  \[
    \frac{V(\pi^\ast) - V(\pi^{\inf})}{u} \leq \mcal{W}_{\mcal{M}}(\pi^\ast(1-\tilde{\pi})).
  \]
\end{theorem}

\subsection{Extension of Theorems~\ref{thm:emp_safety} and \ref{thm:emp_optimal_gap} to the case where $\alpha = 1$}
\label{sec:alpha_1}

In this section we extend Theorems~\ref{thm:emp_safety} and \ref{thm:emp_optimal_gap} to include results for the case where we set $\alpha = 1$. We do so by providing bounds that hold regardless of the level $\alpha$.
In addition, we also provide a tighter bound on the optimality gap involving the difference between the true optimal policy $\pi^\ast$ and the
baseline policy $\tilde{\pi}$.
For clarity, we present them with the bounds in Theorems~\ref{thm:emp_safety} and \ref{thm:emp_optimal_gap} restated.

For a model class define
the empirical support function as
\[h_\mathcal{F}(z) \equiv \sup_{f \in \mathcal{F}} \frac{1}{n}\sum_{i=1}^n\sum_{a \in \mathcal{A}} z_{ia} f(X_i, a),\]
where $z = (z_{10},\ldots,z_{1K-1}, \ldots, z_{n0}, \ldots, z_{nK-1})$ is a length $n(K-1)$ vector.

\begin{theorem}[Statistical safety (with $\alpha = 1$)]
  \label{thm:emp_safety_full} \singlespacing
  If the baseline policy
  $\tilde{\pi} \in \Pi$ and the true conditional expectation
  $m^\ast(a,x) \in \mcal{M}$, for any $0 < \delta \leq e^{-1}$, the value of
  $\hat{\pi}^\text{inf}$ relative to the baseline $\tilde{\pi}$ is,
  \begin{align*}
    V(\tilde{\pi})\ - V(\hat{\pi}) & \leq   \ 6 C (K-1) \left[\max_{a} \mathcal{R}_n(\Pi_a) + 2\sqrt{\frac{1}{n}\log \frac{K-1}{\delta}} \right]\\
    & \qquad + \sup_{\pi \in \Pi} |h_{\widehat{\mathcal{M}}_n(\alpha)}\left(-\pi (1 - \tilde{\pi}) u(\cdot)\right) - h_\mathcal{\mathcal{M}}\left(-\pi (1 - \tilde{\pi}) u(\cdot)\right)|,
  \end{align*}
  with probability at least $ 1- \delta$,
  where $C = \max_{y \in \{0,1\}, a \in \{0, 1\}} |u(y, a)|$.
\end{theorem}

For the point-wise bounded model class that we consider, the extra term simplifies to be the worst-case difference between the true lower bound and the estimated lower bound.

\begin{corollary}[Statistical safety (with $\alpha = 1$)]
  \label{cor:emp_safety_full_pointwise} \singlespacing
  Under the setting of Theorem~\ref{thm:emp_safety_full},
  let the restricted model class $\mathcal{M}$ and the empirical restricted model class  $\widehat{\mathcal{M}}_n(\alpha)$ consist of point-wise bounded functions,  $\mathcal{M} = \{f: \mathcal{A} \times \mcal{X} \to \R \; \mid \; B_{\ell}(a,x) \leq f(a, x) \leq \B_{u}(a, x)\}$ and  $\widehat{\mathcal{M}}_n(\alpha) = \{f: \mathcal{A} \times \mcal{X} \to \R \; \mid \; \widehat{B}_{\alpha\ell}(a,x) \leq f(a, x) \leq \widehat{B}_{\alpha u}(a, x)\}$.
  Then the value of
  $\hat{\pi}^\text{inf}$ relative to the baseline $\tilde{\pi}$ is,
  \begin{align*}
    V(\tilde{\pi})\ - V(\hat{\pi}) & \leq   \ 6 C (K-1) \left[\max_{a} \mathcal{R}_n(\Pi_a) + 2\sqrt{\frac{1}{n}\log \frac{K-1}{\delta}} \right] + 2C \sup_{a, x}|\widehat{B}_{\alpha \ell}(a,x) - B_\ell(a, x)|,
  \end{align*}
  with probability at least $ 1- \delta$,
  where $C = \max_{y \in \{0,1\}, a \in \{0, 1\}} |u(y, a)|$.
\end{corollary}

\begin{theorem}[Optimality gap (with $\alpha = 1$)]
  \label{thm:emp_optimal_gap_full} \singlespacing
  Let $u(a) = u > 0$ for all actions.
  If the true conditional expectation $m^\ast \in \mcal{M}$, then for any $0 < \delta \leq e^{-1}$ the optimality gap is

  \begin{align*}
    V(\pi^\ast) - V(\hat{\pi}^\text{inf}) 
      & \ \leq \ 2C \widehat{\mcal{W}}_{\widehat{\mcal{M}}_n(\alpha)}\left(\pi^\ast (1 - \tilde{\pi})\right)   +6 C (K-1) \left[\max_{a} \mathcal{R}_n(\Pi_a) + 2\sqrt{\frac{1}{n}\log \frac{K-1}{\delta}} \right]\\
    & \qquad +2C\sup_{\pi \in \Pi} |h_{\widehat{\mathcal{M}}_n(\alpha)}\left(-\pi (1 - \tilde{\pi})\right) - h_\mathcal{\mathcal{M}}\left(-\pi (1 - \tilde{\pi})\right)|,
  \end{align*}
  with probability at least $1 - \delta$,
  where $C = \max_{y \in \{0,1\}, a \in \{0, 1\}} |u(y, a)|$.
\end{theorem}

The statement similarly simplifies under the point-wise bounded setting.
\begin{corollary}[Optimality gap (with $\alpha = 1$)]
  \label{cor:emp_optimal_gap_full_pointwise} \singlespacing
  Under the setting of Theorem~\ref{thm:emp_optimal_gap_full},
  let the restricted model class $\mathcal{M}$ and the empirical restricted model class  $\widehat{\mathcal{M}}_n(\alpha)$ consist of point-wise bounded functions,  $\mathcal{M} = \{f: \mathcal{A} \times \mcal{X} \to \R \; \mid \; B_{\ell}(a,x) \leq f(a, x) \leq \B_{u}(a, x)\}$ and  $\widehat{\mathcal{M}}_n(\alpha) = \{f: \mathcal{A} \times \mcal{X} \to \R \; \mid \; \widehat{B}_{\alpha\ell}(a,x) \leq f(a, x) \leq \widehat{B}_{\alpha u}(a, x)\}$.
  Then for any $0 < \delta \leq e^{-1}$, the optimality gap is
  \begin{align*}
    V(\pi^\ast) - V(\hat{\pi}^\text{inf}) 
      & \ \leq \ 2C \widehat{\mcal{W}}_{\widehat{\mcal{M}}_n(\alpha)}\left(\pi^\ast (1 - \tilde{\pi})\right)   +6 C (K-1) \left[\max_{a} \mathcal{R}_n(\Pi_a) + 2\sqrt{\frac{1}{n}\log \frac{K-1}{\delta}} \right]\\
    & \qquad + 2C\sup_{a, x}|\widehat{B}_{\alpha \ell}(a,x) - B_\ell(a, x)|,
  \end{align*}
  with probability at least $1 - \delta$,
  where $C = \max_{y \in \{0,1\}, a \in \{0, 1\}} |u(y, a)|$.
\end{corollary}

\subsection{Learning from experiments evaluating a deterministic
  policy: a generic form of value function}

Below we state a generic form of the value function with access to
experimental data as in Section~\ref{sec:ab_test}. The first line
shows how to write the value of $\pi$ in terms of the true CATE
$\tau^\ast$ and the conditional expected outcome under the null action
$m^\ast(-1, x)$.  The second line further shows how to identify this
expression with observable data via inverse probability weighting.
\begin{proposition}
  \label{prop:ab_test_utility}

  If $Z \perp \!\! \perp Y(a) \mid X$ and $0 < e(x) < 1$, then the expected utility can be written as
  \begin{align*}
    V(\pi) &= \E\left[\sum_{a \in
    \mathcal{A}} \pi(X, a) \{u(a) \tau^\ast(a, X) + c(a) + u(a) m^\ast(-1, X)\}\right]\\
    & = \E\left[\sum_{a \in \mathcal{A}} \pi(X, a) 
      u(a) \left[\tilde{\pi}(X, a) \left(\Gamma(1,X,Y) - \Gamma(0, X, Y)\right) +  c(a) +  u(a) \Gamma(0, X, Y)\right]\right]\\
      & \qquad +  \E\left[\sum_{a \in \mathcal{A}} \pi(X, a) u(a)\left\{1 - \tilde{\pi}(X,a)\right\}
       \tau^\ast(a, X)\right] ,
  \end{align*}
  where $\Gamma(Z, X, Y) \equiv Y\{Z(1-2e(X)) + e(X)\}/\{e(X)(1 - e(X))\}$ is 
  the inverse probability weighted outcome.
\end{proposition}
Note that when the utility gain is constant, ($u(a) = u$ for all $a \in \mathcal{A}$),
we have that
\[
  \E\left[\sum_{a \in \mathcal{A}} \pi(X, a) u(a) m(-1, X)\right] = u\E[m^\ast(-1, X)],
\]
and does not depend on the policy, because $\sum_{a \in \mathcal{A}} \pi(X, a) = 1$.

\section{Proofs}

\begin{proof}[Proof of Proposition \ref{prop:pop_safety}]
  \[
  V(\tilde{\pi}) = V^{\inf}(\tilde{\pi}) \leq V^{\inf}(\pi^{\inf}) \leq V(\pi^{\inf}).
\]
\end{proof}

\begin{proof}[Proof of Theorem \ref{thm:pop_optimal_gap}]
  Since $V^{\inf}(\pi) \leq V(\pi)$ for all policies $\pi$, the regret is bounded by
  \[
    \begin{aligned}
      V(\pi^\ast) - V(\pi^{\inf}) & \leq V(\pi^\ast) - V^{\inf}(\pi^{\inf})\\
      & = V(\pi^\ast) - V^{\inf}(\pi^\ast) + V^{\inf}(\pi^\ast) - V^{\inf}(\pi^{\inf}).
    \end{aligned}
  \]
  Now since $\pi^{\inf}$ is a maximizer of $V^{\inf}(\pi)$, $V^{\inf}(\pi^\ast) - V^{\inf}(\pi^{\inf}) \leq 0$. Now note that for any $\pi$, 
  \begin{align*}
    V(\pi) & = \E\left[\sum_{a \in \mathcal{A}}\pi(X, a) \tilde{\pi}(X, A) (u(a) m^\ast(a, X) + c(a))\right] \\
    &\hspace{0.2in}+ \E\left[\sum_{a \in \mathcal{A}}\pi(X, a) (1 - \tilde{\pi}(X, A)) (u(a) m^\ast(a, X) + c(a))\right] \\
    & = V(\tilde{\pi}) + \E\left[\sum_{a \in \mathcal{A}}\pi(X, a) (1 - \tilde{\pi}(X, A)) (u(a) m^\ast(a, X) + c(a))\right].
  \end{align*}

  This yields that
  \begin{align*}
    V(\pi^\ast) - V(\pi^{\inf}) & \leq \sum_{a \in \mcal{A}} u(a) \E\left[\pi^\ast(X, a)\{1 - \tilde{\pi}(X, a)\} m^\ast(a,X)\right]\\
    & \qquad \qquad - \inf_{f \in \mcal{M}} \sum_{a \in \mcal{A}} u(a) \E\left[\pi^\ast(X, a)\{1 - \tilde{\pi}(X, a)\} f(a,X)\right]\\
    & \leq \sup_{f \in \mcal{M}} \left\{\sum_{a \in \mcal{A}} u(a) \E\left[\pi^\ast(X, a)\{1 - \tilde{\pi}(X, a)\} f(a,X)\right]\right\}\\
    & \qquad  - \inf_{f \in \mcal{M}} \left\{\sum_{a \in \mcal{A}} u(a) \E\left[\pi^\ast(X, a)\{1 - \tilde{\pi}(X, a)\} f(a,X)\right]\right\}\\
    & = |u| \mcal{W}_{\mcal{M}}\left(\pi^\ast (1 - \tilde{\pi})\right).
  \end{align*}
\end{proof}

\begin{lemma}
  \label{lem:rademacher_complexity}

  Define $\hat{V}(\pi) \equiv \hat{V}(\pi, m^\ast)$
  Then for any $0 < \delta < e^{-1}$,
  \[
    \sup_{\pi \in \Pi} |\hat{V}(\pi) - V(\pi)| \leq 3 C (K-1) \max_{a} \mathcal{R}_n(\Pi_a) + 5C(K-1) \sqrt{\frac{1}{n}\log \frac{K-1}{\delta}}
  \]
  with probability at least $1 -\delta$, where $C = \max_{y \in \{0,1\}, a \in \{0, 1\}} |u(y, a)|$.
\end{lemma}

\begin{proof}[Proof of Lemma \ref{lem:rademacher_complexity}]
  First, since $\sum_{a \in \mathcal{A}} \pi(x, a) = 1$ for all $x$, we can write the empirical value as
  \[
    \begin{aligned}
      \hat{V}(\pi) & = \frac{1}{n}\sum_{i=1}^n\sum_{a =1}^{K-1} \pi(X_i, a)\left\{u(a)\left[\left(\tilde{\pi}(X_i, a) - \tilde{\pi}(X_i, 0) \right)Y_i \right. \right.\\
      & \qquad  + \left. \left.\{1 - \tilde{\pi}(X_i, a)\}m^\ast(a,X_i) - (1 - \tilde{\pi}(X_i, 0) m^\ast(0, X_i))\right] + c(a) - c(0)\right\}\\
      & \qquad + u(0) \left[\tilde{\pi}(X_i, 0) Y_i + (1 - \tilde{\pi}(X_i, 0))m^\ast(0, X_i)\right] + c(0). 
    \end{aligned}
  \]
  Now, define the function class with functions $f_a(x,y)$ as
  \begin{align*}
    \mathcal{F}_a & = \left\{
      \pi(X_i, a)\left\{u(a)\left[\left(\tilde{\pi}(X_i, a) - \tilde{\pi}(X_i, 0) \right)Y_i +
      \{1 - \tilde{\pi}(X_i, a)\}m^\ast(a,X_i) - (1 - \tilde{\pi}(X_i, 0) m^\ast(0, X_i))\right]\right. \right.\\
      & \qquad \left. \left. + c(a) - c(0)\right\}
      \mid \pi(X_i, a) \in \Pi_a
    \right\},
  \end{align*}
  where $\Pi_a = \{\bbone\{\pi(\cdot) = a\} \mid \pi \in \Pi\}$ is the set of 
  all potential policy assignments to action $a$. Now notice that
  \[
  \begin{aligned}
    \sup_{\pi \in \Pi} |\hat{V}(\pi) - V(\pi)| & \leq \E_{X, Y, \varepsilon}\left[\left|\frac{1}{n}\sum_{i=1}^n (u(0) \left[\tilde{\pi}(X_i, 0) Y_i + (1 - \tilde{\pi}(X_i, 0))m^\ast(0, X_i)\right] + c(0)) \varepsilon_i\right|\right]\\
    & \quad  + \sum_{a = 1}^{K-1} \sup_{f_a \in \mcal{F}_a} \left|\frac{1}{n}\sum_{i=1}^n f_a(X_i, Y_i) - \E\left[f_a(X, Y)\right]\right|.
  \end{aligned}
  \]

  The class $\mcal{F}_a$ is uniformly bounded by twice the maximum absolute utility 
  $C = \max_{y \in \{0,1\}, a \in \{0, 1\}} |u(y, a)|$,
   so by Theorem 4.5 in \citet{wainwright_2019}
  \[
    \sup_{f_a \in \mcal{F}_a} \left|\frac{1}{n}\sum_{i=1}^n f_a(X_i, Y_i) - \E\left[f_a(X, Y)\right]\right| \leq 2 \mcal{R}_n(\mcal{F}_a) + t,
  \]
  with probability at least $1 - \exp\left(-\frac{nt^2}{8C^2}\right)$. Now because
  \[
    u(0) \left[\tilde{\pi}(X_i, 0) Y_i + (1 - \tilde{\pi}(X_i, 0))m^\ast(0, X_i)\right] \leq u(0),
  \]
  and by independence of the data points and $\varepsilon$, we can get the bound
  \[
  \begin{aligned}
    & \E_{X, Y, \varepsilon}\left[\left|\frac{1}{n}\sum_{i=1}^n (u(0) \left[\tilde{\pi}(X_i, 0) Y_i + (1 - \tilde{\pi}(X_i, 0))m^\ast(0, X_i)\right] + c(0)) \varepsilon_i\right|\right]\\
    \leq \quad & \frac{1}{n}\left(\E_\varepsilon\left[\left(\sum_{i=1}^n (u(0) \left[\tilde{\pi}(X_i, 0) Y_i + (1 - \tilde{\pi}(X_i, 0))m^\ast(0, X_i)\right] + c(0))\varepsilon_i\right)^2\right]\right)^{\frac{1}{2}}\\
    & = \frac{1}{n}\left(\E\left[\sum_{i=1}^n (u(0) \left[\tilde{\pi}(X_i, 0) Y_i + (1 - \tilde{\pi}(X_i, 0))m^\ast(0, X_i)\right] + c(0))^2\varepsilon_i^2\right]\right)^{\frac{1}{2}}\\
    & \leq \frac{1}{n}\left(\sum_{i=1}^n (u(0) + c(0))^2\right)^{\frac{1}{2}}\\
    & = \frac{|u(0) + c(0)|}{\sqrt{n}} \leq \frac{2C}{\sqrt{n}}.
  \end{aligned}
  \]

  Furthermore, since $\mathcal{F}_a$ consists of compositions of functions $g \in \Pi_a$ with linear functions with a bounded slope,
  \[
    \left|u(a)\left[\left(\tilde{\pi}(X_i, a) - \tilde{\pi}(X_i, 0) \right)Y_i + \{1 - \tilde{\pi}(X_i, a)\}m^\ast(a,X_i) - (1 - \tilde{\pi}(X_i, 0) m^\ast(0, X_i))\right] + c(a) - c(0) \right|\leq 3 C,
  \]
  we can use the Talagrand contraction principle \citep[Theorem 4.12][]{ledoux_probability_1991} to bound the Rademacher complexity for $\mcal{F}_a$ by
  \[
    \begin{aligned}
      \mcal{R}_n(\mcal{F}_a)
      & \leq 3C \mcal{R}_n(\Pi_a).  
    \end{aligned}
  \]
  Doing this for each $a = 1,\ldots,K-1$ and using the union bound gives that
  \[
    \sup_{\pi \in \Pi} |\hat{V}(\pi) - V(\pi)| \leq \frac{2C(K-1)}{\sqrt{n}} +3 C (K-1) \max_{a \in \mathcal{A}} \mathcal{R}_n(\Pi_a) + t
  \]
  with probability at least $1 - (K-1)\exp\left(-\frac{nt^2}{8C^2}\right)$.
  Choosing $ t = C\sqrt{\frac{8}{n}\log \frac{K-1}{\delta}}$ and noting that
  $2(K-1) + \sqrt{8 \log \frac{K-1}{\delta}} \leq (2(K-1) + \sqrt{8}) \sqrt{\log \frac{K-1}{\delta}} \leq (2 + \sqrt{8}) (K-1) \sqrt{\log \frac{K-1}{\delta}} \leq 5 (K-1) \sqrt{\log \frac{K-1}{\delta}}$ gives the result.

\end{proof}

\begin{lemma}
  \label{lem:diff_support}

  For the empirical restricted model class $\widehat{\mathcal{M}}_n(\alpha)$ and for a policy $\pi \in \Pi$

  \[
    \hat{V}^{\inf}(\hat{\pi}) - \hat{V}(\hat{\pi})  \leq \sup_{\pi \in \Pi} |h_{\widehat{\mathcal{M}}_n(\alpha)}\left(-\pi (1 - \tilde{\pi}) u(\cdot)\right) - h_\mathcal{\mathcal{M}}\left(-\pi (1 - \tilde{\pi}) u(\cdot)\right)|,
  \]
  where the function $(\pi(1 - \tilde{\pi}) u(\cdot))(x, a) = \pi(x,a) * (1 - \tilde{\pi}(x, a)) u(a)$. Furthermore, if $\mathcal{M} \subseteq \widehat{\mathcal{M}}_n(\alpha)$, then 
  \[
    \hat{V}^{\inf}(\hat{\pi}) - \hat{V}(\hat{\pi})  \leq 0.
  \]

\end{lemma}

\begin{proof}[Proof of Lemma~\ref{lem:diff_support}]
  For a model class $\mathcal{F}$, define $\tilde{V}(\pi, \mathcal{F}) = \inf_{f \in \mathcal{F}} \hat{V}(\pi, f)$, and define $\hat{V}^{\inf}(\pi) = \min_{m \in \widehat{\mathcal{M}}_n(\alpha)} \hat{V}(\pi, m)$, so that $\hat{V}^{\inf}(\pi) = \tilde{V}(\pi, \widehat{M}_n(\alpha))$.
  This implies that
  \begin{align*}
    \hat{V}^{\inf}(\hat{\pi}) - \hat{V}(\hat{\pi}) & \leq \hat{V}^{\inf}(\hat{\pi}) - \tilde{V}(\hat{\pi}, \mathcal{M})\\
    & = \tilde{V}(\hat{\pi}, \widehat{\mathcal{M}}_n(\alpha)) - \tilde{V}(\hat{\pi}, \mathcal{M}).
  \end{align*}

  Now note that if $\mathcal{M} \subseteq \widehat{\mathcal{M}}_n(\alpha)$, then $\tilde{V}(\hat{\pi}, \widehat{\mathcal{M}}_n(\alpha)) - \tilde{V}(\hat{\pi}, \mathcal{M}) \leq 0$. Otherwise  we can write this difference as
  \begin{align*}
    \tilde{V}(\hat{\pi}, \widehat{\mathcal{M}}_n(\alpha)) - \tilde{V}(\hat{\pi}, \mathcal{M}) & = \inf_{m \in \widehat{\mathcal{M}}_n(\alpha)} \left\{ \frac{1}{n}\sum_{i=1}^n \sum_{a \in \mathcal{A}} \pi(X_i, a)(1 - \tilde{\pi}(X_i, a)) u(a) m(a, X_i) \right\}\\
    & \qquad - \inf_{m \in \mathcal{M}} \left\{ \frac{1}{n}\sum_{i=1}^n \sum_{a \in \mathcal{A}} \pi(X_i, a)(1 - \tilde{\pi}(X_i, a)) u(a) m(a, X_i) \right\}\\
    & = - \sup_{m \in \widehat{\mathcal{M}}_n(\alpha)} \left\{ - \frac{1}{n}\sum_{i=1}^n \sum_{a \in \mathcal{A}} \pi(X_i, a)(1 - \tilde{\pi}(X_i, a)) u(a) m(a, X_i) \right\}\\
    & \qquad + \sup_{m \in \mathcal{M}} \left\{- \frac{1}{n}\sum_{i=1}^n \sum_{a \in \mathcal{A}} \pi(X_i, a)(1 - \tilde{\pi}(X_i, a)) u(a) m(a, X_i) \right\}\\
    & = - h_{\widehat{\mathcal{M}}_n(\alpha)}(-\pi (1 - \tilde{\pi}) u(\cdot)) +  h_\mathcal{M}(-\pi (1 - \tilde{\pi}) u(\cdot))
  \end{align*}

  Taking the supremeum over all possible policies $\hat{\pi} \in \Pi$ gives the result.
\end{proof}

\begin{proof}[Proof of Theorems~\ref{thm:emp_safety} and \ref{thm:emp_safety_full}]
  The difference in values between $\tilde{\pi}$ and $\hat{\pi}$ is
  \[
    \begin{aligned}
      V(\tilde{\pi}) - V(\hat{\pi})
      & = V(\tilde{\pi}) - \hat{V}(\tilde{\pi}) + \hat{V}(\tilde{\pi}) - \hat{V}(\hat{\pi}) + \hat{V}(\hat{\pi}) - V(\hat\pi)\\
      & \leq 2 \sup_{\pi \in \Pi} | \hat{V}(\pi) - V(\pi) | + \hat{V}(\tilde{\pi}) - \hat{V}(\hat{\pi})
    \end{aligned}
  \]

  We have bounded the first term in Lemma~\ref{lem:rademacher_complexity}. To bound the second term, notice that
  \begin{align*}
    \hat{V}(\tilde{\pi}) - \hat{V}(\hat{\pi})  & =
    \hat{V}^{\inf}(\tilde{\pi}) - \hat{V}(\hat{\pi})\\
    & = \underbrace{\hat{V}^{\inf}(\tilde{\pi}) - \hat{V}^{\inf}(\hat{\pi})}_{\leq 0} + \hat{V}^{\inf}(\hat{\pi}) - \hat{V}(\hat{\pi})\\
    & \leq \hat{V}^{\inf}(\hat{\pi}) - \hat{V}(\hat{\pi}),
  \end{align*}
  where we have used that $\hat{\pi}$ maximizes $\hat{V}^{\inf}(\pi)$.

  In Lemma~\ref{lem:diff_support} we have bounded this difference.
    Combining the two bounds we have that
  \begin{align*}
    V(\tilde{\pi}) - V(\hat{\pi})\leq 6 C (K-1) \max_{a} \mathcal{R}_n(\Pi_a) + 10C(K-1) \sqrt{\frac{1}{n}\log \frac{K-1}{\delta}}\\
    + \sup_{\pi \in \Pi} |h_{\widehat{\mathcal{M}}_n(\alpha)}\left(-\pi (1 - \tilde{\pi}) u(\cdot)\right) - h_\mathcal{\mathcal{M}}\left(-\pi (1 - \tilde{\pi}) u(\cdot)\right)|,
  \end{align*}
  with probability at least $1 - \delta$. And if $\mathcal{M} \subseteq  \widehat{\mcal{M}}_n(\alpha)$, then we have the further bound
  \[
    V(\tilde{\pi}) - V(\hat{\pi})\leq 6 C (K-1) \max_{a} \mathcal{R}_n(\Pi_a) + 10C(K-1) \sqrt{\frac{1}{n}\log \frac{K-1}{\delta}},
  \]
  with probability at least $1 - \delta$. Noting that $P(\mathcal{M} \subseteq  \widehat{\mcal{M}}_n(\alpha)) \geq 1 - \alpha$, and taking the union bound gives that this second bound holds with probability at least $1 - \delta - \alpha$.

\end{proof}

\begin{proof}[Proof of Theorems \ref{thm:emp_optimal_gap} and \ref{thm:emp_optimal_gap_full}]
  The regret of $\hat{\pi}$ relative to $\pi^\ast$ is
  \[
    \begin{aligned}
      V(\pi^\ast) - V(\hat{\pi})
      & = V(\pi^\ast) - \hat{V}(\pi^\ast) + \hat{V}(\pi^\ast) - \hat{V}(\hat{\pi}) + \hat{V}(\hat{\pi}) - V(\hat{\pi})\\
      & \leq \sup_{\pi \in \Pi} 2|\hat{V}(\pi) - V(\pi)| + \hat{V}(\pi^\ast) - \hat{V}(\hat{\pi}).
    \end{aligned}
  \]
  We have bounded the first term in Lemma \ref{lem:rademacher_complexity}, and we now turn to the second term.
  \[
    \begin{aligned}
      \hat{V}(\pi^\ast) - \hat{V}(\hat{\pi}) & = \hat{V}(\pi^\ast) - \hat{V}^{\inf} (\hat{\pi}) + \hat{V}^{\inf} (\hat{\pi}) - \hat{V} (\hat{\pi})\\
      & = \hat{V}(\pi^\ast) - \hat{V}^{\inf} (\pi^\ast)  + \underbrace{\hat{V}^{\inf} (\pi^\ast)  - \hat{V}^{\inf} (\hat{\pi})}_{\leq 0} + \hat{V}^{\inf} (\hat{\pi}) - \hat{V} (\hat{\pi})\\
      & \leq \hat{V}(\pi^\ast) - \hat{V}^{\inf} (\pi^\ast) + \hat{V}^{\inf} (\hat{\pi}) - \hat{V} (\hat{\pi}),
    \end{aligned}
  \]
  where we have again used that $\hat{\pi}$ maximizes $\hat{V}^{\inf}(\pi)$. We have bounded the second term in Lemma~\ref{lem:diff_support}, now we turn to the first term:
\[
  \begin{aligned}
    \hat{V}(\pi^\ast) - \hat{V}^{\inf}(\pi^\ast)& \leq \frac{1}{n}\sum_{i=1}^n\sum_{a \in \mathcal{A}}u(a)\pi^\ast(X_i, a) \{1 - \tilde{\pi}(X_i, a)\}m^\ast(a, X_i) \\
    & \qquad -  \inf_{f \in \widehat{\mcal{M}}_n(\alpha)} \frac{1}{n}\sum_{i=1}^n\sum_{a \in \mcal{A}} u(a)\pi^\ast(X_i, a))\{1 - \tilde{\pi}(X_i, a)\} f(a, X_i)\\
    & \leq \sup_{f \in \widehat{\mcal{M}}_n(\alpha)} \frac{1}{n}\sum_{i=1}^n\sum_{a \in \mcal{A}} u(a) \pi^\ast(X_i, a))\{1 - \tilde{\pi}(X_i, a)\} f(a, X_i)\\
    & \qquad -  \inf_{f \in \widehat{\mcal{M}}_n(\alpha)} \frac{1}{n}\sum_{i=1}^n\sum_{a \in \mcal{A}} u(a) \pi^\ast(X_i, a))\{1 - \tilde{\pi}(X_i, a)\} f(a, X_i)\\
    & = |u|\widehat{\mcal{W}}_{\widehat{\mcal{M}}_n(\alpha)}\left(\pi^\ast (1 - \tilde{\pi})\right).
  \end{aligned}
\]
Combined with Lemmas \ref{lem:rademacher_complexity} and \ref{lem:diff_support} and the union bound this gives that
\begin{align*}
  V(\pi^\ast) - V(\hat{\pi}) & =   |u|\widehat{\mcal{W}}_{\widehat{\mcal{M}}_n(\alpha)}\left(\pi^\ast (1 - \tilde{\pi})\right)+ 6 C (K-1) \max_{a} \mathcal{R}_n(\Pi_a) + 10C(K-1) \sqrt{\frac{1}{n}\log \frac{K-1}{\delta}}\\
  & \qquad + |u|\sup_{\pi \in \Pi} |h_{\widehat{\mathcal{M}}_n(\alpha)}\left(-\pi (1 - \tilde{\pi})\right) - h_\mathcal{\mathcal{M}}\left(-\pi (1 - \tilde{\pi})\right)|,
\end{align*}
with probability at least $1 - \delta$ and
\[
  V(\pi^\ast) - V(\hat{\pi})  =   |u| \widehat{\mcal{W}}_{\widehat{\mcal{M}}_n(\alpha)}\left(\pi^\ast (1 - \tilde{\pi})\right)+ 6 C (K-1) \max_{a} \mathcal{R}_n(\Pi_a) + 10C(K-1) \sqrt{\frac{1}{n}\log \frac{K-1}{\delta}},
\]
with probability at least $1 - \delta - \alpha$. Noting that $|u| \leq 2C$ gives the result.

\end{proof}

\begin{proof}[Proof of Corollary \ref{cor:emp_safety_full_pointwise} and \ref{cor:emp_optimal_gap_full_pointwise}]
  These Corollaries follow from noting that
  \begin{align*}
    & h_{\widehat{\mathcal{M}}_n(\alpha)}\left(-\pi (1 - \tilde{\pi}) u(\cdot)\right) - h_\mathcal{\mathcal{M}}\left(-\pi (1 - \tilde{\pi}) u(\cdot)\right)\\
     =& \frac{1}{n}\sum_{i=1}^n \pi(X_i, a)(1 - \tilde{\pi}(X_i, a)) u(a) \widehat{B}_{\alpha \ell}(X_i, a) - \frac{1}{n}\sum_{i=1}^n \pi(X_i, a)(1 - \tilde{\pi}(X_i, a)) u(a) B_{ \ell}(X_i, a)\\
     =&  \frac{1}{n}\sum_{i=1}^n \pi(X_i, a)(1 - \tilde{\pi}(X_i, a)) u(a)( \widehat{B}_{\alpha \ell}(X_i, a) - B_{ \ell}(X_i, a))\\
      \leq& C \sup_{x, a}|\widehat{B}_{\alpha \ell}(x, a) - B_{ \ell}(x, a)|.
  \end{align*}
\end{proof}

\begin{proof}[Proof of Proposition~\ref{prop:ab_test_utility}]
  For the first equality, note that $m^\ast(a,X) = \tau^\ast(a,X) + m^\ast(-1, X)$. So the first equality follows by plugging in this equality to Equation~\eqref{eq:policy_model}. Next, note that we can decompose this expression as in Equation~\eqref{eq:value_model} to get that
  \[
    V(\pi, m^\ast) \ = \ \E\left[\sum_{a \in \mathcal{A}} \pi(X, a) \left\{
      u(a) \left[\tilde{\pi}(X, a) \tilde{\tau}(x) + \left\{1 - \tilde{\pi}(X,a)\right\}
       \tau^\ast(a, X)\right]+c(a) + u(a) m^\ast(-1, X)\right\} \right].
  \]
  Now using that $\E[\Gamma(Z, X, Y) \mid Z = z, X = x] = z \cdot m^\ast(\tilde{\pi}(x), x)
  + (1-z) \cdot m^\ast(-1, x)$, and noting that $\tilde{\tau}(x) = \E[\Gamma(1, X, Y)  - \Gamma(1, X, Y) \mid X = x]$,
  gives the second expression.
\end{proof}

\section{Computation for restricted model classes}
\label{sec:representative_cases}

In this section, we show, in detail, how to compute the population and
empirical model classes in a variety of cases: no restrictions,
Lipschitz functions, linear models, and additive models.

First, for point-wise bounded model classes, we can compute the size
term in Theorem~\ref{thm:emp_optimal_gap} by looking for the policy
$\pi \in \Pi$ that disagrees with the baseline policy $\tilde{\pi}$
when the upper and lower bounds are farthest apart:
\begin{equation}
  \label{eq:bounded_sizes}
  \widehat{\mathcal{S}}(\widehat{\mcal{T}}_n(\alpha), \Pi; \tilde{\pi}) =
  \sup_{\pi \in \Pi } \frac{1}{n}\sum_{i=1}^n \sum_{a \in \mathcal{A}} \pi(X_i, a) (1 - \tilde{\pi}(X_i, a)) \left(\widehat{B}_{\alpha u}(a, X_i) - \widehat{B}_{\alpha \ell}(a, X_i)\right).
\end{equation}

\subsection{No restrictions}
\label{sec:no_restriction}

Suppose that the conditional expectation has no restrictions, other
than that it must lie between $L$ and $U$, i.e.,
$\mcal{F} = \{f \mid L \leq f(a,x) \leq U \;\; \forall a\in\mcal{A},
\; x \in \mcal{X}\}$. The restricted model class
$\mcal{M} = \{f \in \mcal{F} \mid f(a,x) = \tilde{m}(x) \;\; \text{for
} a \text{ with } \tilde{\pi}(x) = a\}$ provides no additional
information when the policy $\pi$ disagrees with the baseline policy
$\tilde{\pi}$. The upper and lower bounds are given by
$B_u(a,x) = \tilde{\pi}(x, a)\tilde{m}(x) + \{1 - \tilde{\pi}(
x,a)\} U $ and $B_\ell(a,x) = \tilde{\pi}(x, a)\tilde{m}(x) + \{1 - \tilde{\pi}(
x,a)\} L$, respectively.

To construct the empirical model class $\widehat{\mcal{M}}_n(\alpha)$,
we begin with a simultaneous $1-\alpha$ confidence interval for the
conditional expectation function $\tilde{m}(x)$, with lower and upper
bounds
$\widehat{C}_\alpha(x) = [\widehat{C}_{\alpha \ell} (x),
\widehat{C}_{\alpha u}(x)]$ such that
\begin{equation}
  \label{eq:simul_band}
  P\left(\tilde{m}(x) \in \widehat{C}_\alpha(x) \;\; \forall \; x \in \mathcal{X} \right) \geq 1 - \alpha.
\end{equation}
See \citet{Srinivas2010,Chowdhury2017,Fiedler2021} for examples on
constructing such simultaneous bounds via kernel methods in
statistical control settings. With this confidence band, we can use
the upper and lower bounds of the confidence band in place of the true
conditional expectation $\tilde{m}(x)$, i.e.
$\widehat{B}_{\alpha u}(a,x) = \tilde{\pi}(X, a)\widehat{C}_{\alpha
  u}(x) + \{1 - \tilde{\pi}(X, a)\} U $ and
$\widehat{B}_\ell(a,x) = \tilde{\pi}(X, a)\widehat{C}_{\alpha
  \ell}(x) \{1 - \tilde{\pi}(x, a)\} L$.

\subsection{Lipschitz functions}
\label{sec:lip}

We next consider the case where the covariate space $\mcal{X}$ has a
norm $\|\cdot\|$, and that $m(a, \cdot)$ is a $\lambda_a$-Lipschitz
function,
\begin{equation*}
  \mcal{F} = \{f:\mcal{A} \times \mcal{X} \to \R \;\; \mid \;\; |f(a, x) - f(a, x')| \leq \lambda_a \|x - x'\|\}.
\end{equation*}
Taking the greatest lower bound and least upper bound implied by this
model class leads to lower and upper bounds,
$B_\ell(a,x) = \sup_{x' \in \tilde{\mcal{X}}_a} \left\{\tilde{m}(x') -
  \lambda_a \|x - x'\|\right\}$, and
$B_u(a,x) = \inf_{x' \in \tilde{\mcal{X}}_a} \left\{\tilde{m}(x') +
  \lambda_a \|x - x'\|\right\}$, where
$\tilde{\mcal{X}}_a = \{x \in \mcal{X} \mid \tilde{\pi}(x) = a\}$ is
the set of covariates with the baseline policy giving action $a$.  The
further we extrapolate from the area where the baseline action
$\tilde{\pi}(x) = a$, the larger the value of $\|x - x'\|$ will be and
so there will be more ignorance about the values of the function.

The size of $\mcal{M}$ will depend on the expected distance to the
boundary between baseline actions and the value of the Lipschitz
constant.  If most individuals are close to the boundary, or the
Lipschitz constant is small, $\mcal{M}$ will be small and the safe
policy will be close to optimal. Conversely, a large number of
individuals far away from the boundary or a large Lipschitz constant
will increase the potential for suboptimality.

To construct the empirical version, we again use a simultaneous
confidence band $\widehat{C}_{\alpha}(x)$ satisfying
Equation~\eqref{eq:simul_band}.  Then, the lower and upper bounds use
the lower and upper confidence limits in place of the function values,
$\widehat{B}_{\alpha \ell}(a,X) = \sup_{x' \in
  \tilde{\mcal{X}}_a}\left\{ \widehat{C}_{\alpha \ell}(x') - \lambda_a
  \|X - x'\| \right\}$ and
$\widehat{B}_{\alpha u}(a,X) = \inf_{x' \in \tilde{\mcal{X}}_a}\left\{
  \widehat{C}_{\alpha u}(x') - \lambda_a \|X - x'\|\right\}$.  In our
analysis of the NVCA flag threshold in
Section~\ref{sec:nvca_threshold}, the covariate space $\mathcal{X}$ is
discrete, so we construct a simultaneous confidence interval via the
a Bonferroni correction on the 7 unique values.

Note that it is also possible to construct bounds using a finite set of
evaluation points. For example, if $\check{\mathcal{X}_a}$ is a finite set
of points such that the baseline policy satisfies $\tilde{\pi}(x) = a$, an
alternative procedure to construct a lower bound is to take the greatest lower
bound over the finite set $\check{\mathcal{X}_a}$, i.e.
\[
\check{B}_\ell(a,x) = \max_{x' \in \check{\mathcal{X}_a}} \tilde{m}(x') - \lambda_a \|x - x'\|.
\]
Because the finite set $\check{\mathcal{X}_a} \subseteq \tilde{\mathcal{X}}_a$,
the greatest lower bound over $\check{\mathcal{X}_a}$  will be less than
or equal to the greatest lowest bound over the entire set $\tilde{\mathcal{X}}_a$,
i.e. $\check{B}_\ell(a,x) \geq B_{\ell}(a,x)$. With this finite set, we can
create the empirical version using a simultaneous confidence band
$\check{C}_\alpha(x) $ over only $\check{\mathcal{X}_a}$ that satisfies 
\[
  P\left(\tilde{m}(x) \in \check{C}_\alpha(x) \;\; \forall \; x \in \check{X}_a \right) \geq 1 - \alpha.
\]
Such a bound can be constructed with a simple Bonferroni correction, or via
a more tailored approach.
Then the empirical lower bound would be $\check{B}_{\alpha \ell}(a,X) = \max_{x' \in
\check{\mcal{X}}_a}\left\{ \check{C}_{\alpha \ell}(x') - \lambda_a
\|X - x'\| \right\}$. Unlike in the population case, the empirical lower bound
using the finite set, $\check{B}_{\alpha \ell}(a,x)$, may be greater than
the empirical lower bound using the simultaneous confidence band $\widehat{B}_{\alpha \ell}(a,x)$ if the simultaneous confidence band over the entire set
$\mathcal{X}$ is wider than that over the smaller, finite set $\check{X}_a$.

\subsection{Linear models}
\label{sec:glm}

We next consider, as a model class, a linear model in a
set of basis functions $\phi:\mcal{A} \times \mcal{X} \to \R^d$,
$\mcal{F}= \{f(a,x) = h^{-1}(b^\top \phi(a, x))\}$, where we still
enforce the upper and lower bounds of $U$ and $L$.  The restricted
model class is the set of coefficients $b$ that satisfy
$\tilde{m}(x) = b^\top \phi(a, x)$ for all $x$ and $a$ such that
$\tilde{\pi}(x) =a$.
With discrete covariates, this is a linear system of equations. Slightly
abusing notation, define $\phi(A, X) \in \R^{p \times d K}$ as the matrix of
values $\phi(a, x)$ for the $p$ unique combinations observable in the data,
and $\tilde{m} \in \R^p$ as the corresponding values of $\tilde{m}(x)$.
If the model class is not point identified (e.g. if $p < dK$), then there will be infinitely
many solutions to the equation $\tilde{m} = \phi(A, X) b$.
To characterize these, define
$\beta^\ast$ as the \emph{minimum norm solution}:
\begin{align*}
  \min_{b \in \R^d} \; & \; \|b\|_2\\
  \text{subject to } \; & \tilde{m} = \phi(A, X) b.
\end{align*}

There will also be an unidentified component arising from the
\emph{null space} of the system of linear equations,
$\mcal{N} = \{b \in \R^d \mid \phi(A, X) b = 0\}$.  Let
$D \in \R^{d \times d^\perp}$ be an orthonormal basis for this null
space.  Then, any solution to the linear equations
$\tilde{m} = \phi(A, X) b$ can be written as the minimum norm solution
$\beta^\ast$ plus a vector in the null space, which we can write as
$D b_\mathcal{N}$, where $b_\mathcal{N}$ are free parameters.
Therefore, we can re-write the restricted model in terms of these free
parameters:
\begin{equation*}
  \mcal{M} \ = \ \{f(a, x) = (\beta^\ast + D
  b_{\mcal{N}})^\top \phi(a, x) \;\; \mid \;\; b_{\mcal{N}} \in \R^{d^\perp}\}.
\end{equation*}

Finding the worst-case value will
involve a non-linear optimization over $b_\mcal{N}$.
Rather than taking such an approach, we will
consider a larger class
$\overline{\mcal{M}} \equiv \{f \; \mid \; B_\ell(a, x) \leq f(a, x)
\leq B_u(a,x)\}$ that contains the restricted model class
$\mcal{M}$.
To construct it, we choose the upper and lower bounds
\begin{align*}
  B_\ell(a,x) & = {\beta^\ast}^\top \phi(a,x) \bbone\{D^\top\phi(a,x) = 0\} + \bbone\{D^\top\phi(a,x) \neq 0\} L,\\
  B_u(a,x) & = {\beta^\ast}^\top \phi(a,x) \bbone\{D^\top\phi(a,x) = 0\} + \bbone\{D^\top\phi(a,x) \neq 0\} U.
\end{align*}
For a given action $a$ and covariate vector $x$, we first check whether
$\phi(x,a)$ is in the null space $\mathcal{N}$ by checking whether $D^\top \phi(a, x) = 0$. If it is not in the null space (i.e. $D^\top \phi(a, x) = 0$),
then the lower and upper bounds are equal, $B_\ell(a,x) = B_u(a,x) = h^{-1}({\beta^\ast}^\top \phi(a,x))$ because for any choice of free parameter $b_\mathcal{N}$,
$b_\mathcal{N}^\top D^\top \phi(a,x) = 0$.
In contrast, if $\phi(a,x)$ is in the null space (i.e. $D^\top \phi(a, x) \neq 0$), then the free parameter is unrestrained and $(\beta^\ast + D
b_{\mcal{N}})^\top \phi(a, x)$ can take on any value between $L$ and $U$.

To construct the empirical model class we again begin with a
simultaneous confidence band, this time for the minimum norm
prediction,
$\beta^\ast \cdot \phi(a,x) \in [\widehat{C}_{\alpha \ell}(a, x),
\widehat{C}_{\alpha u}(a, x)]$
where we apply a Bonferroni correction for the $p$ unique observed values
\begin{equation*}
  \beta^\ast \cdot \phi(a,x) \in \hat{\beta}^\ast \cdot \phi(a,x)  \pm \hat{\sigma} t_{n-p-1, 1 - \frac{\alpha}{2p}}\sqrt{\phi(a, x)^\top(\Phi^\top\Phi)^\dagger\phi(a, x)},
\end{equation*}
where $\hat{\beta}^\ast$ is the least squares estimate of the minimum
norm solution, $\hat{\sigma}^2$ is the estimate of the variance from
the MSE,
$\Phi = [\phi(\tilde{\pi}(x_i), x_i)]_{i=1}^n \in \R^{n \times d}$ is
the design matrix,$t_{n-p-1, 1 - \frac{\alpha}{2p}}$ is the
is the $1 - \alpha / p$ quantile of an $t$ distribution $n-p-1$
degrees of freedom, and $A^\dagger$ denotes the pseudo-inverse of a
matrix $A$. This gives lower and upper bounds,
\begin{equation*}
  \begin{aligned}
    \widehat{B}_{\alpha\ell}(a,x) & =
    \max\{L, \widehat{C}_{\alpha \ell}(a,x)\}\bbone\{D^\top\phi(a,x) = 0\} + L \bbone\{D^\top\phi(a,x) \neq 0\},\\
    \widehat{B}_{\alpha u}(a,x) & = \min\{U, \widehat{C}_{\alpha u}(a,
    x)\} \bbone\{D^\top\phi(a,x) = 0\} + U \bbone\{D^\top\phi(a,x) \neq
    0\},
  \end{aligned}
\end{equation*}
where we enforce the constraint that the predictions must be between $L$ and $U$ post-hoc.

\subsection{Additive models}
\label{ex:add}

If the model class for action $a$ consists of additive models, we have
\begin{equation*}
  \mcal{F} = \left\{f(a, x) = \sum_{j=1}^d f_j(a,x_j) + \sum_{j < k}
    f_{jk}(a, (x_j, x_k)) + \ldots \;\; \Bigl | \;\; f_{j}(a,\cdot), f_{jk}(a, \cdot), \ldots, \lambda_a-\text{Lipschitz}\right\},
\end{equation*}
where the component functions
$f_{j}(a,\cdot), f_{jk}(a, \cdot), \ldots$ can be subject to
additional restrictions so that the decomposition is unique.  This
additive decomposition formulation amounts to an assumption that no
interactions exist above a certain order.

By using the same additive decomposition for $\tilde{m}(x)$ into
$\tilde{m}(x) = \sum_j \tilde{m}_j(X_j) + \sum_{j < k}
\tilde{m}_{jk}(X_j, X_k) + \ldots$, we can follow the same bounding
approach as in Appendix~\ref{sec:lip} for each of the component
functions.  For example, for the additive term for covariate $j$,
$m_j(a,x_j)$, the Lipschitz property implies that,
\begin{equation*}
  \tilde{m}_j(x'_j) - \lambda_a |x_j - x'_j|\leq m_j(a,x_j) \leq \tilde{m}(x'_j) +
  \lambda_a|x_j - x'_j| \;\;\; \forall  \; x'  \in \tilde{\mcal{X}}_a.
\end{equation*}
Taking the greatest lower bound and least upper bound for each
component function, the overall lower and upper bounds are,
\begin{equation}
  \label{eq:add}
  \begin{aligned}
    B_\ell(a,X) & \ = \ \sum_j \sup_{x' \in \tilde{\mcal{X}}_a}\left\{ m_j(x_j') - \lambda_a |X_j - x_j'| \right\}+ \sum_{j < k} \sup_{x' \in \tilde{\mcal{X}}_a}\left\{ m_{jk}(x_j',x_k') - \lambda_a \|X_{(j,k)} - x_{(j,k)}'\| \right\} + \cdots\\
    B_u(a,X) & \ = \sum_j \inf_{x' \in \tilde{\mcal{X}}_a}
    \left\{m_j(x_j') + \lambda_a |X_j - x_j'|\right\} + \sum_{j < k}
    \inf_{x' \in \tilde{\mcal{X}}_a} \left\{m_{jk}(x_j',x_k') +
      \lambda_a \|X_{(j,k)} - x_{(j,k)}'\|\right\} + \cdots,
  \end{aligned}
\end{equation}
where $x_{(j,k)}$ is the subvector of components $j$ and $k$ of $x$.
Unlike in Appendix~\ref{sec:lip}, this extrapolates covariate by
covariate, finding the tightest bounds for each component.  For
instance, for a first-order additive model, the level of extrapolation
depends on the distance in each covariate $|x_j - x_j'|$ separately.
  
To construct the empirical model class for the class of additive
models, we use a $1-\alpha$ confidence interval that holds
simultaneously over all values of $x$ and for all components, i.e.,
\begin{equation*}
  \tilde{m}_j(x_j) \in \widehat{C}_{\alpha}^{(j)}(x_j), \;\; m_{jk}(x_j, x_k) \in \widehat{C}_{\alpha}^{(j,k)}(x_j, x_k), \ldots, \;\; \forall \;j=1,\ldots,d,\;\; k < j, \;\; \ldots,
\end{equation*}
with probability at least $1 - \alpha$. Analogous to the Lipschitz
case in Appendix~\ref{sec:lip} above, we can then construct the lower
and upper bounds using the lower and upper bounds of the confidence
intervals,
\begin{equation*}
  \begin{aligned}
    \widehat{B}_{\alpha \ell}(a,X) & = \sum_j \sup_{x' \in \tilde{\mcal{X}}_a}\left\{ \widehat{C}_{\alpha \ell}^{(j)}(x_j') - \lambda |X_j - x_j'| \right\}+ \sum_{j < k} \sup_{x' \in \tilde{\mcal{X}}_a}\left\{ \widehat{C}_{\alpha \ell}^{(j,k)}(x_j',x_k') - \lambda \|X_{(j,k)} - x_{(j,k)}'\| \right\} + \ldots\\
    B_{\alpha u}(a,X) & = \sum_j \inf_{x' \in \tilde{\mcal{X}}_a}
    \left\{\widehat{C}_{\alpha u}^{(j)}(x_j') + \lambda |X_j -
      x_j'|\right\} + \sum_{j < k} \inf_{x' \in \tilde{\mcal{X}}_a}
    \left\{\widehat{C}_{\alpha u}^{(j,k)}(x_j',x_k') + \lambda
      \|X_{(j,k)} - x_{(j,k)}'\|\right\} + \ldots.
  \end{aligned}
\end{equation*}

\section{Incorporating human decision-making}
\label{sec:human_decisions}

The PSA-DMF system we study is an example of a ``human-in-the-loop''
framework: rather than an algorithmic policy being the final arbiter
of decisions, the policy merely provides recommendations to a human
that makes an ultimate decision \citep{Imai2020, benmichael_ai_2024}.
In this section, we formalize and extend the potential outcomes framework
to incorporate human decisions, and then
briefly explore how our framework can be extended to explicitly model
human decisions and apply it to learn a new NVCA system.

\subsection{Potential human decisions and potential outcomes}
\label{sec:human_dec_setup}
We first show how to extend our framework to incorporate human decisions.
Let $D_i(a) \in \{0,1\}$ be the potential (binary) decision for
individual $i$ under action $a \in \mcal{A}$ (an algorithmic
recommendation in our application), and $Y_i(d, a) \in \{0, 1\}$ be
the potential (binary) outcome for individual $i$ under human decision
$d \in \{0, 1\}$ and algorithmic action $a \in \mcal{A}$.
This setup nests our main framework. To see this, note that we can re-define the
the potential outcome under algorithmic action $a$ as the potential outcome
when the algorithmic action is set to $a$ and the human decision is the natural
value under algorithmic action $a$:
\[
  Y_i(a) \equiv Y_i(D_i(a), a) = Y_i(0, a) (1 - D(a)) + Y_i(1, a) D(a).
\]
If the human decision under algorithmic action $a$ is $D(a) = 0$,
then the potential outcome under algorithmic action $a$ is $Y_i(a) = Y_i(0, a)$.
Conversely, if the human decision under algorithmic action $a$ is $D(a) = 1$,
the potential outcome under algorithmic action $a$ is $Y_i(a) = Y_i(1,a)$.
Then, the observed decision is
given by $D_i = D(\tilde{\pi}(X_i))$ whereas the observed outcome is
$Y_i = Y_i(\tilde{\pi}(X_i)) = Y_i(D_i(\tilde{\pi}(X_i)),
\tilde{\pi}(X_i))$.

Finally, we denote the expected
potential human decision under algorithmic action $a$,
conditional on covariates $x$, as
$d(a, x) = \E[D(a) \mid X = x]$ and represent the conditional expectation
of the potential outcome under algorithmic action $a$, conditional on covariates
$x$, as $m(a,x) = \E[Y(a) = 1 \mid X = x]$.

\subsection{Incorporating human decisions into the utility function}

To incorporate human decisions into the utility function, we write
the utility for outcome $y$
under human decision $d$ as $u(y,d)$.  With this setup, the value for a
policy $\pi$ is:
\begin{equation*}
  V(\pi) \ = \ \E\left[\sum_{a \in \mcal{A}} \pi(X, a) \sum_{d = 0}^1 \left[u(1, d) Y(d, a)  + u(0, d)(1 - Y(d,a))\right]\bbone\{D(a) = d\}\right].
\end{equation*}

If we make the simplifying assumption that the utility gain is
constant across decisions, i.e., $u(1,d) - u(0,d) = u$ for
$d \in \{0,1\}$, we can index the utility for $y = 0$ and $d= 0 $ as
$u(0,0) = 0$, and denote the added cost of taking decision 1 as
$c = u(0,1) - u(0,0)$.  This allows us to write the value by
marginalizing over the potential decisions, yielding,
\begin{equation}
  \label{eq:value_decisions}
  \begin{aligned}
    V(\pi) &\ = \ \E\left[\sum_{a \in \mcal{A}} \pi(X, a) \left(u Y(a) + cD(a) \right)\right].
  \end{aligned}
\end{equation}
  
Comparing Equation~\eqref{eq:value_decisions} to the value in
Equation~\eqref{eq:value_model} when actions are taken directly, we
see that the key difference is the inclusion of the potential decision
$D(a)$ in determining the cost of an action. Rather than directly
assigning a cost to an action $a$, there is an indirect cost
associated with the eventual decision $D(a)$ that action $a$ induces
in the decision maker. Therefore, the unidentifiability of the
expected potential decision under an action given the covariates,
$d(a,x)$, also must enter the robustness procedure.

We can treat the unidentifiability of the potential decisions in a
manner parallel to the outcomes. Denoting the conditional expected
observed decision as $d(\tilde{\pi}(x), x)=\E[D \mid X = x]$, we can
posit a model class for the decisions $\mcal{F}^\prime$ and create the
restricted model class
$\mcal{D} = \{f \in \mcal{F}^\prime \mid f(\tilde{\pi}(x), x) =
d(\tilde{\pi}(x), x)\}$.\footnote{These restrictions being on the
  \emph{decisions} gives more opportunities for structural
  restrictions on the model. For example, we could make a monotonicity
  assumption that $d(a,x) \leq d(a', x)$ for $a \leq a'$.}  We can now
construct a population safe policy by maximizing the worst case value
across the model classes for both the outcomes $\mcal{M}$ and the
decisions $\mcal{D}$,
\begin{equation}
  \label{eq:maximin_decisions}
  \begin{aligned}
    \max_{\pi \in \Pi}  & \left\{ \E\left[\sum_{a \in \mcal{A}} \pi(X, a)\tilde{\pi}(X, a) u Y \right] + \min_{f \in \mcal{M}} \E\left[\sum_{a \in \mcal{A}} \pi(X, a)\{1 - \tilde{\pi}(X, a)\} uf(a,X)\right] \right. \\
    & \left. + \E\left[\sum_{a \in \mcal{A}} \pi(X, a)\tilde{\pi}(X, a) c D \right] + \min_{g \in \mcal{D}} \E\left[\sum_{a \in \mcal{A}} \pi(X, a)\{1 - \tilde{\pi}(X, a)\} cg(a,X)\right] \right\}.
  \end{aligned}
\end{equation}

By allowing for actions to affect decisions through the decision maker
rather than directly, the costs of actions are not fully
identified. Therefore, we now find the worst-case expected outcome
\emph{and decision} when determining the worst case value in
Equation~\eqref{eq:maximin_decisions}. In essence, we solve the inner
optimization twice: once over outcomes for the restricted outcome
model class $\mcal{M}$ and once over decisions for the restricted
decision model class $\mcal{D}$.
  
From here, we can follow the development in the previous sections. We
create empirical restricted model classes for the outcome and decision
functions, $\widehat{\mcal{M}}_n(\alpha / 2)$ and
$\widehat{D}_n(\alpha / 2)$ using a Bonferonni correction so that
$P(\mcal{M} \in \widehat{\mcal{M}}_n(\alpha / 2), \mcal{D} \in
\widehat{D}_n(\alpha / 2)) \geq 1 - \alpha$. Then, we solve the
empirical analog to Equation~\eqref{eq:maximin_decisions}. Finally, we
can incorporate experimental evidence as above.  In this case, the
conditional expected potential decision $d(a,x)$ and outcome $m(a,x)$
--- and their model classes --- are replaced with the conditional
average treatment effect on the decision $\E[D(a) - D(-1) \mid X = x]$
and on the outcome $\tau(a,x)$.

\subsection{Learning a new NVCA point system}
  
In Section~\ref{sec:application}, we only considered the outcomes of
triggering the NVCA flag and have assigned costs directly to the
flag. However, the PSA serves as a recommendation to the presiding
judge who is the ultimate decision maker. Following the discussion
above, we can incorporate this into the construction of the robust
policy.  Rather than place a cost on triggering the NVCA flag, we use
the judge's binary decision of whether to assign a signature bond or
cash bail and place a cost of $-1$ to assigning cash bail. Unlike the
cost directly placed on the NVCA flag, this allows us to address the
cost of cash bail decision.  As discussed in
Section~\ref{sec:application}, the cost of the judge's decision to
assign cash bail includes the fiscal and socioeconomic costs, indexed
to be $-1$.
  
Following the same analysis as in Section~\ref{sec:application}, we
find maximin policies that take the decisions into account for
increasing costs of an NVCA relative to assigning cash bail, at
various confidence levels. For the additive and second order effect
models, however, we find policies that differ from the original rule
only when we do not take the statistical uncertainty into account ---
with confidence level $1 - \alpha = 0$ --- and have no finite sample
guarantee that the new policy is not worse than the existing rule. In
this case, the policy is extremely aggressive, responding to noise in
the treatment effects. Otherwise, we cannot find a new policy that
safely improves on the original rule. This is primarily because the
overall effects of the PSA on both the judge's decisions and
defendant's behavior are small \citep{Imai2020}. Therefore, there is
too much uncertainty to ensure that a new policy would reliably
improve upon the existing rule.

\section{Imputation, IPW, and double robust methods}
\label{sec:impute}

Here we briefly discuss how standard approaches to policy learning are not
applicable in our setting.
First, as discussed in Section~\ref{sec:opt_pol_learn}, the key identification
issue is that we can cannot point-identify the conditional expectation of the
potential outcome $m^\ast(a, x) = \E[Y(a) \mid X = x]$ for all pairs of actions
$a$ and covariates $x$.
In settings with overlap ($P(A = a \mid X = x) > 0$ for all $a \in \mathcal{A}$
and $x \in \mathcal{X}$), and unconfounded action assignment
($A \indep \{Y(0), Y(1),\ldots,Y(K-1)\} \mid X$), we can identify $m^\ast(a, x)$
via the conditional expectation of the observed outcome given the action and
the covariate
$\tilde{m}(a,x) \equiv \E[Y \mid A = a, X = x]$.
In such settings, we could then identify the value $V(\pi)$ using model-based
imputation, IPW, or augmented IPW:

\begin{align*}
  V(\pi, m^\ast) \ & = \ \E\left[\sum_{a \in
      \mathcal{A}} \pi(X, a) \{u(a) \tilde{m}(a, X) + c(a)\}\right] & \text{(Imputation)}\\
      & = \E\left[\sum_{a \in
      \mathcal{A}} \pi(X, a) \left\{u(a) \frac{\bbone\{A = a\}}{P(A = a \mid X)}Y + c(a)\right\}\right] & \text{(IPW)}\\
      & = \E\left[\sum_{a \in
      \mathcal{A}} \pi(X, a) \left\{u(a) \left(\tilde{m}(a, X) + \frac{\bbone\{A = a\}}{P(A = a \mid X)}(Y - m(a, X))\right) + c(a)\right\}\right] & \text{(AIPW)}
\end{align*}

In our setting, where the observed actions are the actions under the
deterministic baseline policy $A_i = \tilde{\pi}(X_i)$, the actions are
unconfounded given the covariates $X$ (indeed, we know exactly how the actions
are assigned), but there is no overlap because
$P(A = a \mid X) = P(\tilde{\pi}(X) = a \mid X)$ is either 0 or 1.
The implication is that the outcome model $m^\ast(a, x)$ is not point
identifiable. It is impossible to estimate the conditional
expectation of the observed outcome given $A = a$ and $X = x$,
$\tilde{m}(a,x)$, if $a \neq \tilde{\pi}(x)$ because it is an event of measure
zero (i.e. $P(A \neq \tilde{\pi}(X)) = 0$).

Nonetheless, we may try to use the imputation approach by estimating a model
$\hat{m}(a,x)$ and relying on it for extrapolation. We would then solve
\begin{equation}
  \label{eq:impute_estimator}
  \hat{\pi}^\text{impute} \in \max_{\pi \in \Pi} \frac{1}{n}\sum_{i=1}^n \sum_{a \in \mathcal{A}}\pi(X_i, a)\left\{u(a) \left(\tilde{\pi}(X_i, a) Y + (1 - \tilde{\pi}(X_i, a))\hat{m}(a, X_i)\right) +c(a)\right\}.
\end{equation}
This imputation-based policy will be highly sensitive to how the estimated
model $\hat{m}(a, X_i)$ extrapolates to combinations of $a$ and $x$ that are
not possible under the baseline policy, as we show via simulation
in Section~\ref{sec:sims}.

The identification problem is more transparent for the IPW and AIPW-based
approaches. Note that the inverse probability term with a deterministic baseline
policy is $\bbone\{A = a\}/\tilde{\pi}(a, X_i)$, which is equal to
$\tilde{\pi}(a, X_i)/\tilde{\pi}(a, X_i)$. If $\tilde{\pi}(a,X_i) = 1$,
then this term is equal to 1, but if $\tilde{\pi}(a,X_i) = 0$, it is 0/0,
which is undefined. Again, we may nonetheless try to use IPW by setting 0/0 = 0.
This would give:
\[
  \hat{\pi}^\text{ipw} \in \max_{\pi \in \Pi} \frac{1}{n}\sum_{i=1}^n \sum_{a \in \mathcal{A}}\pi(X_i, a)\left\{u(a) \tilde{\pi}(X_i, a) Y  + c(a)\right\}.
\]
As long as $u(a) > c(a)$, then defining the IPW-based policy in this way will
give that $\hat{\pi}^\text{ipw}  = \tilde{\pi}$, and so we will always keep
the baseline policy.

Finally, we might try to consider the AIPW estimator, again setting 0/0=0, but
note that 
\[
\hat{m}(a, X_i) + \tilde{\pi}(X_i, a) (Y_i - \hat{m}(a, X_i)) = \tilde{\pi}(X_i, a) Y_i - (1-\tilde{\pi}(X_i, a)) \hat{m}(a, X_i),
\]
and so the AIPW approach would recover the model-based imputation approach.

\section{Simulation study}
\label{sec:sims}

We have a single discrete covariate with 10 levels, $x \in \{0,\ldots,9\}$,
and a binary action so that the action set is $\mathcal{A} = \{0,1\}$.
We choose a baseline policy $\tilde{\pi} = \bbone\{x \geq 5\}$, and set the
utility gain to be $u(0) = u(1) = 10$ and the costs to be $c(0) = 0, c(1) = -1$,
so that action 0 is costless and action 1 costs one tenth of the potential
utility gain.
For each
simulation we draw $n$ i.i.d. samples $X_1,\ldots,X_n$ uniformly on
$\{0,\ldots,9\}$. Then we draw a smooth model for the expected control potential outcome
$m(0,x) \equiv \E[Y(0) \mid X = x]$ via random Fourier features.
We draw three random vectors: $\omega \in \R^{100}$ with i.i.d. standard normal
elements; $b \in \R^{100}$ with i.i.d. components drawn uniformly
on $[0,2\pi]$; and $\beta \in \R^{100}$ with i.i.d. standard normal elements.
Then we set
\[
  m(0,x) = \logit^{-1}\left(\sqrt{\frac{2}{100}} \beta \cdot  \cos\left(\omega \frac{x}{9} + b\right)\right),
\]
where the cosine operates element-wise. See \citet{Rahimi2008_random_features} for more discussion on random features.
For the potential outcome under treatment, $m(1, x) = \E[Y(1) \mid X = x]$, we add a linear treatment effect on the logit scale:
\[
  m(1,x) = \logit^{-1}\left(\logit\left(m(0,x)\right) + \frac{1}{2}\left(x - \frac{9}{2}\right) - \frac{8}{10}\right).
\]
We then generate the potential outcomes $Y_i(0),Y_i(1)$ as independent Bernoulli draws with probabilities $m(0, X_i)$ and $m(1, X_i)$, respectively.

With each simulation draw, we consider finding a safe empirical policy by solving
Equation~\eqref{eq:maximin_emp_bnd} under a Lipschitz restriction on the model as in
Appendix~\ref{sec:lip} and with the threshold policy class $\Pi_\text{thresh}$. Note that the true model is in fact much smoother than
Lipschitz; here we consider using the looser assumption. 
Following our empirical analysis in
Section~\ref{sec:nvca_threshold}, we take the average outcome at each
value of $x$, and compute the largest difference in consecutive averages as
pilot estimates for the Lipschitz constants $\lambda_0$ and $\lambda_1$.
We then solve Equation~\eqref{eq:maximin_emp_bnd} using  $\frac{1}{2}$, 1, and 2 times
these pilot estimates as the Lipschitz constants, and setting the significance
level to 0, 80\% and 95\%.

We also consider using a model-based imputation estimator without
accounting for partial identification. Because the baseline policy assigns
0 for $x \in \{0, 1, 2, 3, 4\}$ and 1 for $x \in \{5, 6, 7, 8, 9\}$, there are
5 unique values of the covariate when $\tilde{\pi}(x)$ is 0 or 1.
Therefore, we fit two separate non-parametric models for $\hat{m}(0, x)$ and
$\hat{m}(1, x)$ by fitting a logistic regression of $Y$ on $X$ with a degree
four polynomial of $X$. This creates 5 parameters for each model, one for
each unique observed data point. We then use each estimated 4-degree polynomial
logistic regression model to extrapolate $\hat{m}(0,x)$ for $x \geq 5$ and
$\hat{m}(1,x)$ for $x < 5$ and estimate an imputation-based policy
$\hat{\pi}^\text{impute}$ solving Equation~\eqref{eq:impute_estimator}.
We additionally compute the oracle threshold policy
that uses the true model values $m(0,x)$ and $m(1,x)$. We do this for sample
sizes $n \in (500, 1000, 1500, 2000)$.

\begin{figure}[t!]
  \centering\vspace{-.5in}
        \begin{subfigure}[t]{0.45\textwidth}  
    {\centering \includegraphics[width=\maxwidth]{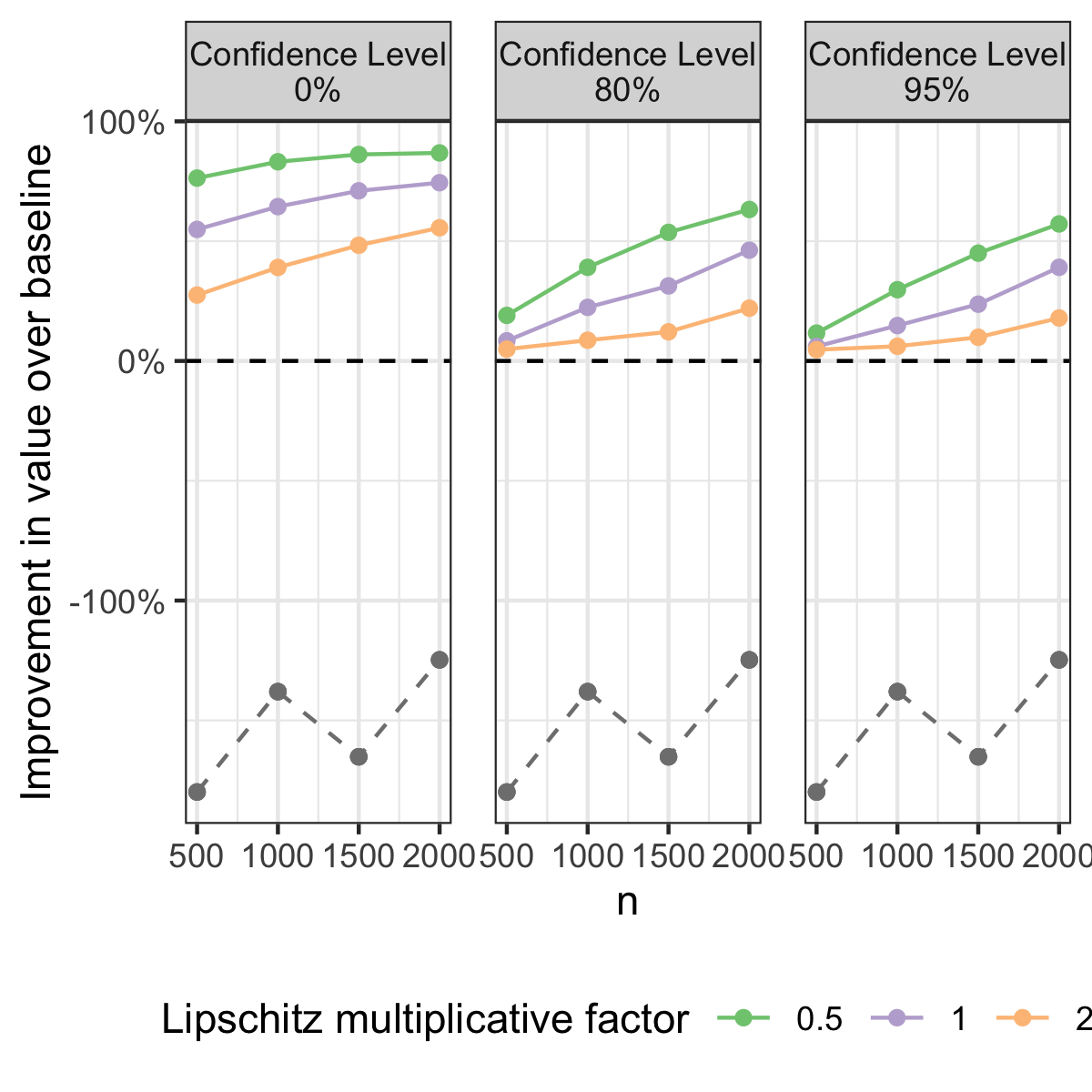} 
    }
  \end{subfigure}
    \begin{subfigure}[t]{0.45\textwidth}  
  {\centering \includegraphics[width=\textwidth]{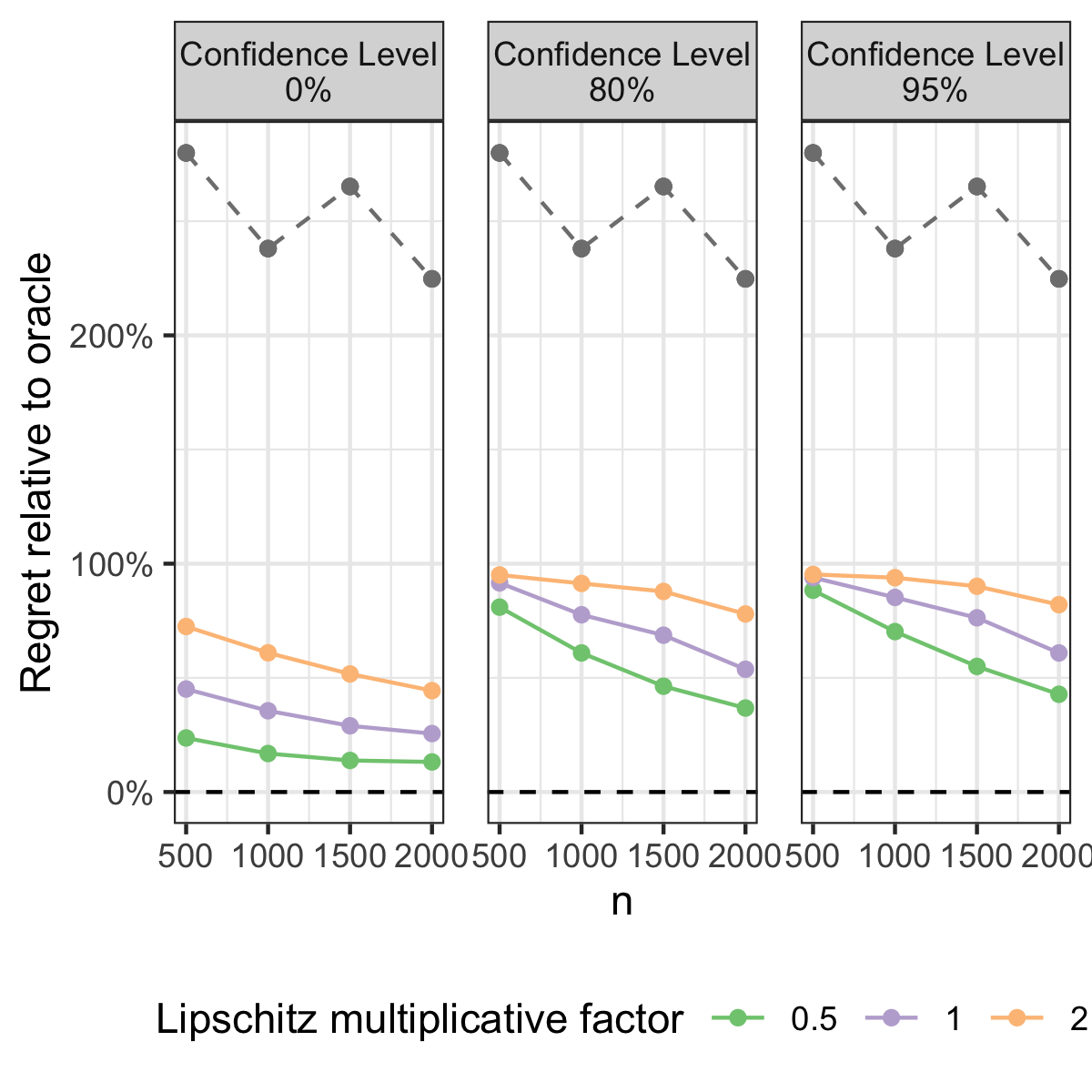} 
  }
    \end{subfigure}\quad
  \vspace{-.2in}
  \caption{Monte Carlo simulation results as the sample size $n$ increases,
           varying the multiplicative factor on the empirical Lipschitz
           constant and the significance level $1-\alpha$. The left panel
           shows the difference in the expected utility between
           the empirical safe policy $\hat{\pi}$, and the baseline policy
           $\tilde{\pi}$, normalized by the regret of the baseline relative to
           the oracle, i.e. $\frac{V(\hat{\pi}) - V(\tilde{\pi})}{V(\pi^\ast) - V(\tilde{\pi})}$.
           The right panel shows the regret of the safe policy relative to the
           oracle, scaled by the regret of the baseline relative to the oracle,
           i.e. $\frac{V(\pi^\ast) - V(\hat{\pi})}{V(\pi^\ast) - V(\tilde{\pi})}$.
           In both panels, the grey dashed line represents the imputation-based
           policy.
           }
  \label{fig:sim_results}
\end{figure}

Figure~\ref{fig:sim_results} shows how the empirical safe policy $\hat{\pi}$
and the model-based imputation policy $\hat{\pi}^\text{impute}$
compare to both the baseline policy $\tilde{\pi}$ and the oracle policy
$\pi^\ast$ in terms of expected utility.
First, we see that on average, the empirical safe policy improves over the
baseline, no matter the confidence level and the choice of Lipschitz constant.
This improvement is larger the less conservative we are, e.g. by choosing a lower
confidence level or a smaller Lipschitz constant. Furthermore, as the sample size
increases, the utility of the empirical safe policy also increases due to a lower
degree of statistical uncertainty.
We find similar behavior when comparing it to the oracle policy. Less conservative
choices lead to lower regret, and the regret decreases with the sample size.
Importantly, the regret does not decrease to zero; even when removing all
statistical uncertainty the safe policy can still be suboptimal due to
the lack of identification.

In contrast, model-based imputation without accounting for identification
issues performs poorly, yielding a policy that has much 
lower expected utility than the baseline, let alone the oracle.
This is because the extrapolation to unseen data does not perform well with the modeling
approach that we used.
It could have been  possible to choose an imputation estimator
that performs better in that the extrapolation proved to be correct.
However, for any imputation estimator we can come up with an adversarial
example where the extrapolation is incorrect and leads to a worse policy than
the status quo. Indeed, this is precisely what the maximin criterion is
designed to defend against.

\section{Additional empirical results}
\label{sec:addl_results}

In this section, we present additional empirical results for the FTA, NCA, and NVCA scores, as well as the results for the combined bail level and
monitoring conditions recommendation.
For reference, Table~\ref{tab:nvca_weights} displays the existing risk-factor weights for
the FTA, NCA, and NVCA risk scores.

\begin{table}[t!]
  \centering
  \begin{tabular}{l l r r r}
    \toprule
    Risk factor & & FTA & NCA  & NVCA\\
    \midrule
    \multirow{2}{*}{Current violent offense} & $>$ 20 years old & & & 2\\
    & $\leq$ 20 years old & & & 3\\
    Pending charge at time of arrest & & 1 & 3& 1\\
    \midrule
    \multirow{2}{*}{Prior conviction} & misdemeanor or felony & 1 & 1 & 1\\
    & misdemeanor and felony & 1 & 2 & 1\\
    \multirow{2}{*}{Prior violent conviction} & 1 or 2 & & 1& 1\\
    & 3 or more & & 2& 2\\
    Prior sentence to incarceration & & & 2 &\\
    \midrule
    \multirow{2}{*}{Prior FTA in past 2 years} & only 1 & 2 & 1& \\
    & 2 or more & 4 & 2 &\\
    Prior FTA older than 2 years & & 1 & &\\
    \midrule
    Age & 22 years or younger & & 2 &\\
    \bottomrule
  \end{tabular}
  \caption{Weights placed on risk factors to construct the failure to
    appear (FTA), new criminal activity (NCA), and new violent
    criminal activity (NVCA) scores.  The sum of the weights is then
    thresholded into six levels for the FTA and NCA scores and a
    binary ``Yes''/``No'' for the NVCA score. }
  \label{tab:nvca_weights}
\end{table}

\subsection{Additional results for the NVCA threshold and score}
\label{sec:addl_nvca}

\begin{figure}[t]
  \centering
    \centering \includegraphics[width=0.75\maxwidth]{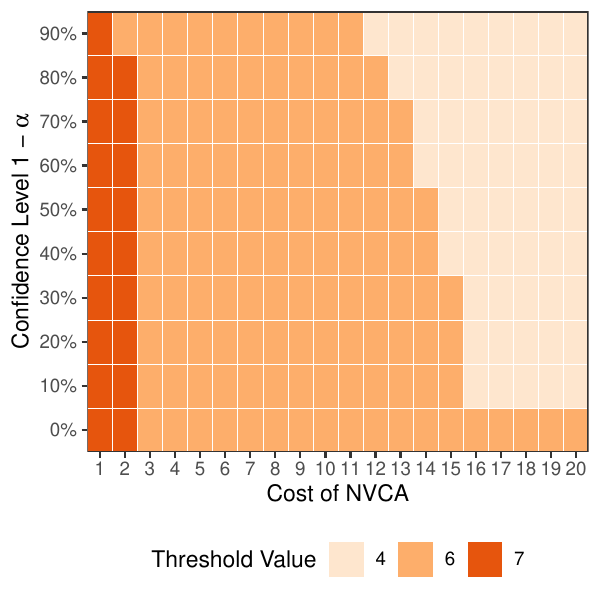} 
    
    \label{fig:threshold_lvl}
  \caption{Learned threshold values solving Equation \eqref{eq:maximin_emp_bnd}
    for the NVCA flag threshold rule as the cost of an NVCA increases from 1 to
    20 times the cost of triggering the NVCA flag, and the
    confidence level varies between 0\% and 90\%.}
  \label{fig:nvca_results_threshold}
\end{figure}

We begin by presenting the results regarding the NVCA threshold.
Figure~\ref{fig:nvca_results_threshold} shows how the maximin
threshold changes as we vary the confidence level $1-\alpha$ while setting $C=3$.
The
overall relationship between the threshold and the cost is robust to
the choice of confidence level.  The results show that when the cost
of an NVCA is low and/or the confidence level is low the learned safe
policy will raise the threshold, implying that fewer arrestees will
trigger the NVCA flag.

\begin{figure}[t]

  \centering
  \includegraphics[width = 0.75\maxwidth]{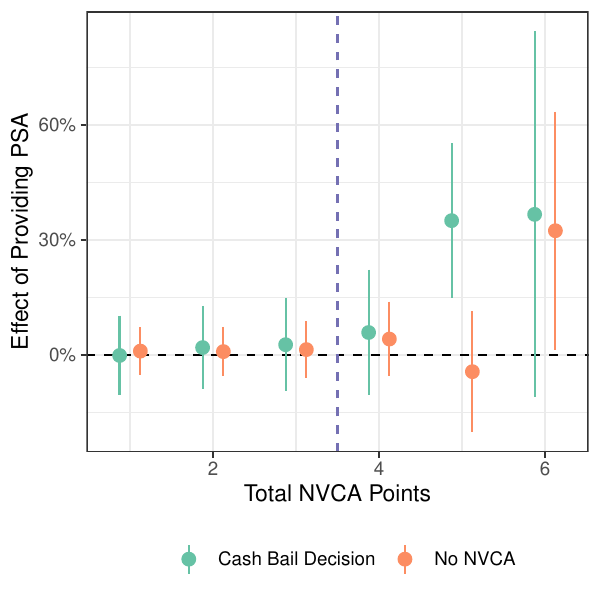}
  \caption{The effect of providing the PSA on (a) whether the judge makes a cash
  bail decision and (b) whether the arrestee does not engage in an NVCA,
  conditioned on the number of total NVCA points. Error bars
  indicate 95\% confidence intervals using heteroskedastic robust standard errors.
  The vertical dashed line represents the existing NVCA threshold.}
  \label{fig:nvca_bail_decision}
  
\end{figure}

Figure~\ref{fig:nvca_bail_decision} shows estimates of the effect of providing
the PSA on whether the judge makes a cash bail decision, and on whether the
arrestee engages in an NVCA,
conditioned on the number of total NVCA points. We find that when the NVCA
flag is not triggered (i.e. $x_\text{nvca} < 4$) there is little to no effect
of providing the PSA on either the judge's decision or the presence of an NVCA.
This appears to  remain true when  $x_\text{nvca} = 4$, even though the flag is triggered.
For $x_\text{nvca} \geq 5$, providing the PSA increases the proportion of
decisions that are cash bail by over 30 percentage points (though this is not
significant for $x_\text{nvca} = 6$.)
However, NVCAs do not meaningfully change for
$x_\text{nvca} = 5$, even though there are over 30 percentage points more
cash bail decisions, but they decrease for $x_\text{nvca} = 6$.

Next, we present several additional empirical results for the NVCA 
threshold and score. 
\paragraph{Second order effect model and model testing.}
First, Figure~\ref{fig:pct_diff_lvl_2nd} shows how the maximin NVCA
flag differs from the original rule as the cost of an NVCA and the
confidence level vary under the second order effect model.  We find
that under the second order effect model, there is too much
uncertainty to safely deviate from the original NVCA flag rule with
any reasonable degree of confidence if the cost of an NVCA is greater
than 1.  This is in contrast to the results under the additive effect
model shown in Figure~\ref{fig:pct_diff_lvl}; the addition of
unidentifiable second order interaction terms precludes safely
changing the policy.

\begin{figure}[t!]
  \centering
        \begin{subfigure}[t]{0.45\textwidth}  
    \caption{Second order effect model class}
    {\centering \includegraphics[width=\maxwidth]{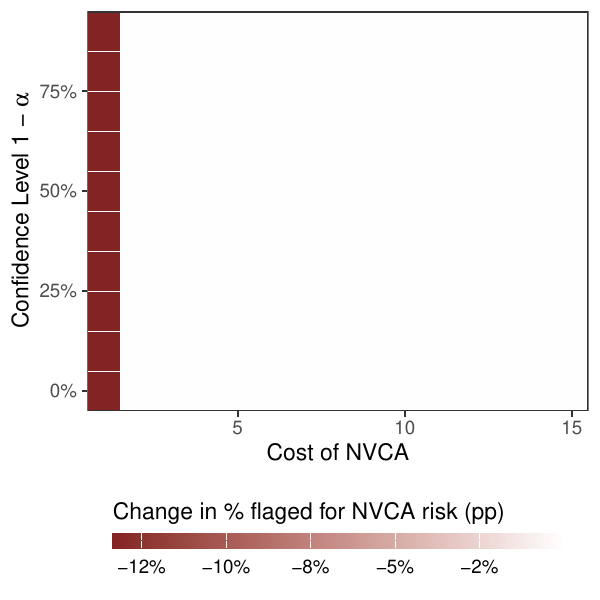} 
    }
    \label{fig:pct_diff_lvl_2nd}
  \end{subfigure}
    \begin{subfigure}[t]{0.45\textwidth}  
   \caption{Using all risk factors in Table~\ref{tab:nvca_weights}} 
  {\centering \includegraphics[width=\textwidth]{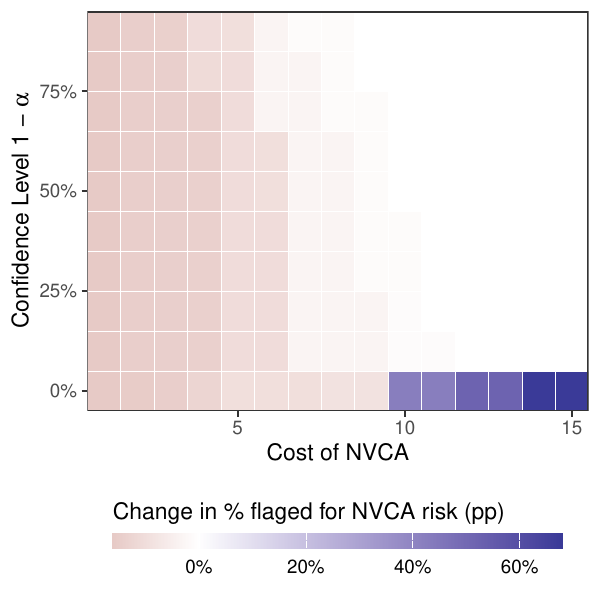} 
  }
  \label{fig:pct_diff_lvl_all}
  
    \end{subfigure}\quad
    \caption{The percentage point difference in the proportion of
      arrestees flagged for NVCA risk between the maximin policy and
      the original NVCA score as the cost of an NVCA increases from 1
      to 15 times of the cost of triggering the NVCA flag and the
      confidence level varies between 0\% and 100\% 
      (a) in the second order effect model
      class and (b) under the additive effect model class using all
      risk factors in Table~\ref{tab:nvca_weights}.}
  \label{fig:nvca_results_lvl_2nd}
\end{figure}

To understand whether the additive effects assumption is reasonable
for the NVCA rule, we estimate the CATE separately for arrestees with
and without the NVCA flag triggered via a similar spirit to the DR-learner
\citep{Kennedy2022_drlearner} by regressing the IP-weighted outcomes
$\Gamma(1,\bX,Y) - \Gamma(0, \bX, Y)$ on the 7 binary risk factors and all
observed pair-wise interactions. Note that this partial
second order model is point identified because it omits the
unidentified terms and so it is only a rough proxy for the full
second order model. We then test whether the interaction terms are
all zero using a Wald test with Huber-White heteroskedastic robust standard errors.
We do not find evidence against the null of the
additive model for cases where the flag is not triggered ($p=0.75$),
but there is some evidence for the existence of interactions when the
flag is triggered ($p = 0.067$).

\paragraph{Using a quadratic cost.}
We consider an alternative value function that assigns a larger
marginal utility loss to triggering the NVCA flag for an arrestee if a
larger proportion of arrestees have the flag triggered.  Formally,
defining $\bar{\pi} \equiv \E[\pi(X)]$, the policy value function is
given by:
\[
  V^\text{quad}(\pi) \equiv \E\left[\pi(X) \left\{ u \times (m^\ast(1, X) - m^\ast(0, X)) - (1 + \zeta \bar{\pi})\right\}\right] + \E[m^\ast(0, X)].
\]

This induces a \emph{quadratic} cost, with $\zeta$ determining the
additional marginal penalization per percent flagged as an NVCA risk.
Note that this value function is not an expectation of individual
utilities, because the cost of flagging one individual for NVCA risk
depends on how many other individuals are also flagged.  As with the
cost of an NVCA $u$, it is beyond the scope of this paper to argue for
a particular value of the quadratic penalty term $\zeta$, and so we
will document how the policy changes as it varies.
Note that other forms of such utilities are possible, for example, we
could consider a step function that adds an additional penalty if the
number of arrestees flagged as an NVCA risk exceeds some threshold.

Figure~\ref{fig:pct_diff_lvl_quad} shows how the maximin rule compares
to the original rule, again in terms of the the performance of the
maximin proportion of arrestees flagged for an NVCA risk as we vary
both $u$ and $\zeta$ while keeping the confidence level fixed to
$1 - \alpha = 80\%$.  For any given cost of an NVCA, the maximin
policy triggers the flag less often as the quadratic penalty
increases. 

Figure~\ref{fig:coef_by_cost_quad} shows the integer weights on the
risk factors for the maximin policy at the $1 - \alpha = 80\%$ level
as the quadratic penalty $\zeta$ increases with the cost of an NVCA
set to 9.  Increasing the quadratic penalty eventually changes the
maximin policy back to placing less weight on violent convictions and
offenses, similar to the results when we only vary the cost of an NVCA and keep
$\zeta = 0$ (e.g. in Figure~\ref{fig:coef_by_cost}).

\begin{figure}[t!]
  \centering
  \includegraphics[width=0.5\textwidth]{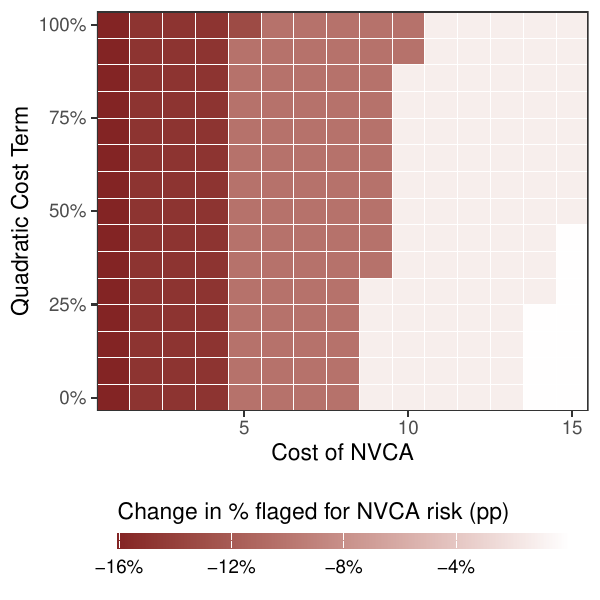} 
    
  \caption{The percentage point difference in the proportion of
    arrestees flagged for NVCA risk between the maximin policy and the
    original NVCA score as the cost of an NVCA increases from 1 to 15
    times of the cost of triggering the NVCA flag as $\zeta$ varies
    with a confidence level of 80\%.}
    \label{fig:pct_diff_lvl_quad}
\end{figure}

\begin{figure}[b!]
  \centering \includegraphics[width=\maxwidth]{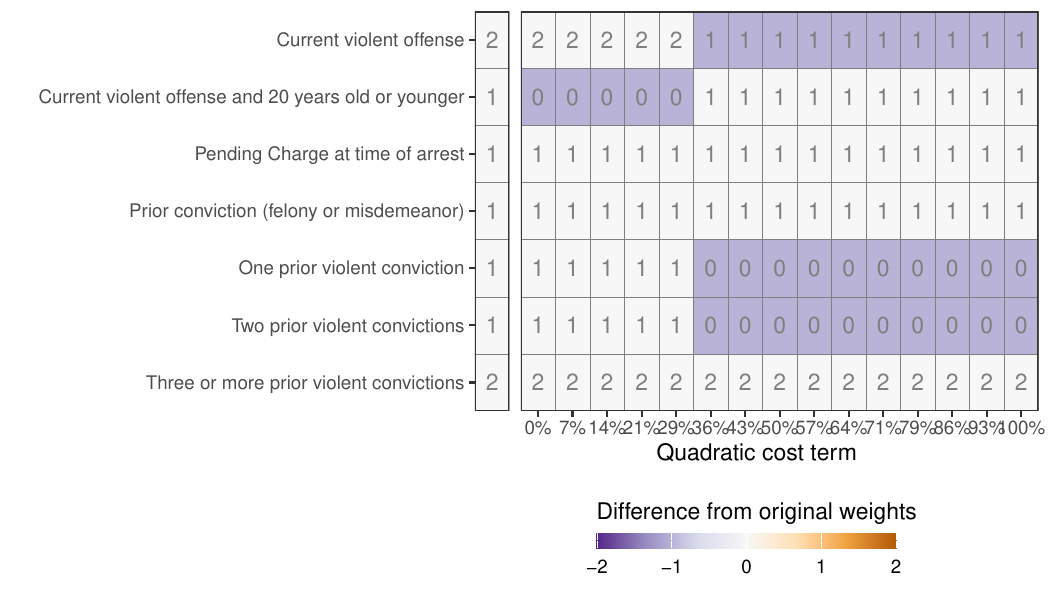}
  \caption{NVCA flag weights $\theta$ in
    Equation~\eqref{eq:integer_weight_policy}. Change in $\theta$ as
    the quadratic penalty $\zeta$ increases from 0 to with a cost an
    NVCA equal to 9 and a confidence level of 80\% (right panel). }
\label{fig:coef_by_cost_quad}
\end{figure} 

\paragraph{Using the full set of risk factors.}
We also consider learning a new NVCA flag rule that incorporates the
full set of risk factors listed in Table~\ref{tab:nvca_weights}.  The
scale of the weight placed on each factor is not necessarily
meaningful for comparisons across rules that use different risk
factors and thresholds.  For this reason, we place an upper bound on
the weights of 5.

Figure~\ref{fig:pct_diff_lvl_all} shows how the resulting maximin
rules differ from the original NVCA flag rule, again as the cost of an
NVCA and the confidence level vary under the additive effect model,
with a quadratic penalty term of zero, i.e., $\zeta = 0$.  We find
broadly similar results as when using the original reduced set of risk
factors. For all confidence levels at lower NVCA costs, the maximin
rule classifies fewer arrestees as NVCA risks, eventually collapsing
back to the status quo as the cost of an NVCA relative to the cost of
triggering the flag increases. Relative to the reduced covariate set,
including more risk factors increases the level of statistical
uncertainty, and so the maximin rule collapses back to the original
rule more quickly.

\begin{figure}[t!]
  \centering \includegraphics[width=0.8\maxwidth]{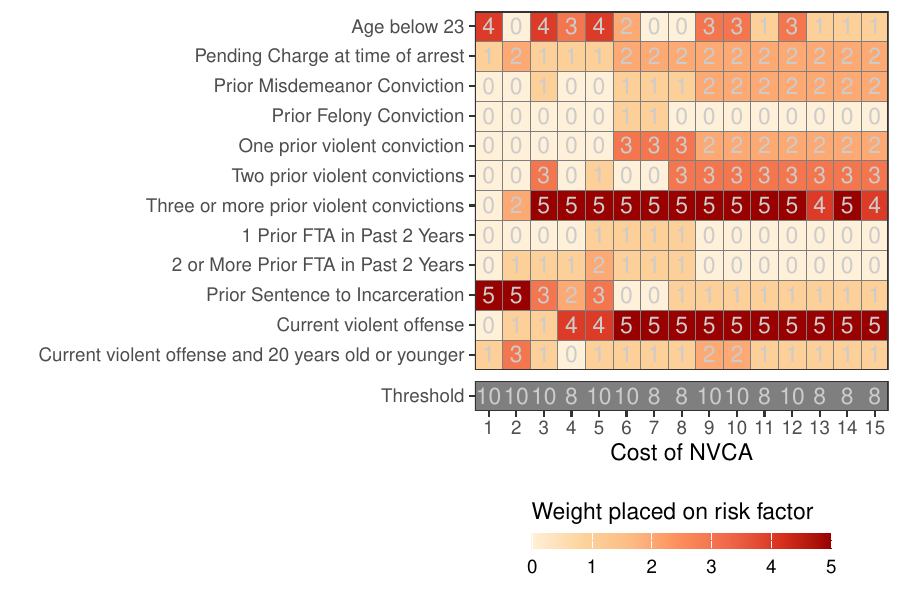}
  \caption{Change in the NVCA flag weights $\theta$ using all
    of the risk factors in Table~\ref{tab:nvca_weights} as the cost of
    an NVCA increases from 1 to 15 times the cost of triggering the
    NVCA flag, at a confidence level of $1 - \alpha = 80\%$ and no
    quadratic penalty $\zeta = 0$ .}
\label{fig:coef_by_cost_all}
\end{figure} 

Relative to the reduced covariate set, including more risk factors
increases the level of statistical uncertainty, and so the maximin
rule collapses back to the original rule more quickly.  In addition,
at a confidence level of 0\%, the learned NVCA flag rule eventually
begins to flag far more arrestees as NVCA risks than the original rule
as the cost of an NVCA increases. However, because more risk factors
are included, even when the maximin policy does not differ from the
baseline in terms of which arrestees it triggers the flag for, the
underlying risk factor weights can be different, as multiple
combinations of weights can produce the same recommendations.
Figure~\ref{fig:coef_by_cost_all} shows the set of weights found
during the optimization problem with a confidence level of 80\%, but
as the solutions are not unique and the scales arbitrary, these
weights are not directly comparable to the other sets of results.

\subsection{Additional results for the FTA and NCA scores}
\label{sec:addl_fta_nca}

Next, we present additional empirical results for the FTA and NCA
scoring systems.  We begin by formalizing the FTA and NCA policy
classes as follows:
\[
  \label{eq:integer_weight_policy}
  \Pi = \left\{\pi(x) = \sum_{a = 1}^{K-1}a \bbone\left\{\eta_{a-1} <
      \theta \cdot x \leq \eta_a \right\} \; \left \vert
      \vphantom{\sum_{j=1}^7 }\right. \; \theta \in \Z^d, \ \eta_a > \eta_{a-1}
    \geq 0 \ \forall a \in \{1, 2, \ldots, K-1\} \right\},
\]
where $x$ are the corresponding risk factors in either the FTA or NCA
rule, $\theta$ are the integer weights placed on the risk factors, and
$\eta_0,\ldots,\eta_{K-1}$ are thresholds that determine what the
final score is.  For example, the baseline FTA rule has thresholds (0,
1, 2, 4, 6, 7) and the baseline NCA rule has thresholds (0, 2, 4, 6,
8, 13).

There are $K=6$ possible actions for the FTA and NCA scores, each
giving scores between 1 and 6. Indexing the cost of the first action
to be zero, we must characterize the cost of the remaining 5
actions. There are many potential ways to do so. However, recall from
Figure~\ref{fig:nvca_widths} that there is little information to
extrapolate from the NCA score and none for the FTA score, so we do
not expect to be able to learn maximin policies that are different
from the status quo here.  Therefore, we extend our utility function
from the binary case to a simple linear parameterization of the costs,
writing the utility function as $u(y, a) = u \times y - a$ where $|u|$
is the cost of either an FTA or an NCA depending on the risk
score. This utility function and these costs are not directly
comparable to the binary utility function for the NVCA flag, because
the cost for choosing the highest score is indexed to 5 rather than 1
as in the binary case.\footnote{Recall that we index the first action
  to be $a = 0$.}  We note that it is straightforward to encode
different cost structures.

\begin{figure}[t!]
  \centering
        \begin{subfigure}[t]{0.45\textwidth}  
    \caption{FTA Score}
    {\centering \includegraphics[width=\maxwidth]{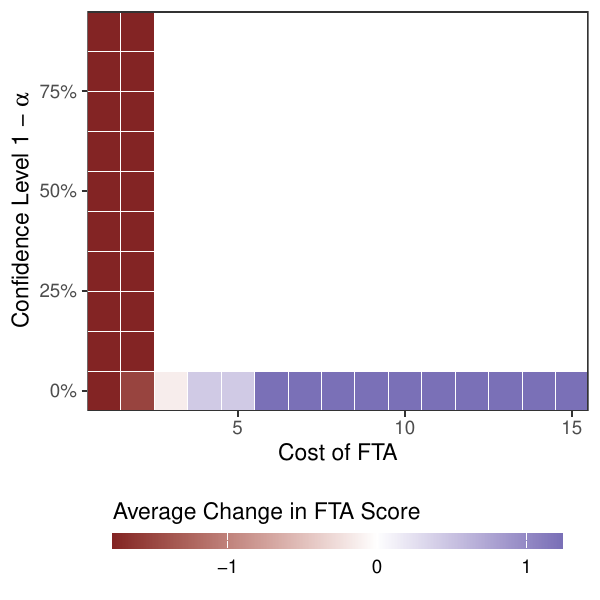} 
    }
    \label{fig:pct_diff_fta_lvl}
  \end{subfigure}
    \begin{subfigure}[t]{0.45\textwidth}  
   \caption{NCA Score} 
  {\centering \includegraphics[width=\textwidth]{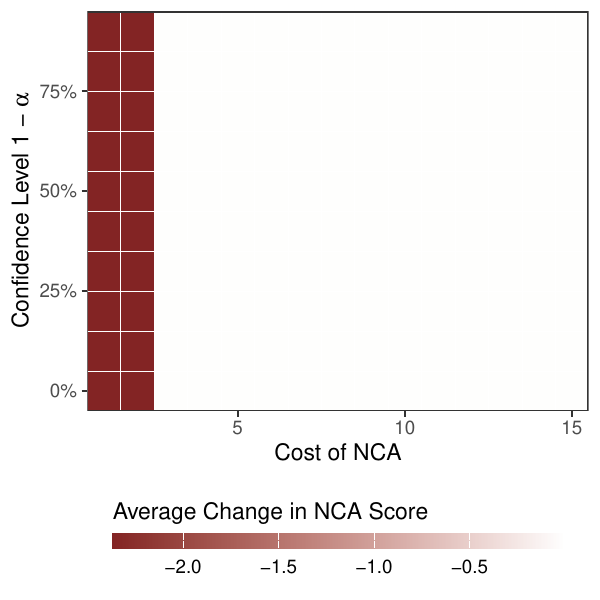} 
  }
    \label{fig:pct_diff_nca_lvl}
    \end{subfigure}\quad
  \caption{The average difference in (a) the FTA score and (b) the NCA score
    for arrestees under the maximin policy and the original FTA and NCA scores
    as the cost of an FTA (left panel) and NCA (right panel) increases from 1 to
    15 and the confidence level varies between 0\% and 100\%.}
  \label{fig:fta_nca_results_lvl}
\end{figure}

Figure~\ref{fig:fta_nca_results_lvl} shows how the maximin FTA and NCA
scores differ from the original rules as we vary the cost of an FTA or
NCA and the confidence level $1 - \alpha$. Overall, we find that with
any degree of statistical confidence, if the cost of an FTA or NCA is
above 2, the maximin rule collapses to the status quo rule.  This is
not surprising given the discussion in Section~\ref{sec:risk_scores}.

\begin{figure}[!t]
  \centering 
    \includegraphics[width = 0.9\maxwidth]{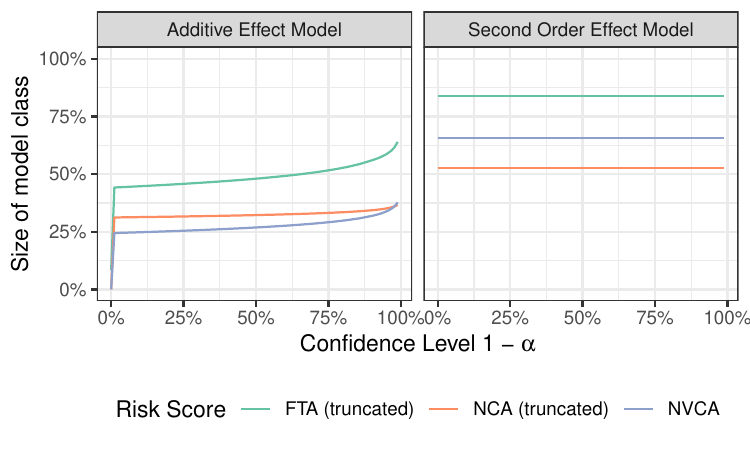}
    \caption{The size (as a percentage of its maximum
      value) of two different model classes with respect to the
      linear threshold policy class versus the confidence
      level $1-\alpha$ for the FTA (green) and NCA (orange), both truncated
      into an indicator for high risk (score greater than or equal to 4)
      and NVCA (purple) scoring rules.}
  \label{fig:nvca_widths_truncated}
\end{figure}

It may be possible, however, to learn simplified versions of the FTA
and NCA scores that are collapsed into low and high risk. To inspect
this, we create truncated versions of the scores that are indicators
for whether the scores are greater than or equal to 4.
Figure~\ref{fig:nvca_widths_truncated} shows the sizes of the
resulting model classes with respect to the truncated policy classes
for both the additive and second order effect models as the confidence
level varies, keeping the NVCA flag for comparison. We find that
truncating the scores leads to much smaller model classes. This suggests that it might be possible to
learn maximin policies that deviate from the status quo.

\begin{figure}[t!]
  \centering
        \begin{subfigure}[t]{0.45\textwidth}  
    \caption{FTA Score (truncated)}
    {\centering \includegraphics[width=\maxwidth]{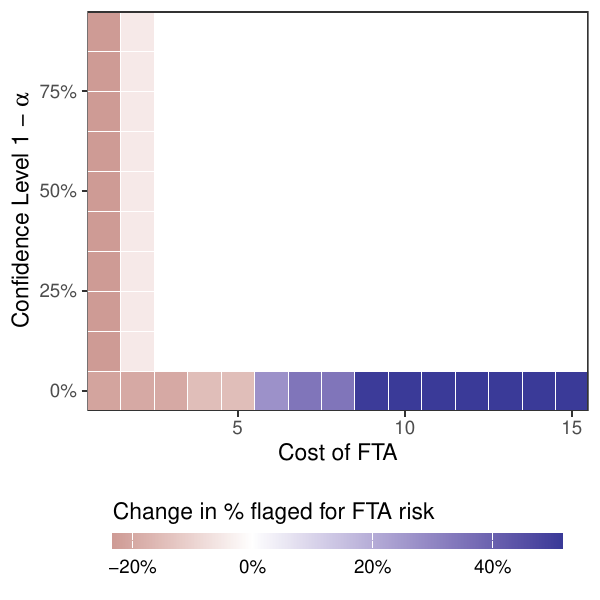} 
    }
    \label{fig:pct_diff_fta_lvl_truncated}
  \end{subfigure}
    \begin{subfigure}[t]{0.45\textwidth}  
   \caption{NCA Score (truncated)} 
  {\centering \includegraphics[width=\textwidth]{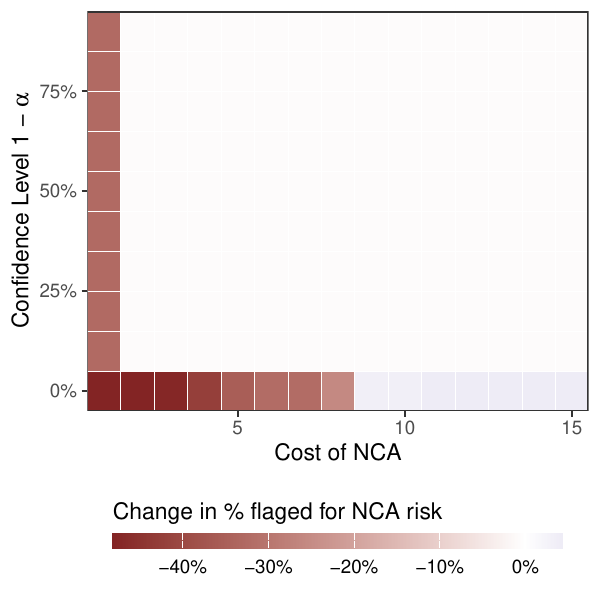} 
  }
    \label{fig:pct_diff_nca_lvl_truncated}
    \end{subfigure}\quad
    \caption{The percentage point difference in the proportion of
      arrestees flagged for (a) FTA risk and (b) NCA risk under the
      maximin policy and the original FTA and NCA scores truncated
      into low and high risk values as the cost of an FTA (left panel)
      and NCA (right panel) increases from 1 to 15 and the confidence
      level varies between 0\% and 100\%.}
  \label{fig:fta_nca_results_lvl_truncated}
\end{figure}

We learn such maximin policies using the binary utility function used
for the NVCA, and truncating the policy class to output either a low
or high risk.  Figure~\ref{fig:fta_nca_results_lvl_truncated} shows
how the resulting truncated scores differ from the original truncated
scores under the additive effect class as the cost of an FTA or NCA
and the confidence level vary.  We find the same pattern as in
Figure~\ref{fig:fta_nca_results_lvl}.  With any degree of statistical
confidence, it is not possible to safely change the underlying
scores. Since the sizes of the model classes are smaller with respect
to the truncated policy classes, the results suggest that there exist
substantial uncertainty as to the heterogeneous effects even for the
truncated FTA and NCA scores.

\subsection{Additional results for the overall DMF risk score and quaternary and ternary bail recommendations}
\label{sec:addl_dmf}

\begin{figure}[t!]
  \vspace{-.25in}
  \centering 
  
  \begin{subfigure}[t]{0.45\textwidth}  
    \caption{Bail recommendation}
    {\centering \includegraphics[width=\maxwidth]{analysis/figure/dmf_matrix_bail-1} 
    }
  \end{subfigure}
    \begin{subfigure}[t]{0.45\textwidth}  
   \caption{Release and monitoring conditions recommendation} 
  {\centering \includegraphics[width=\textwidth]{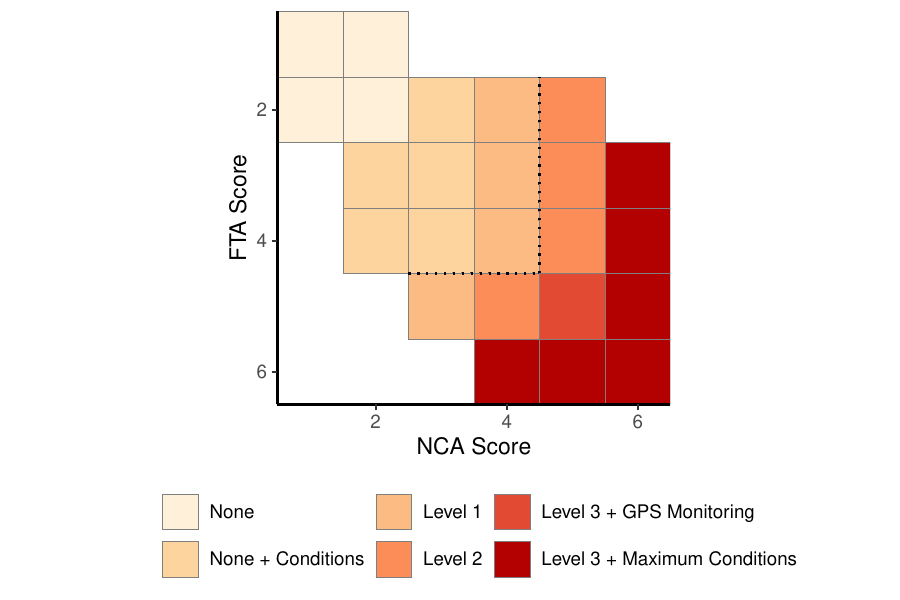} 
  }
    \end{subfigure}\quad
  
  \caption{Decision Making Framework (DMF) matrix recommendation for (a) the
    cash bail decision, and (b) additional release and monitoring conditions,
    for cases where the
    current charge is not a serious violent offense, the NVCA flag is
    not triggered, and the defendant was not extradited. If the FTA
    score and the NCA score are both less than 5, then the
    recommendation is to only require a signature bond. Otherwise, the
    recommendation is to require cash bail. The dashed line indicates
    this boundary.  Unshaded areas indicate impossible combinations of
    FTA and NCA scores. In (b) ``Levels'' 1,2 and 3 correspond to pre-defined
    levels of pretrial supervision, ``None + Conditions'' denotes minor
    conditions the signature bond if appropriate, ``Level 3 + Maximum Conditions''
    corresponds to the highest level of pretrial supervision along with
    additional measures such as biweekly face-to-face and phone contact with
    arrestee. }
\label{fig:dmf_matrix_full}
\end{figure}

\paragraph{Testing for interactions.}
In our main analysis for the binary cash bail recommendation,
 we use an additive model effective model where
$\tau_{\text{add}}(a,\bx) \ = \ \tau_{\text{fta}}(a, x_{\text{fta}}) +
\tau_{\text{nca}}(a, x_{\text{nca}})$.
We can assess the plausibility of this assumption following the same procedure as in
Section~\ref{sec:addl_nvca} above.
We regress the difference in IP-weighted outcomes
$\Gamma(1,\bX,Y) - \Gamma(0, \bX, Y)$ on all
observed interactions between the FTA and NCA scores separately for the
signature bond and cash bail groups.
We then again use a heteroskedastic robust Wald test to test whether there is
evidence for the coefficients for the interaction terms being non-zero, for each
of the signature bond and cash bail groups.
We find some weak evidence for interaction terms in the signature bond region
($p = 0.07$), but not in the cash bail region ($p = 0.13$).

\paragraph{Overall DMF risk score.}

Now we turn to the overall DMF 1--7 risk score that encodes
recommendations on both the level of cash bail and the level and type
of pre-trial supervision and monitoring conditions.  Recall from
Section~\ref{sec:dmf_matrix} that due to the structure of the DMF
matrix, it is not possible to identify the CATE for most risk levels
at most combinations of FTA and NCA scores.  Because we have $K=7$
possible actions, we again usethe linear utility specification
used for the FTA and NCA scores above, though other costs are also
possible.  For the DMF matrix, we again use the NVCA as the outcome.

Figure~\ref{fig:diff_recs_risk} shows the resulting maximin DMF risk
score recommendations for different costs of an NVCA and confidence
levels. We find that it is not possible to safely change the
DMF matrix for the full recommendation if the cost of an NVCA
is larger than 5, even without requiring any degree of statistical certainty.

\paragraph{Quaternary cash bail recommendation.}
We also consider the quaternary cash bail recommendation between a signature bond,
modest cash bail, moderate cash bail, and (full) cash bail. Here we have $K=4$ actions and use the
linear utility function.
Figure~\ref{fig:diff_recs_cash_full} shows the resulting maximin quaternary cash
bail recommendation. This is broadly similar to what we find for the
overall DMF risk score.

\begin{figure}[t!]
  \vspace{-.5in}
  \centering
  \includegraphics[width=0.7\maxwidth]{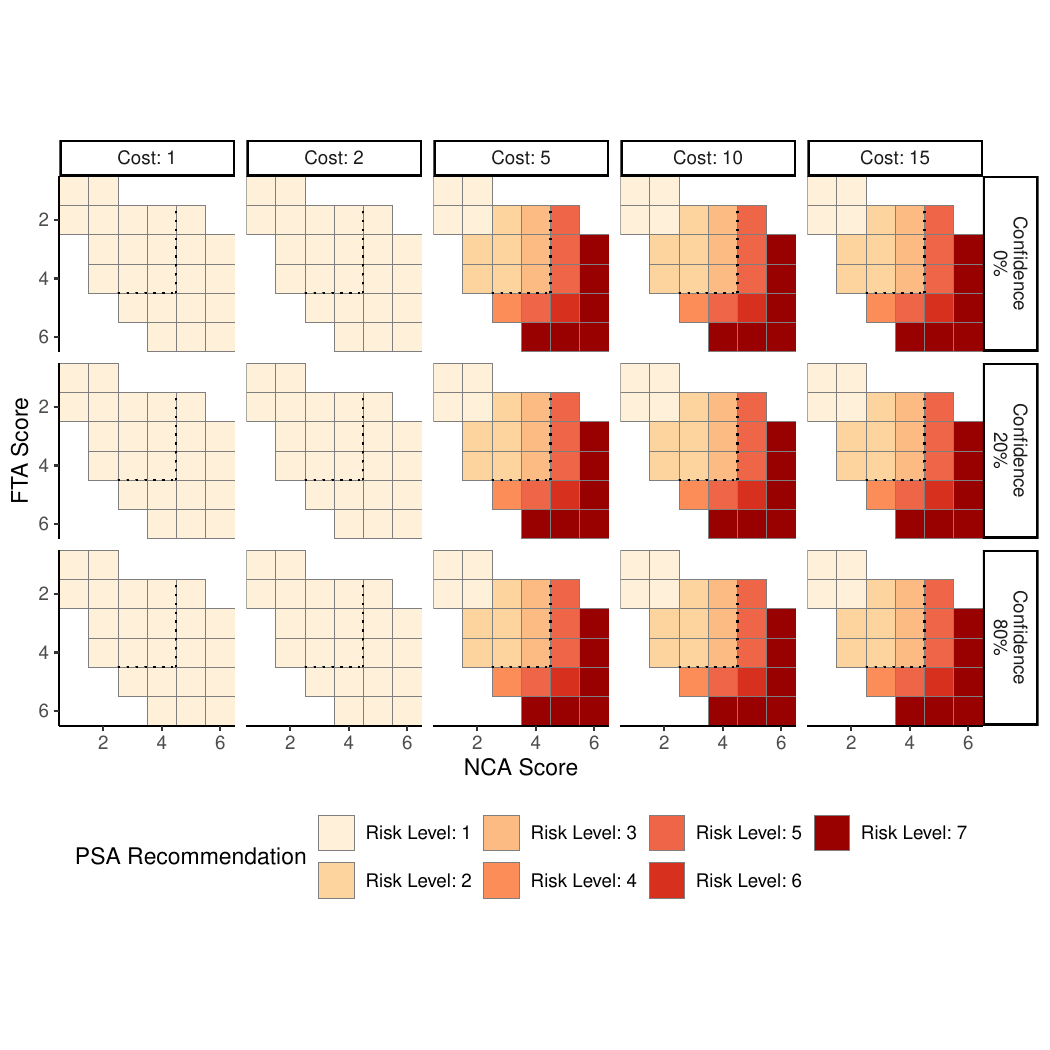}
  \vspace{-.5in}
\caption{Maximin monotone risk level cash bail and pre-trial
  supervision recommendations under an additive model for the
  treatment effects, as the cost of an NVCA and the confidence level
  vary. The dashed black line indicates the original decision boundary
  between a signature bond (above and to the left) and cash bail
  (below and to the right).} 
\label{fig:diff_recs_risk}
\end{figure} 

\begin{figure}[t!]
  \vspace{-.5in}
  \centering
  \includegraphics[width=0.7\maxwidth]{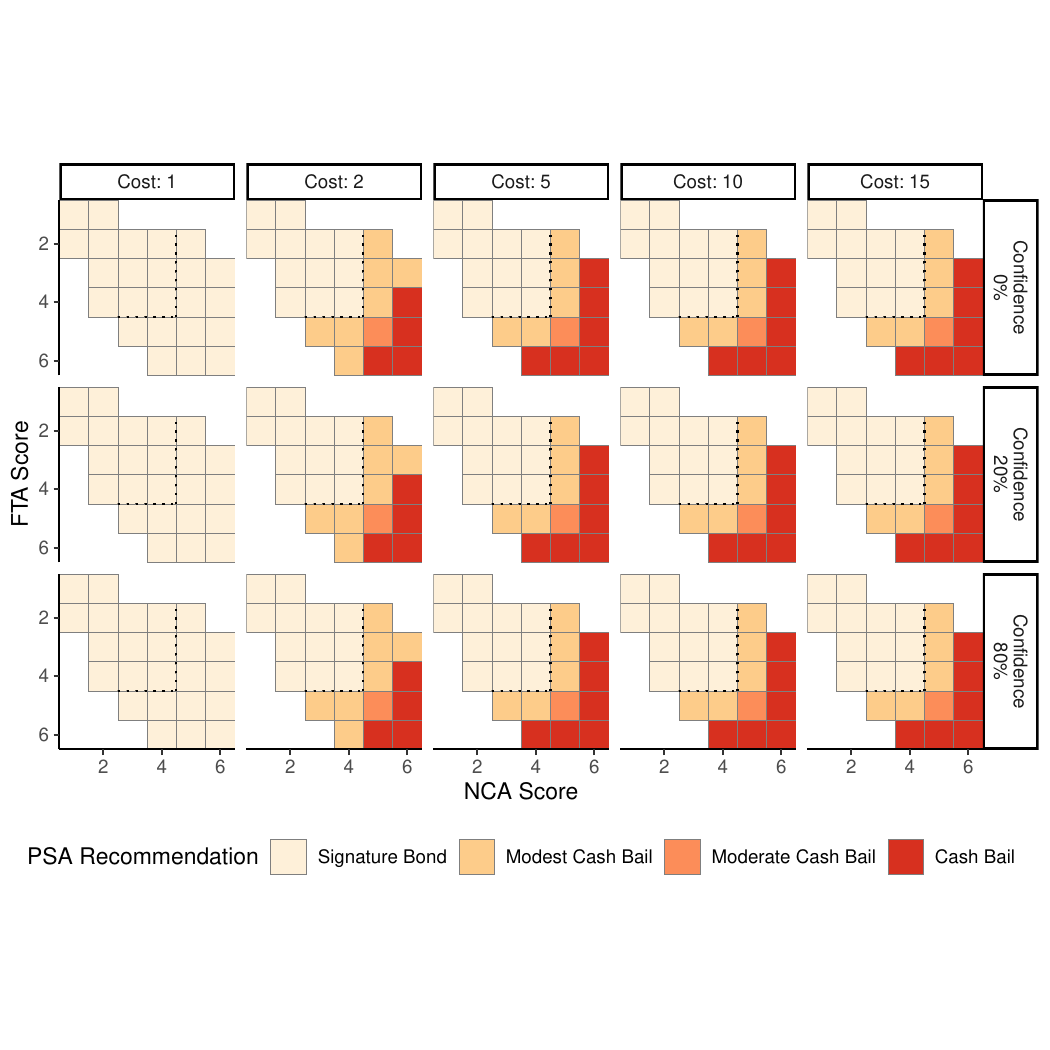}
  \vspace{-.5in}
\caption{Maximin monotone risk level ternary cash bail recommendations under
an additive model for the treatment effects, as the cost of an NVCA and the confidence level
  vary. The dashed black line indicates the original decision boundary
  between a signature bond (above and to the left) and cash bail
  (below and to the right).} 
\label{fig:diff_recs_cash_full}
\end{figure}

\paragraph{Ternary cash bail recommendation.}
We also consider the ternary cash bail recommendation between a signature bond,
moderate/modest cash bail, and full cash bail, collapsing the moderate and 
modest cash bail recommendations. Here we have $K=3$ actions and use the
linear utility function.
Figure~\ref{fig:diff_recs_cash_no_moderate} shows the resulting maximin ternary cash
bail recommendation. This is broadly similar to what we find for the
binary cash bail recommendation. When the confidence level is set to zero and
the cost of an NVCA is high enough, the maximin policy will extend the
region where moderate cash bail is assigned to include the intermediate region
between a signature bond and moderate cash bail. However, if any degree of
statistical confidence is required, the maximin policy reverts to the status
quo. Note that the maximin policy does not change the boundary between modest
cash bail and cash bail, only between a signature bond and modest cash bail.

\begin{figure}[t!]
  \vspace{-.5in}
  \centering
  \includegraphics[width=0.7\maxwidth]{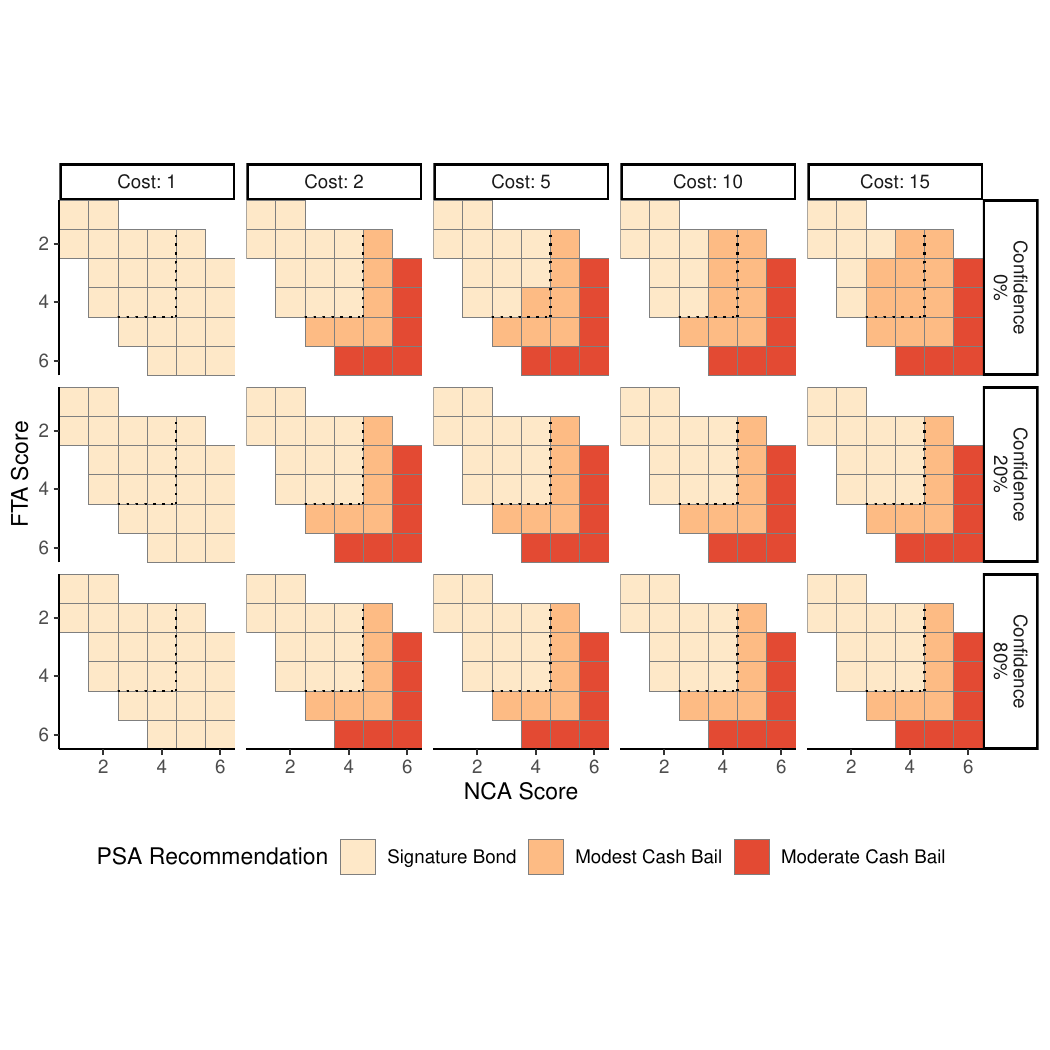}
  \vspace{-.5in}
\caption{Maximin monotone risk level ternary cash bail recommendations under
an additive model for the treatment effects, as the cost of an NVCA and the confidence level
  vary. The dashed black line indicates the original decision boundary
  between a signature bond (above and to the left) and cash bail
  (below and to the right).} 
\label{fig:diff_recs_cash_no_moderate}
\end{figure}

\begin{figure}[t!]
  \centering
  \includegraphics[width=.8\textwidth]{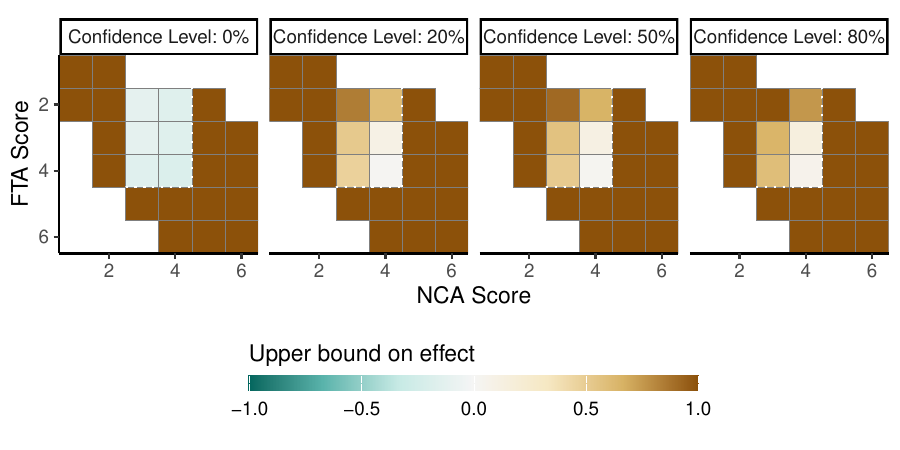}
  \vspace{-.2in}
\caption{Upper bound on the treatment effects under the additive model
  $\tau_{\text{add}}(a, x)$ for FTA and NCA scores. Values below and
  to the right of the dashed white line are areas where cash bail is
  recommended, and the bounds are on the effect of recommending a
  signature bond. Values above and to the left are areas where a
  signature bond is recommended, and the bounds are on the effect of
  recommending cash bail.}
\label{fig:dmf_bounds}
\end{figure}

\clearpage

\pdfbookmark[1]{References}{References}
\singlespacing
\bibliographystyle{chicago}
\bibliography{citations}

\end{document}